\newcommand{\cost}{\mathsf{c}}
\newcommand{\Siter}{\ell}
\newcommand{\prm}{\mathcal{M}^{1}_+}
\newcommand{\sink}{S_\varepsilon}
\newcommand{\mmd}{\textnormal{MMD}}
\newcommand{\oteps}{\textnormal{OT}_\varepsilon}
\newcommand{\dom}{\mathcal{X}}
\newcommand{\bary}{\mathsf{B}_\varepsilon}
\newcommand{\finmeas}{\mathcal{M}}
\newcommand{\eqals}[1]{\begin{align*}#1\end{align*}}
\newcommand{\bpr}{\begin{proof}}
\newcommand{\epr}{\end{proof}}
\newcommand{\be}{\begin{equation}}
\newcommand{\ee}{\end{equation}}
\newcommand{\bd}{\begin{definition}}
\newcommand{\ed}{\end{definition}}
\newcommand{\bi}{\begin{itemize}}
\newcommand{\ei}{\end{itemize}}
\newcommand{\ba}{\begin{ass}}
\newcommand{\ea}{\end{ass}}
\newcommand{\br}{\begin{remark}}
\newcommand{\er}{\end{remark}}
\newcommand{\bp}{\begin{proposition}}
\newcommand{\ep}{\end{proposition}}
\newcommand{\blm}{\begin{lemma}}
\newcommand{\elm}{\end{lemma}}
\newcommand{\bt}{\begin{theorem}}
\newcommand{\et}{\end{theorem}}
\newcommand{\bcor}{\begin{corollary}}
\newcommand{\ecor}{\end{corollary}}
\newcommand{\bex}{\begin{example}}
\newcommand{\eex}{\end{example}}
\crefname{assumption}{Assumption}{Assumptions}
\crefname{equation}{Eq.}{Eqs.}
\crefname{figure}{Fig.}{Fig.}
\crefname{table}{Table}{Tables}
\crefname{section}{Sec.}{Sec.}
\crefname{theorem}{Thm.}{Thm.}
\crefname{lemma}{Lemma}{Lemmas}
\crefname{corollary}{Cor.}{Cor.}
\crefname{example}{Example}{Examples}
\crefname{remark}{Remark}{Remarks}
\crefname{algorithm}{Alg.}{Algorightms}
\crefname{appendix}{Appendix}{Appendices}
\def\@endtheorem{\endtrivlist}
\declaretheorem[name=Theorem,refname=Theorem]{theorem}
\declaretheorem[name=Lemma,sibling=theorem]{lemma}
\declaretheorem[name=Fact,sibling=theorem]{fact}
\declaretheorem[name=Proposition,refname=Proposition,sibling=theorem]{proposition}
\declaretheorem[name=Remark,sibling=theorem]{remark}
\declaretheorem[name=Corollary,refname=Corollary,sibling=theorem]{corollary}
\declaretheorem[name=Definition,refname=Definition]{definition}
\declaretheorem[name=Example]{example}
\providecommand{\abs}[1]{\lvert#1\rvert}
\newcommand{\func}{\ensuremath{G}}
\newcommand{\R}{\ensuremath \mathbb{R}}
\newcommand{\xx}{\ensuremath w}
\newcommand{\zz}{\ensuremath z}
\newcommand{\uu}{\ensuremath v}
\newcommand{\N}{\ensuremath \mathbb{N}}
\DeclareMathOperator*{\argmax}{argmax}
\DeclareMathOperator*{\argmin}{argmin}
\let\mathsf\relax    
\DeclareRobustCommand{\mathsf}[1]{\text{\normalfont\sffamily#1}}
\newcommand{\msf}[1]{\mathsf{#1}}
\newcommand{\mbf}[1]{\mathbf{#1}}
\newcommand{\X}{{\mathcal X}}
\newcommand{\hh}{{\mathcal{H}}}
\newcommand{\BB}{{\mathcal{W}^*}}
\newcommand{\VV}{{\mathcal{W}}}
\newcommand{\CC}{{\mathcal{D}}}
\newcommand{\cont}{{\mathcal C}}
\newcommand{\nneg}{{\cont_{+}}}
\newcommand{\posi}{{\cont_{++}}}
\newcommand{\meas}{{\mathcal{M}}}
\newcommand{\prob}{\meas_1^+}
\newcommand{\tmap}{{\msf A}}
\newcommand{\lmap}{{\msf L}}
\newcommand{\inv}{{\msf R}}
\newcommand{\dist}{{\msf{c}}}
\newcommand{\diam}{{\msf{D}}}
\newcommand{\Diam}{\ensuremath{\mathrm{diam}}}
\newcommand{\ran}{\ensuremath{\text{\rm Ran}}}
\newcommand{\supp}{\ensuremath{\text{\rm supp}}}
\newcommand{\kerfun}{{\msf k}}
\newcommand{\kersob}{{\msf h}}
\newcommand{\diameps}{{\diam/\varepsilon}}
\newcommand{\sinkconst}{{\msf{r}}}
\newcommand{\timesconst}{{\msf{m}_1}}
\newcommand{\expconst}{{\msf{m}_2}}
\newcommand{\extensionconst}{{\msf{m}_3}}
\renewcommand{\paragraph}[1]{~\newline\noindent{\bf #1.}}
\newcommand{\emparagraph}[1]{~\newline\noindent{\em #1.}}
\crefname{assumption}{Assumption}{Assumptions}
\crefname{equation}{}{}
\Crefname{equation}{Eq.}{Eqs.}
\crefname{figure}{Fig.}{Fig.}
\crefname{table}{Table}{Tables}
\crefname{section}{Sec.}{Sec.}
\crefname{theorem}{Thm.}{Thm.}
\crefname{proposition}{Prop.}{Prop.}
\crefname{fact}{Fact}{Facts}
\crefname{lemma}{Lemma}{Lemmas}
\crefname{corollary}{Cor.}{Cor.}
\crefname{example}{Example}{Examples}
\crefname{remark}{Remark}{Remarks}
\crefname{algorithm}{Alg.}{Algorightms}
\crefname{appendix}{Appendix}{Appendices}
\providecommand{\scal}[2]{\left\langle{#1},{#2}\right\rangle}
\providecommand{\nor}[1]{\left\|{#1}\right\|}
\providecommand{\supnor}[1]{\left\|{#1}\right\|_\infty}
\providecommand{\norm}[1]{\lVert#1\rVert}
\providecommand{\scal}[2]{\left\langle{#1},{#2}\right\rangle}
\newcommand{\OT}{\textnormal{OT}}
\newcommand{\kl}{\textnormal{KL}}
\newcommand{\fw}{\textnormal{FW}}
\newcommand{\eps}{\varepsilon}
\newcommand{\rmap}{{\msf{T}}}
\newcommand{\anchor}{{x_{o}}}
\newcommand{\feas}{{\mathcal{F}}}
\newcommand{\precision}{\Delta}
\newcommand{\innerfunc}{\varphi}
\renewcommand{\func}{{\msf{G}}}
\renewcommand{\sink}{{\msf{S}_\eps}}
\title{\LARGE\bf Sinkhorn Barycenters with Free Support via\\ Frank-Wolfe Algorithm\vspace{1em}}
\author{ Giulia Luise$^{1}$ \\ {\footnotesize\em g.luise.16@ucl.ac.uk} \and  Saverio Salzo$^{2}$ \\ {\footnotesize\em saverio.salzo@iit.it} \\ \and  Massimiliano Pontil$^{1,2}$ \\ {\footnotesize\em  m.pontil@cs.ucl.ac.uk ~~} \and  Carlo Ciliberto$^{1,3}$ \\ {\footnotesize\em c.ciliberto@imperial.ac.uk} \\ $ $ \\  }
\begin{document}

\maketitle

\begin{abstract}
\noindent We present a novel algorithm to estimate the barycenter of arbitrary probability distributions with respect to the Sinkhorn divergence. Based on a Frank-Wolfe optimization strategy, our approach proceeds by populating the support of the barycenter incrementally, without requiring any pre-allocation. We consider discrete as well as continuous distributions, proving convergence rates of the proposed algorithm in both settings. Key elements of our analysis are a new result showing that the Sinkhorn divergence on compact domains has Lipschitz continuous gradient with respect to the Total Variation and a characterization of the sample complexity of Sinkhorn potentials. Experiments validate the effectiveness of our method in practice.
\end{abstract}


\section{Introduction}
\footnotetext[1]{Computer Science Department, University College London, WC1E 6BT London, United Kingdom}\footnotetext[2]{Computational Statistics and Machine Learning - Istituto Italiano di Tecnologia, 16100 Genova, Italy}\footnotetext[3]{Electrial and Electronics Engineering Department, Imperial College London, SW7 2BT, United Kingdom.}
Aggregating and summarizing collections of probability measures is a key task in several machine learning scenarios. Depending on the metric adopted, the properties of the resulting average (or {\em barycenter}) of a family of probability measures vary significantly. By design, optimal transport metrics are better suited at capturing the geometry of the distribution than Euclidean distance or other $f$-divergence \cite{cuturi14}. In particular,  Wasserstein barycenters have been successfully used in settings such as texture mixing \cite{rabin2011wasserstein}, Bayesian inference \cite{srivastava2018scalable}, imaging \cite{gramfort2015fast}, or model ensemble \cite{dognin2019wasserstein}.

The notion of barycenter in Wasserstein space was first introduced by \cite{AguehC11} and then investigated from the computational perspective for the original Wasserstein distance \cite{staib2017parallel,stochwassbary} as well as its entropic regularizations (e.g. Sinkhorn) \cite{cuturi14, BenamouCCNP15, decentralized2018}. Two main challenges in this regard are: $i$) how to efficiently identify the support of the candidate barycenter and $ii$) how to deal with continuous (or infinitely supported) probability measures. The first problem is typically addressed by either fixing the support of the barycenter a-priori \cite{staib2017parallel,decentralized2018} or by adopting an alternating minimization procedure to iteratively optimize the support point locations and their weights \cite{cuturi14,stochwassbary}. While fixed-support methods enjoy better theoretical guarantees, free-support algorithms are more memory efficient and practicable in high dimensional settings. The problem of dealing with continuous distributions has been mainly approached by adopting stochastic optimization methods to minimize the barycenter functional \cite{stochwassbary,staib2017parallel,decentralized2018}

In this work we propose a novel method to compute the barycenter of a set of probability distributions with respect to the Sinkhorn divergence \cite{genevay2018learning} that does not require to fix the support beforehand. We address both the cases of discrete and continuous probability measures. In contrast to previous free-support methods, our algorithm does not perform an alternate minimization between support and weights. Instead, we adopt a Frank-Wolfe (FW) procedure to populate the support by incrementally adding new points and updating their weights at each iteration, similarly to kernel herding strategies  \cite{bach2012equivalence} and conditional gradient for sparse inverse problem \cite{bredies2013inverse,boyd2017alternating}. Upon completion of this paper, we found that an idea with similar flavor, concerning the application a Frank-Wolfe scheme in conjunction with Sinkhorn functionals has been very recently considered in distributional regression settings for the case of Sinkhorn negentropies [35]. However, the analysis in this paper focuses on the theoretical properties of the proposed algorithm, specifically for the case of an {\em inexact} Frank-Wolfe procedure, which becomes critical in the case of continuous measures. In particular, we prove the convergence and rates of the proposed optimization method for both finitely and infinitely supported distribution settings. A central result in our analysis is the characterization of regularity properties of Sinkhorn potentials (i.e. the dual solutions of the Sinkhorn divergence problem), which extends recent work in \cite{feydy2018interpolating, genevay2018sample} and which we consider of independent interest. We empirically evaluate the performance of the proposed algorithm.

\paragraph{Contributions} The analysis of the proposed algorithm hinges on the following contributions: $i$) we show that the gradient of the Sinkhorn divergence is Lipschitz continuous on the space of probability measures with respect to the Total Variation. This grants us convergence of the barycenter algorithm in finite settings. $ii$) We characterize the sample complexity of Sinkhorn potentials of two empirical distributions sampled from arbitrary probability measures. This latter result allows us to $iii$) provide a concrete optimization scheme to approximately solve the barycenter problem for arbitrary probability measures with convergence guarantees. $iv$) A byproduct of our analysis is the generalization of the \fw{} algorithm to settings where the objective functional is defined only on a set with empty interior, which is the case for Sinkhorn divergence barycenter problem.



The rest of the paper is organized as follows: \cref{sec:background} reviews standard notions of optimal transport theory. \cref{sec:algorithm-theory} introduces the barycenter functional, and proves the Lipschitz continuity of its gradient.
\cref{sec:algorithm-practice} describes the implementation of our algorithm and \cref{sec:algorithm-convergence} studies its convergence rates. Finally, \cref{sec:experiments} evaluates the proposed methods empirically and \cref{sec:conclusion} provides concluding remarks.

\section{Background}\label{sec:background}

The aim of this section is to recall definitions and properties of Optimal Transport theory with entropic regularization. Throughout the work, we consider a compact set $\dom\subset\R^d$ and a symmetric cost function $\cost\colon\dom\times\dom\to\R$. We set  $\diam := \sup_{x,y\in\X}~\dist(x,y)$ and denote by $\prob(\dom)$ the space of probability measures on $\dom$ (positive Radon measures with mass $1$). For any $\alpha,\beta \in \prob(\dom)$, the Optimal Transport problem with entropic regularization is defined as follow \cite{peyre2017computational,cuturi2013sinkhorn,genevay2016}
\begin{equation}\label{eq:primal_pb}
\oteps(\alpha,\beta) = \min_{\pi\in\Pi(\alpha,\beta)}~\int_{\dom^2}\cost(x,y)\,d\pi(x,y) + \eps\kl(\pi|\alpha\otimes\beta),\qquad \eps\geq0
\end{equation}
where $\kl(\pi|\alpha\otimes\beta)$ is the \emph{Kullback-Leibler divergence} between the candidate transport plan $\pi$ and the product distribution  $\alpha \otimes \beta$, and $\Pi(\alpha,\beta)=\{\pi\in\mathcal{M}_{+}^1(\dom^2)\colon \mathsf{P}_{1\#}\pi=\alpha,\,\,\mathsf{P}_{2\#}\pi=\beta\}$, with $\mathsf{P}_{i}\colon\dom\times \dom\rightarrow\dom$ the projector onto the $i$-th component and $\#$ the push-forward operator. The case $\eps = 0$ corresponds to the classic Optimal Transport problem introduced by Kantorovich \cite{kantorovich1942transfer}. In particular, if $\cost = \nor{\cdot}^p$ for $p\in [1,\infty)$, then $\OT_0$ is the well-known $p$-Wasserstein distance \cite{villani2008optimal}.
Let $\eps>0$. Then, the dual problem
of \cref{eq:primal_pb},
in the sense of  Fenchel-Rockafellar,
is \cite{chizat2018scaling,feydy2018interpolating} 
\begin{equation}\label{eq:dual_pb}
\oteps(\alpha,\beta) = \max_{u,v\in \cont(\dom)} \int u(x)\,d\alpha(x) + \int v(y)\,d\beta(y) -\eps \int e^{\frac{u(x) + v(y) - \cost(x,y)}{\eps}}\,d\alpha(x)d\beta(y), 
\end{equation}
where $\cont(\dom)$ denotes the space of real-valued continuous functions on $\dom$, endowed with $\norm{\cdot}_{\infty}$. Let $\mu\in\prob(\dom)$. We denote by $\rmap_\mu\colon\cont(\X)\to\cont(\X)$ the map such that, for any $w\in\cont(\X)$,
\begin{equation}\label{eq:rmap}
  \rmap_\mu(w)\colon x\mapsto -\eps\log \int e^{\frac{w(y) - \cost(x,y)}{\eps}}\,d\mu(y).
\end{equation}
The first order optimality conditions for \cref{eq:dual_pb} are (see \cite{feydy2018interpolating} or \cref{subsec:sinkiter})
\begin{equation}\label{eq:fixed-point-sinkhorn}
  u = \rmap_\beta(v) \quad \alpha \text{- a.e.} \qquad\text{and}\qquad v = \rmap_\alpha(u) \quad \beta \text{- a.e}.
\end{equation}
Pairs $(u,v)$ satisfying \cref{eq:fixed-point-sinkhorn} exist \cite{knopp1968note} and are referred to as {\em Sinkhorn potentials}. They are unique $(\alpha,\beta)$ - a.e. up to additive constant, i.e. $(u+t,v-t)$ is also a solution for any $t\in\R$. In line with \cite{genevay2018sample,feydy2018interpolating} it will be useful in the following to assume $(u,v)$ to be the Sinkhorn potentials such that: $i)$ $u(\anchor)=0$ for an arbitrary anchor point $\anchor\in\dom$  and $ii)$ \cref{eq:fixed-point-sinkhorn} is satisfied pointwise on the entire domain $\dom$. Then, $u$ is a fixed point of the map $\rmap_{\beta\alpha} = \rmap_\beta\circ\rmap_\alpha$ (analogously for $v$). This suggests a fixed point iteration approach to minimize \cref{eq:dual_pb}, yielding the well-known Sinkhorn-Knopp algorithm which has been shown to converge linearly in $\cont(\dom)$ \cite{sinkhorn1967,knopp1968note}
. We recall a key result characterizing the differentiability of $\oteps$ in terms of the Sinkhorn potentials that will be useful in the following.  

\begin{proposition}[Prop $2$ in \cite{feydy2018interpolating}]\label{prop:derivaties} 
Let $\nabla\oteps\colon\prob(\dom)^2\to\cont(\dom)^2$ be such that, 
$\forall\alpha,\beta \in \prm(\X)$
\begin{equation}
\nabla\oteps(\alpha,\beta) =(u,v), \qquad \text{with} \qquad u = \rmap_\beta(v),~~ v = \rmap_\alpha(u)~~\text{on } \dom, \quad u(\anchor) = 0.
\end{equation}
Then, $\forall\alpha,\alpha^\prime,\beta,\beta^\prime\in\prob(\dom)$, the directional derivative of $\oteps$ along $(\mu,\nu) = (\alpha^\prime-\alpha,\beta^\prime-\beta)$ is
\begin{equation}\label{eq:directional-derivative-oteps-intro}
\oteps^\prime(\alpha,\beta; \mu,\nu) = \scal{\nabla \oteps(\alpha,\beta)}{(\mu,\nu)} = \scal{u}{\mu} + \scal{v}{\nu},
\end{equation}
where $\scal{w}{\rho} = \int w(x)\,d\rho(x)$ denotes the canonical pairing between the spaces $\cont(\dom)$ and $\meas(\dom)$.
\end{proposition}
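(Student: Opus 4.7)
The plan is to apply an envelope-type (Danskin-style) argument to the dual formulation \eqref{eq:dual_pb} and then exploit the fixed-point relations \eqref{eq:fixed-point-sinkhorn} to collapse the resulting expression. Set $(\alpha_t,\beta_t) = ((1-t)\alpha+t\alpha',(1-t)\beta+t\beta')$ for $t\in[0,1]$, which is a path in $\prob(\dom)^2$ (with tangent $(\mu,\nu)$ at $t=0$). Abbreviate the dual objective by
\[
F(u,v,\alpha,\beta) \;=\; \scal{u}{\alpha}+\scal{v}{\beta}-\eps\int e^{(u(x)+v(y)-\cost(x,y))/\eps}\,d\alpha(x)\,d\beta(y),
\]
so that $\oteps(\alpha,\beta)=\sup_{u,v\in\cont(\dom)} F(u,v,\alpha,\beta)$, the supremum being attained at the normalized Sinkhorn potentials.

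Let $(u_t,v_t)$ denote the Sinkhorn potentials of $(\alpha_t,\beta_t)$ normalized by $u_t(\anchor)=0$, so $(u_0,v_0)=(u,v)$. The standard sandwich inequalities
\[
F(u,v,\alpha_t,\beta_t)-F(u,v,\alpha,\beta) \;\le\; \oteps(\alpha_t,\beta_t)-\oteps(\alpha,\beta) \;\le\; F(u_t,v_t,\alpha_t,\beta_t)-F(u_t,v_t,\alpha,\beta)
\]
bound the increment from both sides. The lower side is exactly linear in $t$ because $F$ is linear in $(\alpha,\beta)$ once the potentials are frozen. For the upper side to share the same limit as $t\to 0$, one needs continuity of $(\alpha,\beta)\mapsto(u,v)$ into $\cont(\dom)^2$; this follows from the linear convergence of the Sinkhorn--Knopp iterations on the compact domain $\dom$, combined with the pointwise normalization $u(\anchor)=0$, which pins down the additive-constant degree of freedom and makes $\nabla\oteps$ unambiguously $\cont(\dom)^2$-valued.

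With the envelope step in place, the directional derivative reduces to the partial derivative of $F$ in $(\alpha,\beta)$ along $(\mu,\nu)$ at frozen potentials $(u,v)$, namely
\[
\oteps^\prime(\alpha,\beta;\mu,\nu) \;=\; \scal{u}{\mu}+\scal{v}{\nu}-\int K(x,y)\,d\mu(x)\,d\beta(y)-\int K(x,y)\,d\alpha(x)\,d\nu(y),
\]
where $K(x,y)=\eps\,e^{(u(x)+v(y)-\cost(x,y))/\eps}$. The pointwise fixed-point equation $u=\rmap_\beta(v)$ on $\dom$ is equivalent to $\int e^{(v(y)-\cost(x,y))/\eps}\,d\beta(y)=e^{-u(x)/\eps}$ for every $x\in\dom$, hence $\int K(x,y)\,d\beta(y)=\eps$ is independent of $x$. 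Since $\mu(\dom)=0$, the first cross-integral vanishes by Fubini; symmetrically, $v=\rmap_\alpha(u)$ kills the second via $\nu(\dom)=0$. What survives is precisely $\scal{u}{\mu}+\scal{v}{\nu}$, as stated.

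The main obstacle is not the algebraic cancellation, which is immediate from \eqref{eq:fixed-point-sinkhorn}, but rather the rigorous justification of the envelope step in infinite dimensions: one must interchange the supremum and the $t$-derivative, and for this the continuity of $t\mapsto(u_t,v_t)$ in $\cont(\dom)$ together with a uniform bound on the exponential integrand on compact $\dom$ is required. The pointwise (not just a.e.) form of \eqref{eq:fixed-point-sinkhorn} and the anchoring $u(\anchor)=0$ are exactly what make this continuity, and hence the whole argument, go through without ambiguity.
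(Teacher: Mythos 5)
Your proposal is correct and is essentially the paper's own argument: the appendix proves this statement through the Danskin-type result in \cref{p:diff_of_max} (itself established by exactly your sandwich inequalities together with a continuous selection of maximizers, which the paper supplies via the TV-Lipschitz continuity of the Sinkhorn potentials, \cref{prop:lipschitz-continuity-total-variation2} and \cref{cor:Lipschitz2}), and then collapses the cross terms using the optimality conditions and $\mu(\dom)=\nu(\dom)=0$, just as you do. The only slip is cosmetic: the dual objective is bilinear, not linear, in $(\alpha,\beta)$ because of the coupling term against $\alpha\otimes\beta$, so the frozen-potential increment is quadratic in $t$ rather than ``exactly linear,'' but its derivative at $t=0$ is precisely the expression you subsequently use, so the argument is unaffected.
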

Note that $\nabla\oteps$ is not a gradient in the standard sense. In particular note that the directional derivative in \cref{eq:directional-derivative-oteps-intro} is not defined for any pair of signed measures, but only along {\em feasible directions} $(\alpha^\prime-\alpha,\beta^\prime-\beta)$.

\paragraph{Sinkhorn Divergence} The fast convergence of Sinkhorn-Knopp algorithm makes $\oteps$ (with $\eps>0$) preferable to $\OT_0$ from a computational perspective \cite{cuturi2013sinkhorn}. However, when $\eps>0$ the entropic regularization introduces a bias in the optimal transport problem, since in general $\oteps(\mu,\mu)\neq 0$. To compensate for this bias, \cite{genevay2018learning} introduced the Sinkhorn {\em divergence} 
\begin{equation}\label{eq:sink_divergence}
\sink\colon\prob(\dom)\times\prob(\dom)\to\R, \qquad 
(\alpha,\beta) \mapsto \oteps(\alpha,\beta) - \frac{1}{2}\oteps(\alpha,\alpha) -\frac{1}{2}\oteps(\beta,\beta),
\end{equation}
which was shown in \cite{feydy2018interpolating} to be  nonnegative, bi-convex and to metrize the convergence in law under mild assumptions. We characterize the gradient of $\sink(\cdot,\beta)$ for a fixed $\beta\in\prob(\X)$, which will be key to derive our optimization algorithm for computing Sinkhorn barycenters.

\begin{remark}\label{rem:gradient-sinkhorn-divergence}
Let $\nabla_1\oteps:\prob(\dom)^2\to\cont(\dom)$ be the first component of $\nabla\oteps$ (informally, the $u$ of the Sinkhorn potentials). As in \cref{prop:derivaties}, for any $\beta\in\prob(\dom)$ the gradient of $S_\eps(\cdot,\beta)$ is
%
\begin{equation}\label{eq:grad_sink}
\nabla [S_\eps(\cdot, \beta)]\colon\prob(\X)\to\cont(\X) \qquad \alpha \mapsto \nabla_1\oteps(\alpha,\beta) - \frac{1}{2}\nabla_1\oteps(\alpha,\alpha) = u-p,
\end{equation}
with $u=\rmap_{\beta\alpha}(u)$ and $p = \rmap_{\alpha}(p)$ the Sinkhorn potentials of $\oteps(\alpha,\beta)$ and $\oteps(\alpha,\alpha)$ respectively.
\end{remark}


\section{Sinkhorn barycenters with Frank-Wolfe}\label{sec:algorithm-theory}

Given $\beta_1,\dots\beta_m\in\prob(\dom)$ and $\omega_1,\dots,\omega_m\geq0$ a set of weights such that $\sum_{j=1}^m \omega_j = 1$, the main goal of this paper is to solve the following {\em Sinkhorn barycenter} problem
\begin{equation}\label{eq:sinkhorn-barycenter}
\min _{\alpha \in \prob(\dom)} \bary(\alpha), \qquad\textnormal{with}\qquad \bary(\alpha) = \sum_{j=1}^m ~\omega_j~\sink(\alpha, \beta_j).
\end{equation}
Although the objective functional $\bary$ is convex, its  domain $\prm(\dom)$ has \textit{empty} interior in the space of finite signed measure $\meas(\X)$. Hence standard notions of Fr\'echet or G\^ateaux differentiability do not apply. 
This, in principle causes some difficulties in devising optimization methods.
To circumvent this issue, in this work we adopt  
the Frank-Wolfe (\fw{}) algorithm. Indeed, 
one key advantage of this method is that it is formulated in terms of
directional derivatives along feasible directions 
(i.e., directions that locally remain inside the constraint set). Building upon \cite{dem1967,dem1968,dunn1978conditional},
which study the algorithm in Banach spaces, we show that
the ``weak'' notion of directional differentiability of $\sink$ (and hence of $\bary$) in  \cref{rem:gradient-sinkhorn-divergence} is sufficient
to carry out the convergence analysis.
While full details are provided in \cref{sec:frank-wolfe}, below we give an overview of the main result.


\paragraph{Frank-Wolfe in dual Banach spaces} Let $\VV$ be a real Banach space with topological dual $\BB$ and let $\CC\subset\BB$ be a nonempty,  convex, closed and bounded set. For any $\xx\in\BB$ denote by $\feas_\CC(\xx)=\R_+(\CC-\xx)$ the set of feasible direction of $\CC$ at $\xx$ (namely $s=t(\xx^\prime - \xx)$ with $\xx^\prime\in\CC$ and $t>0$). Let $\func\colon\CC\to\R$ be a convex function and assume that there exists a map $\nabla\func\colon\CC\to\VV$ (not necessarily unique) such that $\scal{\nabla \func(\xx)}{s} = \func^\prime(\xx;s)$ for every $s\in\feas_\CC(\xx)$. 
In \cref{alg:abstract-FW-intro} we present a method to minimize $\func$. The algorithm is structurally equivalent to the standard \fw{} \cite{dunn1978conditional,jaggi2013revisiting}
and accounts for possible inaccuracies in solving the minimization
in step $(i)$. This will be key in \cref{sec:algorithm-convergence}
when studying the barycenter problem for $\beta_j$ with infinite
support. The following result (see proof in \cref{sec:frank-wolfe}) shows that under the additional
assumption that $\nabla\func$ is Lipschitz-continuous and with sufficiently fast decay of the errors, the above
procedure  converges in value to the minimum of $\func$
with rate $O(1/k)$. Here $\Diam(\CC)$ denotes the diameter of $\CC$
with respect to the dual norm. 
\begin{theorem}\label{thm:fw-informal}
Under the assumptions above, suppose in addition that $\nabla \func$ is $L$-Lipschitz continuous with $L>0$. Let $(\xx_k)_{k \in \N}$ be obtained according to \cref{alg:abstract-FW-intro}. Then, for every integer $k\geq1$,
\begin{equation}
\label{eq:20190418a}
\func(\xx_k) - \min \func \leq \frac{2}{k+2} L\,\Diam(\CC)^2 + \precision_k.
\end{equation}
\end{theorem}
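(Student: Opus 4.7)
The plan is to mirror the classical Frank--Wolfe analysis with Lipschitz gradient, but using only the weak directional derivative supplied by the hypothesis and tracking the inaccuracies introduced in step $(i)$ of \cref{alg:abstract-FW-intro}, whose cumulative effect is captured by $\precision_k$.

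The crucial step is to establish a descent inequality valid on all of $\CC$: for every $\xx,\yy\in\CC$,
\begin{equation*}
\func(\yy)-\func(\xx) \;\leq\; \scal{\nabla\func(\xx)}{\yy-\xx} + \tfrac{L}{2}\nor{\yy-\xx}^2.
\end{equation*}
Since $\CC$ is convex, the segment $\xx+t(\yy-\xx)$ lies in $\CC$ for every $t\in[0,1]$, so $\yy-\xx$ is a feasible direction at each interior point, and the hypothesis gives $\varphi'(t) = \scal{\nabla\func(\xx+t(\yy-\xx))}{\yy-\xx}$ for $\varphi(t):=\func(\xx+t(\yy-\xx))$. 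Lipschitz continuity of $\nabla\func$ makes $\varphi'$ continuous on $[0,1]$; then writing $\varphi(1)-\varphi(0) = \scal{\nabla\func(\xx)}{\yy-\xx} + \int_0^1 \scal{\nabla\func(\xx+t(\yy-\xx))-\nabla\func(\xx)}{\yy-\xx}\,dt$ and bounding the integrand by $Lt\nor{\yy-\xx}^2$ yields the inequality.

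With the descent inequality in hand, the remainder is the standard inexact Frank--Wolfe induction. At iteration $k$, the inexact oracle in step $(i)$ produces $\zz_k\in\CC$ satisfying $\scal{\nabla\func(\xx_k)}{\zz_k}\leq\min_{\zz\in\CC}\scal{\nabla\func(\xx_k)}{\zz}+\delta_k$ for a per-step error $\delta_k$; combining with convexity of $\func$ and letting $\xx^*$ be a minimizer of $\func$ on $\CC$ yields
\begin{equation*}
\scal{\nabla\func(\xx_k)}{\zz_k-\xx_k}\leq \scal{\nabla\func(\xx_k)}{\xx^*-\xx_k}+\delta_k \leq \func(\xx^*)-\func(\xx_k)+\delta_k.
\end{equation*}
Applying the descent inequality at $\xx_{k+1}=\xx_k+\gamma_k(\zz_k-\xx_k)$ with step size $\gamma_k=2/(k+2)$, using $\nor{\zz_k-\xx_k}\leq\Diam(\CC)$, and writing $h_k := \func(\xx_k)-\min\func$, I obtain the one-step recurrence $h_{k+1}\leq(1-\gamma_k)h_k+\gamma_k\delta_k+\tfrac{L\gamma_k^2}{2}\Diam(\CC)^2$.

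A routine induction on $k$ (multiplying by $(k+1)(k+2)/2$ so as to telescope the first two terms) converts this recurrence into \cref{eq:20190418a}, with the appropriately weighted partial sum of the per-step errors $\{\delta_i\}_{i\leq k}$ absorbed into $\precision_k$. I expect the descent inequality to be the only real obstacle: it is the only step that has to bridge the gap between the abstract directional-derivative hypothesis and a genuine Taylor-type bound, and it essentially requires both convexity of $\CC$ (so the chord stays in the domain of $\func$ and $\yy-\xx$ is feasible along it) and Lipschitz continuity of $\nabla\func$ (so that $\varphi'$ is integrable and the quadratic remainder is $L/2$). Once it is secured, the rest of the argument is insensitive to the specific dual-Banach structure of the ambient space.
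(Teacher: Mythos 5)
Your proposal is correct and follows essentially the same route as the paper: your descent inequality, obtained by differentiating $t\mapsto\func(\xx+t(\yy-\xx))$ along the chord via the directional-derivative representation and integrating with the Lipschitz bound, is exactly the paper's \cref{p:inexactgrad}\ref{p:inexactgrad_ii} (which bounds the curvature constant $C_\func$ by $L\,\Diam(\CC)^2$), and the oracle-plus-convexity step and the resulting one-step recurrence coincide with those in \cref{thm:FWA}. The only difference is bookkeeping — you resolve the recurrence by telescoping with weights $(k+1)(k+2)/2$ rather than the paper's Jaggi-style induction — and note that absorbing the weighted partial sum of per-step errors into $\precision_k$ is precisely where the assumed monotonicity of $\precision_k(k+2)$ must be invoked, just as the paper uses $\Delta_k/\gamma_k$ nondecreasing in its induction.
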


\begin{algorithm}[t]
\caption{{\sc Frank-Wolfe in Dual Banach Spaces}}
\label{alg:abstract-FW-intro}
\begin{algorithmic}
\vspace{0.25em}
  \STATE {\bfseries Input:} initial $\xx_0\in\CC$, precision 
  $(\precision_k)_{k \in \N} \in \R_{++}^\N$, such that $\precision_k(k+2)$ is nondecreasing.
  \vspace{0.45em}
  \STATE {\bfseries For} $k=0,1,\dots$
  \STATE \qquad Take $\zz_{k+1}$ such that ${\func^\prime(\xx_k, \zz_{k+1} - \xx_k) \leq \min_{\zz \in \CC} \func^\prime(\xx_k, \zz - \xx_k) + \frac{\precision_k}{2}}$
  \STATE \qquad ${\xx_{k+1} = \xx_k + \frac{2}{k+2}(\zz_{k+1} - \xx_k)}$

\end{algorithmic}
\end{algorithm}



\paragraph{Frank-Wolfe Sinkhorn barycenters} We show that the barycenter problem \cref{eq:sinkhorn-barycenter} satisfies the setting and hypotheses of \cref{thm:fw-informal} and can be thus approached via \cref{alg:abstract-FW-intro}.

\emparagraph{Optimization domain} Let $\VV = \cont(\dom)$, with dual $\BB=\meas(\X)$. The constraint set $\CC = \prob(\dom)$ is convex, closed, and bounded.

\emparagraph{Objective functional} The objective functional $\func = \bary\colon\prob(\dom)\to\R$, defined in \cref{eq:sinkhorn-barycenter}, is convex since it is a convex combination of $\sink(\cdot,\beta_j)$, with 
$j= 1 \dots m$. The gradient $\nabla\bary\colon\prob(\dom)\to\cont(\dom)$ is  $\nabla\bary = \sum_{j=1}^m~\omega_j~ \nabla \sink(\cdot,\beta_j)$, where $\nabla \sink(\cdot,\beta_j)$ is given in \cref{rem:gradient-sinkhorn-divergence}.

\emparagraph{Lipschitz continuity of the gradient.} This is the most critical condition and is addressed in  the following theorem. 

\begin{restatable}{theorem}{TLipschitzContinuityTV}\label{thm:lip-continuity-total-variation-informal}
The gradient $\nabla\oteps$ 
defined in \cref{prop:derivaties}
is Lipschitz continuous.  In particular, the first component $\nabla_1\oteps$ is $2\eps e^{3\diameps}$-Lipschitz continuous, i.e., for every $\alpha,\alpha^\prime,\beta,\beta^\prime\in\prob(\dom)$,
\begin{equation}
  \supnor{u - u^\prime} = \supnor{\nabla_1\oteps(\alpha,\beta)-\nabla_1\oteps(\alpha^\prime,\beta^\prime)} \leq  2\eps e^{3\diameps}~(\nor{\alpha - \alpha^\prime}_{TV} + \nor{\beta-\beta^\prime}_{TV}), 
\end{equation}
where $\diam = \sup_{x,y\in\X}~\dist(x,y)$, $u = \rmap_{\beta\alpha}(u), u^\prime = \rmap_{\beta^\prime,\alpha^\prime}(u^\prime)$, and $u(\anchor)=u^\prime(\anchor)=0$. Moreover, it follows from   \cref{eq:grad_sink} that $\nabla \sink(\cdot,\beta)$ is $6\eps e^{3\diameps}$-Lipschitz continuous. The same holds for $\nabla \bary$.
\end{restatable}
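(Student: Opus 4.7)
The plan is to derive the bound directly from the coupled fixed-point equations $u = \rmap_\beta(v)$, $v=\rmap_\alpha(u)$ (and their primed analogues), exploiting the log-sum-exp structure and a contraction/oscillation argument. I expect the proof to unfold in four steps.

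\textbf{Step 1 (A priori bounds on the potentials).} Writing the fixed-point identity for $u$ in the form
\begin{equation*}
u(x) - u(\anchor) \;=\; -\eps\log\frac{\int e^{(v(y)-\cost(x,y))/\eps}\,d\beta(y)}{\int e^{(v(y)-\cost(\anchor,y))/\eps}\,d\beta(y)},
\end{equation*}
and using $0 \le \cost \le \diam$ together with $u(\anchor)=0$, one obtains $\|u\|_\infty \le \diam$, and analogously $\|v\|_\infty \le \diam$. Hence, in exponential coordinates $U=e^{u/\eps}$, $V=e^{v/\eps}$, one has $U,V,1/U,1/V \in [e^{-\diameps},e^{\diameps}]$, and the same for the primed quantities.

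\textbf{Step 2 (Pointwise comparison in exponential form).} Set $K=e^{-\cost/\eps} \le 1$. The fixed-point equation becomes $1/U(x) = \int V(y)K(x,y)\,d\beta(y)$. Subtracting the primed version and splitting the measure change,
\begin{equation*}
\tfrac{1}{U(x)} - \tfrac{1}{U'(x)} \;=\; \int K(V-V')\,d\beta \;+\; \int KV'\,d(\beta-\beta').
\end{equation*}
Combining $\|V'\|_\infty \le e^{\diameps}$, the mean-value bound $|e^a-e^b|\le e^{\max(a,b)}|a-b|$, and the logarithmic conversion $|u-u'| \le \eps e^{\diameps}\,|1/U - 1/U'|$ (from $\min(1/U,1/U')\ge e^{-\diameps}$) yields the baseline estimate
\begin{equation*}
\|u-u'\|_\infty \;\le\; e^{2\diameps}\,\|v-v'\|_\infty + \eps\, e^{2\diameps}\,\|\beta-\beta'\|_{TV},
\end{equation*}
with the symmetric inequality obtained by swapping the roles of $(u,\beta)$ and $(v,\alpha)$.

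\textbf{Step 3 (Closing the loop via shift invariance).} The naive substitution of the two symmetric bounds from Step 2 produces a self-estimate with prefactor $e^{4\diameps}\ge 1$, which does not close. The decisive ingredient is the identity $\rmap_\mu(w+c) = \rmap_\mu(w) - c$ for any constant $c\in\R$: in Step 2 we may replace $v'$ by $v'+c^\star$ with $c^\star$ minimizing $\|v-v'-c\|_\infty$, trading $\|v-v'\|_\infty$ for $\tfrac{1}{2}\mathrm{osc}(v-v')$. The anchoring $u(\anchor)=u'(\anchor)=0$ simultaneously gives $\|u-u'\|_\infty \le \mathrm{osc}(u-u')$, and applying the same reduction to the symmetric inequality yields an effective contraction with constant $\lambda<1$. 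Solving the resulting closed recursion produces the claimed bound $\|u-u'\|_\infty \le 2\eps e^{3\diameps}(\|\alpha-\alpha'\|_{TV}+\|\beta-\beta'\|_{TV})$.

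\textbf{Step 4 (From $\nabla_1\oteps$ to $\nabla\sink$ and $\nabla\bary$).} Using \cref{eq:grad_sink}, $\nabla\sink(\cdot,\beta)(\alpha)=u-p$ where $u=\nabla_1\oteps(\alpha,\beta)$ and $p$ is the potential of $\oteps(\alpha,\alpha)$. The triangle inequality and the joint bound of Step 3 (applied to the off-diagonal $(\alpha,\beta)$ pair, which contributes $2\eps e^{3\diameps}$, and to the diagonal $(\alpha,\alpha)$ pair, which contributes $4\eps e^{3\diameps}$ because $\alpha$ varies in both slots) give the $6\eps e^{3\diameps}$ constant for $\nabla\sink(\cdot,\beta)$. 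Finally, $\nabla\bary=\sum_j\omega_j\nabla\sink(\cdot,\beta_j)$ is a convex combination and hence inherits the same Lipschitz constant.

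\textbf{Main obstacle.} Step 3 is the crux: in sup-norm each individual soft-min operator $\rmap_\mu$ is only $1$-Lipschitz, not strictly contractive, so a direct Banach fixed-point argument is insufficient. Extracting the missing contraction must go through the oscillation (equivalently, Hilbert projective) viewpoint, and converting the resulting oscillation estimate back to a sup-norm bound relies essentially on the anchor convention that pins $u$ and $u'$ to a common value at $\anchor$.
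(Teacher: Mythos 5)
Your overall architecture --- a priori bounds, a perturbation-plus-contraction argument carried out modulo additive constants (equivalently in Hilbert's projective metric), the anchor $u(\anchor)=u^\prime(\anchor)=0$ to return to the sup norm, and the $2+4=6$ bookkeeping for $\nabla\sink(\cdot,\beta)$ --- matches the paper's strategy. The genuine gap is at your Step 3, which is the crux and is asserted rather than proved. The shift identity $\rmap_\mu(w+c)=\rmap_\mu(w)-c$ combined with optimizing the constant only converts your Step 2 estimate into an oscillation estimate \emph{with the same prefactor}: from $\supnor{u-u^\prime}\le e^{2\diameps}\supnor{v-v^\prime}+\eps e^{2\diameps}\nor{\beta-\beta^\prime}_{TV}$ you obtain at best $\mathrm{osc}(u-u^\prime)\le e^{2\diameps}\,\mathrm{osc}(v-v^\prime)+2\eps e^{2\diameps}\nor{\beta-\beta^\prime}_{TV}$, and even if you replace the crude bound $|e^a-e^b|\le e^{\max(a,b)}|a-b|$ by the exact Gibbs-kernel (mean-value) representation of $\rmap_\mu(w)-\rmap_\mu(w^\prime)$, the prefactor improves only to $1$, never to something strictly below $1$. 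Hence the ``closed recursion'' you invoke does not close: the strict contraction $\lambda=\frac{e^{\diameps}-1}{e^{\diameps}+1}<1$ in the oscillation/Hilbert metric is a separate, nontrivial ingredient. In the paper it is supplied by the Birkhoff--Hopf theorem (\cref{thm:birkhoff-hopf}), which in turn needs the bound $\Delta(\lmap_\alpha)\le 2\diameps$ on the projective diameter of the kernel operator (\cref{rem:projective-diameter-upper-bound}); an elementary substitute would be a Doeblin-type overlap argument for the Gibbs conditional kernels, but your proposal contains neither. Once the contraction is in hand, the loop closes essentially as you envisage: with $f=\tmap_{\beta\alpha}f$ and $f^\prime=\tmap_{\beta^\prime\alpha^\prime}f^\prime$ one gets $d_H(f,f^\prime)\le\frac{1}{1-\lambda^2}\,d_H(\tmap_{\beta\alpha}f,\tmap_{\beta^\prime\alpha^\prime}f)$, the TV term enters through $\lvert[(\lmap_{\beta^\prime}-\lmap_{\beta})g](x)\rvert\le\supnor{g}\nor{\beta-\beta^\prime}_{TV}$, and the anchored comparison $\tfrac12 d_H\le\supnor{\log f-\log f^\prime}\le d_H$ (\cref{lem:relation-supnor-hilbert}) together with $\frac{1}{1-\lambda^2}\le e^{\diameps}$ produces the constant $2\eps e^{3\diameps}$ (\cref{prop:lipschitz-continuity-total-variation2}).

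Two secondary points. First, under the normalization $u(\anchor)=0$ the correct a priori bound on the second potential is $\supnor{v}\le 2\diam$, not $\diam$ (\cref{cor:dad-solutions-bounded}), so $e^{v/\eps}\le e^{2\diameps}$; this changes the constants in your Step 2. Second, your Step 4 (off-diagonal pair contributing $2\eps e^{3\diameps}$, diagonal pair $4\eps e^{3\diameps}$, convex combination for $\nabla\bary$) is correct and coincides with the paper's argument.
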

%
%
%
\cref{thm:lip-continuity-total-variation-informal} is one of the main contributions of this paper. It can be rephrased by saying that the operator that maps 
a pair of distributions to their Sinkhorn potentials is Lipschitz continuous. This result is significantly deeper than the one given in \cite[Lemma 1]{decentralized2018}, which establishes the Lipschitz continuity of the gradient in the \textit{semidiscrete} case. The proof (given in \cref{sec:app-frank-wolfe-algorithm}) relies on non-trivial tools from Perron-Frobenius theory for Hilbert's metric \cite{lemmens2012nonlinear}, which is a well-established framework to study Sinkhorn potentials \cite{peyre2017computational}. We believe this result is interesting not only for the application of \fw{} to the Sinkhorn barycenter problem, 
but also for further understanding regularity properties of entropic optimal transport.

\section{Algorithm: practical Sinkhorn barycenters}\label{sec:algorithm-practice}

According to \cref{sec:algorithm-theory}, \fw{} is a valid approach to tackle the barycenter problem  \cref{eq:sinkhorn-barycenter}. Here we describe how to implement in practice the abstract procedure of \cref{alg:abstract-FW-intro} to obtain a sequence of distributions $(\alpha_k)_{k\in\N}$ minimizing $\bary$.
%
%
A main challenge in this sense resides in finding a minimizing feasible direction for $\bary^\prime(\alpha_k;\mu-\alpha_k) = \scal{\nabla\bary(\alpha_k)}{\mu-\alpha_k}$. According to \cref{rem:gradient-sinkhorn-divergence}, this amounts to solve
\begin{equation}\label{eq:inner-fw}
\mu_{k+1} \in \argmin_{\mu\in\prob(\dom)} ~\sum_{j=1}^m ~\omega_j~ \scal{u_{jk}-p_{k}}{\mu} \qquad\text{where}\qquad u_{jk}-p_{k} = \nabla\sink[(\cdot,\beta_j)](\alpha_k),
\end{equation}
with $p_k = \nabla_1\oteps(\alpha_k,\alpha_k)$ not depending on $j$. In general \cref{eq:inner-fw} would entail a minimization over the set of all probability distributions on $\dom$. However, since the objective functional is linear in $\mu$ and $\prob(\X)$ is a weakly-$*$ compact convex set, we can apply Bauer maximum principle (see e.g., \cite[Thm. 7.69]{aliprantis2006}). Hence, solutions are achieved at the extreme points of the optimization domain, namely Dirac's deltas for the case of $\prob(\X)$ \cite[p. 108]{choquet1969}. Now, denote by $\delta_x\in\prob(\dom)$ the Dirac's delta centered at $x\in\dom$. We have $\scal{w}{\delta_x} = w(x)$ for every $w\in\cont(\dom)$. Hence \cref{eq:inner-fw} is equivalent to
\begin{equation}\label{eq:inner-fw-pointwise}
\mu_{k+1} = \delta_{x_{k+1}} \qquad \text{with} \qquad x_{k+1} \in \argmin_{x\in\dom}~ \sum_{j=1}^m ~\omega_j~ \big(u_{jk}(x)-p_{k}(x)\big).
\end{equation}
Once the new support point $x_{k+1}$ has been obtained, the update in \cref{alg:abstract-FW-intro} corresponds to
\begin{equation}\label{eq:fw-bary-update}
\alpha_{k+1} = \alpha_k + \frac{2}{k+2} (\delta_{x_{k+1}} -\alpha_k) = \frac{k}{k+2} \alpha_k + \frac{2}{k+2} \delta_{x_{k+1}}.
\end{equation}
In particular, if \fw{} is initialized with a distribition with finite support, say $\alpha_0 = \delta_{x_0}$ for some $x_0\in\dom$, then also every further iterate $\alpha_k$ will have at most $k+1$ support points.  
According to \cref{eq:inner-fw-pointwise}, the inner optimization for \fw{} consists in minimizing the functional $x\mapsto\sum_{j=1}^m ~\omega_j~ \big(u_{jk}(x)-p_{k}(x)\big)$ over $\dom$. In practice, having access to such functional poses already a challenge, since it requires computing the Sinkhorn potentials $u_{jk}$ and $p_{k}$, which are infinite dimensional objects. Below we discuss how to estimate these potentials when the $\beta_j$ have finite support. We then address the general setting. 

\paragraph{Computing $\nabla_1\oteps$ for probability distributions with finite support}  
Let $\alpha,\beta\in\prob(\dom)$, with $\beta = \sum_{i=1}^{n} b_i \delta_{y_i}$ a probability measure with finite support, with $\msf{b} = (b_i)_{i=1}^{n}$ nonnegative weights summing up to $1$. It is useful to identify $\beta$ with the pair $(\mbf{Y},\msf{b})$, where $\mbf{Y}\in\R^{d \times n}$ is the matrix with $i$-th column equal to $y_i$. Let now $(u,v)\in\cont(\dom)^2$ be the pair of Sinkhorn potentials associated to $\alpha$ and $\beta$ in \cref{prop:derivaties}, recall that $u = \rmap_\beta(v)$. Denote by $\msf v\in\R^n$ the {\em evaluation vector} of the Sinkhorn potential $v$, with $i$-th entry $\msf{v}_i = v(y_i)$. According to the definition of $\rmap_\beta$ in \cref{eq:rmap}, for any $x\in\dom$
\begin{equation}\label{eq:sinkhorn-gradient-routine}
[\nabla_1\oteps(\alpha,\beta)](x) = u(x) = [\rmap_{\beta}(v)](x) = -\eps\log \sum_{i=1}^n ~e^{(\msf{v}_i - \cost(x,y_i))/\eps}~b_i,
\end{equation}
since the integral $\rmap_\beta(v)$ reduces to a sum 
over the support of $\beta$. Hence, the gradient of $\oteps$ (i.e. the potential $u$), {\em is uniquely characterized in terms of the finite dimensional vector $\msf{v}$ collecting the values 
of the potential $v$ on the support of $\beta$
}. We refer as {\sc SinkhornGradient} to the routine which associates to each triplet $(\mbf{Y},\mbf{b},\mbf{v})$ the map 
 $x\mapsto-\eps~\log \sum_{i=1}^n ~e^{(\msf{v}_i - \cost(x,y_i))/\eps}~b_i$.

\begin{algorithm}[t]
\caption{{\sc Sinkhorn Barycenter}}\label{alg:practical-FW}
\begin{algorithmic}
  \vspace{0.25em}
  \STATE {\bfseries Input:}  $\beta_j = (\mbf{Y}_j,\msf{b}_j)$ with $\mbf{Y}_j\in\R^{d \times n_j}, \msf{b}_j\in\R^{n_j},\omega_j>0$ for $j=1,\dots,m$, $x_0 \in\R^d$, $\eps>0$, $K\in\N$.
  
  \vspace{0.45em}
  \STATE {\bfseries Initialize:} $\alpha_0 = (\mbf{X}_0,\msf{a}_0)$ with $\mbf{X}_0=x_0$, $\msf{a}_0 = 1$. 

  \vspace{0.45em}
  \STATE {\bfseries For} $k=0,1,\dots,K-1$
    \vspace{0.2em}
    \STATE \qquad $\msf{p} =$ {\sc SinkhornKnopp}$(\alpha_k,\alpha_k,\eps)$
    \STATE \qquad $p(\cdot) =$ {\sc SinkhornGradient}$(\mbf{X}_k,\msf{a}_k,\msf{p})$
    \vspace{0.25em}
    \STATE \qquad {\bfseries For} $j=1,\dots m$
    \STATE \qquad\qquad $\msf{v}_j = $ {\sc SinkhornKnopp}$(\alpha_k,\beta_j,\eps)$
    \STATE \qquad\qquad $u_j(\cdot) =$ {\sc SinkhornGradient}$(\mbf{Y}_j,\msf{b}_j,\msf{v}_j)$
    \STATE \qquad {\bfseries Let} $\innerfunc\colon x 
    \mapsto \sum_{j=1}^m\omega_j ~u_j(x) - p(x)$
    \STATE \qquad $x_{k+1} = $ {\sc Minimize}$(\innerfunc)$
    \vspace{0.25em}
    \STATE \qquad $\mbf{X}_{k+1} = [\mbf{X}_k,x_{k+1}]$ ~and~ $\msf{a}_{k+1} = \frac{1}{k+2}\left[k~\msf{a}_k,2\right]$
    \STATE \qquad $\alpha_{k+1} = (\mbf{X}_{k+1},\msf{a}_{k+1})$       

  \vspace{0.45em}
    \STATE {\bfseries Return:} $\alpha_K$


\end{algorithmic}
\end{algorithm}

  

      

\paragraph{Sinkhorn barycenters: finite case}  
\cref{alg:practical-FW} summarizes \fw{} applied to the barycenter problem \cref{eq:sinkhorn-barycenter} when the $\beta_j$'s have finite support. Starting from a Dirac's delta $\alpha_0 = \delta_{x_0}$, at each iteration $k\in\N$ the algorithm proceeds by: $i)$ finding the corresponding evaluation vectors $\msf{v}_j$'s and $\msf{p}$ of the Sinkhorn potentials for $\oteps(\alpha_k,\beta_j)$ and $\oteps(\alpha_k,\alpha_k)$ respectively, via the routine {\sc SinkhornKnopp} (see \cite{cuturi2013sinkhorn,feydy2018interpolating} or \cref{algo:sinkalgo_disc}). This is possible since both $\beta_j$ and $\alpha_k$ have finite support and therefore the problem of approximating the evaluation vectors $\msf{v}_j$ and $\msf{p}$ reduces to an optimization problem over finite vector spaces that can be efficiently solved \cite{cuturi2013sinkhorn}; $ii)$ obtain the gradients $u_{j} = \nabla_1\oteps(\alpha_k,\beta_j)$ and $p = \nabla_1\oteps(\alpha_k,\alpha_k)$ via {\sc SinkhornGradient}; $iii)$ minimize $\innerfunc:x\mapsto\sum_{j=1}^n \omega_j ~ u_j(x) - p(x)$ over $\dom$ to find a new point $x_{k+1}$ (we comment on this meta-routine {\sc Minimize} below); $iv)$ finally update the support and weights of $\alpha_k$ according to \cref{eq:fw-bary-update} to obtain the new iterate $\alpha_{k+1}$. 

A key feature of \cref{alg:practical-FW} is that the support of the candidate barycenter is updated {\em incrementally} by adding at most one point at each iteration, a procedure similar in flavor to the kernel herding strategy in \cite{bach2012equivalence,lacoste2015sequential}. This contrasts with previous methods for barycenter estimation \cite{cuturi14,BenamouCCNP15,staib2017parallel,decentralized2018}, which require the support set, or at least its cardinality, to be fixed beforehand. However, indentifying the new support point requires solving the nonconvex problem \cref{eq:inner-fw-pointwise}, a task addressed by the meta-routine {\sc Minimize}. This problem is typically smooth (e.g., a linear combination of Gaussians when $\cost(x,y) = \nor{x-y}^2$) and first or second order nonlinear optimization methods can be adopted to find stationary points. We note that all free-support methods in the literature for barycenter estimation are also affected by nonconvexity since they typically require solving a bi-convex problem (alternating minimization between support points and weights) which is not jointly convex \cite{cuturi14,stochwassbary}. We conclude by observing that if we restrict to the setting of \cite{staib2017parallel,decentralized2018} with fixed support set, then {\sc Minimize} can be solved exactly by evaluating the functional in \cref{eq:inner-fw-pointwise} on each candidate support point.

\paragraph{Sinkhorn barycenters: general case} When the $\beta_j$'s have infinite support, it is not possible to apply Sinkhorn-Knopp in practice. In line with \cite{genevay2018sample, staib2017parallel}, we can randomly sample empirical distributions $\hat\beta_j = \frac{1}{n}\sum_{i=1}^n \delta_{x_{ij}} $ from each $\beta_j$ and apply Sinkhorn-Knopp to $(\alpha_k,\hat\beta_j)$ in \cref{alg:abstract-FW-intro} rather than to the ideal pair $(\alpha_k,\beta_j)$. This strategy is motivated by \cite[Prop 13]{feydy2018interpolating}, where it was shown that Sinkhorn potentials vary continuously with the input measures. However, it opens two questions: $i)$ whether this approach is theoretically justified (consistency) and $ii)$ how many points should we sample from each $\beta_j$ to ensure convergence (rates). We answer these questions in \cref{thm:sinkhorn-barycenters-infinite-case} in the next section. 

\section{Convergence analysis}\label{sec:algorithm-convergence}

We finally address the convergence of \fw{} applied to both the finite and infinite settings discussed in \cref{sec:algorithm-practice}. We begin by considering the finite setting. 

\begin{restatable}{theorem}{TLSinkhornBarycenterFiniteCase}
\label{thm:sinkhorn-barycenters-finite-case}
Suppose that $\beta_1,\dots\beta_m\in\prob(\dom)$ have finite support and let $\alpha_k$ be the $k$-th iterate of \cref{alg:practical-FW} applied to \cref{eq:sinkhorn-barycenter}. Then,
\begin{equation}\label{eq:convergence_finite_case}
\bary(\alpha_k) - \min_{\alpha\in\prob(\dom)}\bary(\alpha) \leq \frac{48\,\eps\, e^{3\diameps}}{k+2}.
\end{equation}
\end{restatable}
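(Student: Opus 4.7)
The plan is to obtain \cref{eq:convergence_finite_case} as a direct instantiation of the abstract Frank--Wolfe rate \cref{thm:fw-informal}, combined with the regularity result \cref{thm:lip-continuity-total-variation-informal}. I would proceed in four short steps, each verifying one hypothesis of the abstract theorem for the concrete setting of \cref{alg:practical-FW}.

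First, I would cast the barycenter problem in the abstract framework of \cref{thm:fw-informal}: take $\VV = \cont(\dom)$, $\BB = \meas(\dom)$, $\CC = \prob(\dom)$, and $\func = \bary$. The set $\prob(\dom)$ is convex, weakly-$*$ closed and bounded in $\meas(\dom)$; the functional $\bary$ is convex as a convex combination of the convex maps $\sink(\cdot,\beta_j)$; and by \cref{rem:gradient-sinkhorn-divergence} the map $\nabla\bary = \sum_j \omega_j\, \nabla\sink(\cdot,\beta_j)$ realizes the directional derivatives of $\bary$ along feasible directions, which is exactly what \cref{thm:fw-informal} requires.

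Second, I would extract the Lipschitz constant and bound the diameter. By \cref{thm:lip-continuity-total-variation-informal}, each $\nabla\sink(\cdot,\beta_j)$ is $6\eps e^{3\diameps}$-Lipschitz continuous with respect to $\nor{\cdot}_{TV}$; since $\sum_j \omega_j = 1$ with $\omega_j\geq 0$, the triangle inequality gives that $L = 6\eps e^{3\diameps}$ works for $\nabla\bary$. For the diameter, given $\alpha,\alpha'\in\prob(\dom)$ the signed measure $\alpha-\alpha'$ has zero total mass, so its Jordan decomposition $(\alpha-\alpha')^+ - (\alpha-\alpha')^-$ has positive and negative parts of equal mass bounded by $1$; hence $\nor{\alpha - \alpha'}_{TV} \leq 2$ and $\Diam(\CC) \leq 2$.

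Third, I would argue that in the finite-support regime the inner oracle of \cref{alg:abstract-FW-intro} can be implemented exactly, so $\precision_k = 0$. Since $\alpha_0 = \delta_{x_0}$ and the update \cref{eq:fw-bary-update} only adds one atom per iteration, every $\alpha_k$ is finitely supported; with each $\beta_j$ also finitely supported, {\sc SinkhornKnopp} reduces to a finite-dimensional convex program and returns the exact evaluation vectors of the Sinkhorn potentials, which via \cref{eq:sinkhorn-gradient-routine} determine $\nabla_1\oteps(\alpha_k,\beta_j)$ and $\nabla_1\oteps(\alpha_k,\alpha_k)$ as continuous functions on $\dom$. As derived in \cref{eq:inner-fw,eq:inner-fw-pointwise}, the linear minimization over $\prob(\dom)$ then reduces by Bauer's maximum principle to minimizing $\innerfunc$ over the extreme points $\{\delta_x : x\in\dom\}$; assuming {\sc Minimize} returns a global minimizer of $\innerfunc$ over $\dom$, the \fw{} oracle is exact. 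Plugging $L = 6\eps e^{3\diameps}$, $\Diam(\CC)^2 \leq 4$ and $\precision_k = 0$ into \cref{eq:20190418a} yields
$$\bary(\alpha_k) - \min_{\alpha\in\prob(\dom)}\bary(\alpha) \leq \frac{2}{k+2}\,L\,\Diam(\CC)^2 \leq \frac{2}{k+2}\cdot 6\eps e^{3\diameps}\cdot 4 = \frac{48\,\eps\, e^{3\diameps}}{k+2},$$
as claimed. I do not foresee any genuine obstacle: the deep technical work is already done in \cref{thm:lip-continuity-total-variation-informal,thm:fw-informal}, and the only conceptual subtlety, namely that $\bary$ is differentiable only along feasible directions of the empty-interior set $\prob(\dom)$, is precisely what the dual-Banach-space formulation of \fw{} was designed to handle.
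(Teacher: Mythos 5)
Your proposal is correct and follows essentially the same route as the paper: instantiate the abstract Frank--Wolfe rate of \cref{thm:fw-informal} with $\CC=\prob(\dom)$, use the $6\eps e^{3\diameps}$-Lipschitz continuity of $\nabla\sink(\cdot,\beta_j)$ (hence of $\nabla\bary$) from \cref{thm:lip-continuity-total-variation-informal}, take $\Diam(\prob(\dom))=2$ in Total Variation, and assume exact {\sc SinkhornKnopp} and {\sc Minimize} oracles so that $\precision_k=0$, giving $\frac{2}{k+2}\cdot 6\eps e^{3\diameps}\cdot 4$. This matches the paper's argument (which, in the appendix version, additionally tracks the oracle errors $\Delta_{1,k},\Delta_{2,k}$ and recovers your bound when they vanish).
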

The result follows by the convergence result of \fw{} in  \cref{thm:fw-informal} applied with the Lipschitz constant computed in \cref{thm:lip-continuity-total-variation-informal}, and recalling that $\Diam(\prob(\X))=2$ with respect to the Total Variation. We note that \cref{thm:sinkhorn-barycenters-finite-case} assumes {\sc SinkhornKnopp} and {\sc Minimize} in \cref{alg:practical-FW} to yield exact solutions. In \cref{sec:app-frank-wolfe-algorithm} we comment how approximation errors in this context affect the bound in \cref{eq:convergence_finite_case}. 

\paragraph{General setting} As mentioned in \cref{sec:algorithm-practice}, when the $\beta_j$'s are not finitely supported we adopt a sampling approach. More precisely we propose to {\em replace} in \cref{alg:practical-FW} the ideal Sinkhorn potentials of the pairs $(\alpha,\beta_j)$ with those of $(\alpha,\hat\beta_j)$, where each $\hat\beta_j$ is an empirical measure randomly sampled from $\beta_j$. In other words we are performing the \fw{} algorithm with a (possibly rough) approximation of the correct gradient of $\bary$. According to \cref{thm:fw-informal}, \fw{} allows  errors in the gradient estimation (which are captured into the precision $\Delta_k$ in the statement). To this end, the following result \textit{quantifies} the approximation error between $\nabla_1\oteps(\cdot,\beta)$ and $\nabla_1\oteps(\cdot,\hat\beta)$ in terms of the sample size of $\hat\beta$. 

\begin{restatable}[Sample Complexity of Sinkhorn Potentials]{theorem}{TLSampleComplexitySinkhornPotentials}
\label{thm:sample-complexity-sinkhorn-gradients}
Suppose that $\cost \in \cont^{s+1}(\X\times\X)$ with $s>d/2$.
Then,
there exists a constant $\overline\sinkconst=\overline\sinkconst(\dom,\cost,d)$ such that for any $\alpha,\beta\in\prob(\dom)$ and any empirical measure $\hat\beta$ of a set of $n$ points independently sampled from $\beta$, we have, for every $\tau\in(0,1]$ 
\begin{equation}\label{eq:uniform-approximation-empirical-continuous-sinkhorn-potentials}
    \supnor{u - u_n} = \lVert \nabla_1\oteps(\alpha,\beta)-\nabla_1\oteps(\alpha,\hat\beta) \rVert_{\infty}\leq \frac{8\varepsilon~\overline\sinkconst e^{3\diameps}\log\frac{3}{\tau}}{\sqrt{n}}
\end{equation}
with probability at least $1-\tau$, where $u= \rmap_{\beta\alpha}(u), u_n = \rmap_{\hat\beta\alpha}(u_n)$ and $u(\anchor) = u_n(\anchor) = 0$. 
\end{restatable}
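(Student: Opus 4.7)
The plan is to control $\supnor{u - u_n}$ by combining a stability argument for the Sinkhorn fixed-point equation with a Sobolev/RKHS concentration bound for the empirical measure $\hat\beta$. A naive application of \cref{thm:lip-continuity-total-variation-informal} would only yield $\supnor{u-u_n}\leq 2\varepsilon e^{3\diameps}\nor{\beta-\hat\beta}_{TV}$, but $\nor{\beta-\hat\beta}_{TV}$ does not vanish when $\beta$ is non-atomic, so we must exploit the extra smoothness $\cost\in\cont^{s+1}$ to replace TV by a weaker, kernel-based distance that \emph{does} concentrate at the $1/\sqrt{n}$ rate.

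First I would mimic the splitting used in the proof of \cref{thm:lip-continuity-total-variation-informal}. Writing $u = \rmap_{\beta\alpha}(u)$ and $u_n = \rmap_{\hat\beta\alpha}(u_n)$, I would use
\begin{equation*}
u - u_n = \bigl[\rmap_{\beta\alpha}(u) - \rmap_{\beta\alpha}(u_n)\bigr] + \bigl[\rmap_{\beta\alpha}(u_n) - \rmap_{\hat\beta\alpha}(u_n)\bigr]
\end{equation*}
together with the Hilbert-metric (Birkhoff) contraction of $\rmap_{\beta\alpha}$ already invoked in \cref{thm:lip-continuity-total-variation-informal} to absorb the first bracket and arrive at
\begin{equation*}
\supnor{u - u_n} \;\leq\; C(\varepsilon,\diameps)\,\sup_{x\in\X}\bigl|\rmap_{\beta\alpha}(u_n)(x) - \rmap_{\hat\beta\alpha}(u_n)(x)\bigr|.
\end{equation*}
Setting $v_n = \rmap_\alpha(u_n)$, the right-hand side is a difference of two log-integrals of the same function $g_{x,n}(y) := \eexp^{(v_n(y) - \cost(x,y))/\varepsilon}$ against $\beta$ and $\hat\beta$. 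Since $v_n - \cost(x,\cdot)$ oscillates by at most $\diam$ on $\X$, both integrals lie in $[\eexp^{-\diameps},\eexp^{\diameps}]$, and the elementary estimate $|\log a - \log b| \leq (1/\min(a,b))\,|a-b|$ reduces the task to bounding $\sup_{x\in\X}\bigl|\int g_{x,n}\,d(\beta-\hat\beta)\bigr|$.

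To obtain a $1/\sqrt n$ rate on this last quantity I would invoke the RKHS/MMD machinery, exploiting the hypothesis $\cost\in\cont^{s+1}(\X\times\X)$ with $s>d/2$, under which the Sobolev space $H^s(\X)$ is an RKHS continuously embedded in $\cont(\X)$. The key technical step is an \emph{a priori} Sobolev bound $\sup_{x\in\X}\nor{g_{x,n}}_{H^s}\leq\overline\sinkconst$, with $\overline\sinkconst$ depending only on $\dom,\cost,d$ and, crucially, \emph{not} on $\alpha,\beta,n$. This requires differentiating the log-sum-exp defining $v_n$ up to order $s+1$ via Fa\`a di Bruno's formula, observing that each such derivative is a rational combination of integrals of derivatives of $\cost$ with denominators uniformly bounded below by $\eexp^{-\diameps}$, and then controlling $\nor{g_{x,n}}_{H^s}$ by one further application of the chain rule. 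With this bound in hand, Cauchy--Schwarz in $H^s$ yields
\begin{equation*}
\sup_{x\in\X}\Bigl|\int g_{x,n}\,d(\beta-\hat\beta)\Bigr| \;\leq\; \overline\sinkconst\,\nor{\mu_\beta-\mu_{\hat\beta}}_{H^s},
\end{equation*}
where $\mu_\nu = \int k_s(\cdot,y)\,d\nu(y)$ is the kernel mean embedding of $\nu$ in $H^s$. A standard bounded-differences (McDiarmid) argument for MMD then gives $\nor{\mu_\beta-\mu_{\hat\beta}}_{H^s}\leq C\log(3/\tau)/\sqrt n$ with probability at least $1-\tau$, and tracking constants through the chain recovers the stated bound. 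The hard part is the uniform Sobolev estimate on the Sinkhorn potentials, independently of the measures: the log-sum-exp form is favourable, but turning this into clean constants that match the $\overline\sinkconst e^{3\diameps}$ prefactor requires careful bookkeeping that extends the regularity arguments of \cite{genevay2018sample,feydy2018interpolating}.
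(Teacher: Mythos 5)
Your proposal follows essentially the same route as the paper's proof: perturb the fixed point $u=\rmap_{\beta\alpha}(u)$ via the Birkhoff--Hopf contraction in Hilbert's metric exactly as in the proof of \cref{thm:lip-continuity-total-variation-informal}, reduce the remaining term through the elementary log-Lipschitz estimate to $\sup_{x\in\X}\bigl|\int e^{(v-\cost(x,\cdot))/\varepsilon}\,d(\beta-\hat\beta)\bigr|$, bound the integrand uniformly in the Sobolev RKHS $W^{s,2}$, pair against the kernel mean embeddings to get an MMD Lipschitz bound (\cref{prop:lipschitz-continuity-mmd}), and conclude with Hilbert-space concentration for the empirical embedding (\cref{lem:mmd-concentration-inequality}). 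The only cosmetic differences are that the paper imports the uniform Sobolev bound on the potentials from \cite{genevay2018sample} together with a Stein extension argument instead of re-deriving it by Fa\`a di Bruno, and uses Pinelis' vector-valued inequality rather than McDiarmid for the MMD concentration.
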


\noindent\cref{thm:sample-complexity-sinkhorn-gradients} is one of the main results of this work. We point out that it {\em cannot} be obtained by means of the Lipschitz continuity of $\nabla_1\oteps$ in \cref{thm:lip-continuity-total-variation-informal}, since empirical measures do not converge in $\nor{\cdot}_{TV}$ to their target distribution \cite{devroye1990no}. Instead, the proof consists in considering the weaker {\em Maximum Mean Discrepancy (MMD)} metric associated to a universal kernel \cite{song2008learning}, which metrizes the topology of the convergence in law of $\prob(\dom)$ \cite{sriperumbudur2011universality}. Empirical measures converge in MMD metric to their target distribution \cite{song2008learning}. Therefore, by proving the Lipschitz continuity of $\nabla_1\oteps$ with \textit{respect to \mmd{}} in \cref{prop:lipschitz-continuity-mmd} we are able to conclude that \cref{eq:uniform-approximation-empirical-continuous-sinkhorn-potentials} holds. This latter result relies on higher regularity properties of Sinkhorn potentials, which have been recently shown \cite[Thm.2]{genevay2018sample} to be uniformly bounded in Sobolev spaces under the additional assumption $c\in\cont^{s+1}(\X\times\X)$. For sufficiently large $s$, the Sobolev norm is in duality with the MMD \cite{muandet2017kernel} and allows us to derive the required Lipschitz continuity. We conclude  noting that while \cite{genevay2018sample} studied the sample complexity of the Sinkhorn {\em divergence}, \cref{thm:sample-complexity-sinkhorn-gradients} is a sample complexity result for Sinkhorn {\em potentials}. In this sense, we observe that the constants appearing in the bound are tightly related to those in \cite[Thm.3]{genevay2018sample} and have similar behavior with respect to $\eps$. We can now study the convergence of \fw{} in continuous settings.






\begin{restatable}{theorem}{TLSInkhornBarycenterInfiniteDim}
\label{thm:sinkhorn-barycenters-infinite-case}
Suppose that $\cost \in \cont^{s+1}(\X\times\X)$ with $s>d/2$.
Let $n \in \N$ and
 $\hat\beta_1,\dots,\hat\beta_m$ be empirical distributions with $n$ support points, each independently sampled from $\beta_1,\dots,\beta_m$. 
Let $\alpha_k$ be the $k$-th iterate of \cref{alg:practical-FW} applied to $\hat\beta_1,\dots,\hat\beta_m$. Then for any $\tau\in(0,1]$, the following holds with probability larger than $1-\tau$ 
\begin{equation}
  \bary(\alpha_k) - \min_{\alpha\in\prob(\dom)} \bary(\alpha) \leq
  \frac{64 \bar\sinkconst \varepsilon e^{3\diameps} \log\frac{3}{\tau} }{\min(k,\sqrt{n})}.
\end{equation}
\end{restatable}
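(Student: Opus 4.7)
The plan is to interpret \cref{alg:practical-FW} driven by $\hat\beta_1,\dots,\hat\beta_m$ as an \emph{inexact} Frank-Wolfe procedure for the \emph{true} barycenter functional $\bary$, and invoke \cref{thm:fw-informal} directly on $\bary$ rather than on its empirical surrogate. Let $g_k=\nabla\bary(\alpha_k)=\sum_j\omega_j\bigl(\nabla_1\oteps(\alpha_k,\beta_j)-\tfrac{1}{2}\nabla_1\oteps(\alpha_k,\alpha_k)\bigr)$ be the ideal gradient and let $\hat g_k$ be the one actually computed inside the algorithm (with $\beta_j$ replaced by $\hat\beta_j$). Because the self-transport term $\nabla_1\oteps(\alpha_k,\alpha_k)$ does not depend on the $\beta_j$'s, it cancels in the difference, so $\supnor{g_k-\hat g_k}\leq\sum_j\omega_j\supnor{\nabla_1\oteps(\alpha_k,\beta_j)-\nabla_1\oteps(\alpha_k,\hat\beta_j)}$, which is exactly the quantity controlled by \cref{thm:sample-complexity-sinkhorn-gradients}.

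Next I would translate this gradient mis-specification into an inner-linearization precision $\precision_k$ compatible with \cref{alg:abstract-FW-intro}. The algorithm picks $x_{k+1}\in\argmin_{x\in\dom}\innerfunc(x)$, which, by Bauer's maximum principle applied as in \cref{eq:inner-fw-pointwise}, corresponds to $\hat\mu^\star=\delta_{x_{k+1}}\in\argmin_{\mu\in\prob(\dom)}\scal{\hat g_k}{\mu}$. Writing $\mu^\star$ for the exact argmin of $\scal{g_k}{\cdot}$, adding and subtracting $\scal{\hat g_k}{\hat\mu^\star-\mu^\star}$ and using optimality of $\hat\mu^\star$ together with $\|\hat\mu^\star-\mu^\star\|_{TV}\leq 2$ yields $\scal{g_k}{\hat\mu^\star-\mu^\star}\leq 2\supnor{g_k-\hat g_k}$. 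Hence \cref{alg:practical-FW} realizes a step of \cref{alg:abstract-FW-intro} applied to $\bary$ with $\precision_k\leq 4\supnor{g_k-\hat g_k}$.

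The main obstacle is that the sample complexity bound of \cref{thm:sample-complexity-sinkhorn-gradients} must hold along the \emph{entire} trajectory, while the iterates $\alpha_k$ themselves depend on the random samples $\hat\beta_j$. This is precisely what the MMD-based derivation of that theorem affords: the random event bounds only $\mmd(\beta_j,\hat\beta_j)\lesssim\log(3/\tau)/\sqrt n$, whereas the underlying inequality $\supnor{\nabla_1\oteps(\alpha,\beta)-\nabla_1\oteps(\alpha,\hat\beta)}\leq C\,\mmd(\beta,\hat\beta)$ is \emph{uniform} in $\alpha\in\prob(\dom)$. A single union bound over $j=1,\dots,m$ then produces, with probability at least $1-\tau$, a deterministic estimate $\supnor{g_k-\hat g_k}\leq 8\eps\,\overline\sinkconst\,e^{3\diameps}\log(3m/\tau)/\sqrt n$ valid simultaneously for every $\alpha_k$. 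In particular $\precision_k$ can be taken constant in $k$, so the monotonicity requirement on $\precision_k(k+2)$ in \cref{alg:abstract-FW-intro} is trivially met.

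It then suffices to plug these ingredients into \cref{thm:fw-informal} with $\VV=\cont(\dom)$, $\CC=\prob(\dom)$, $\Diam(\CC)=2$ in total variation, and Lipschitz constant $L=6\eps\,e^{3\diameps}$ from \cref{thm:lip-continuity-total-variation-informal}. This yields
\begin{equation*}
\bary(\alpha_k)-\min\bary \;\leq\; \frac{48\,\eps\,e^{3\diameps}}{k+2} \;+\; \frac{32\,\eps\,\overline\sinkconst\,e^{3\diameps}\log(3m/\tau)}{\sqrt n}.
\end{equation*}
Bounding each summand by the worse of $1/(k+2)$ and $1/\sqrt n$, and using $\max(1/k,1/\sqrt n)=1/\min(k,\sqrt n)$ together with $\overline\sinkconst\log(3/\tau)\geq 1$, delivers the advertised rate (with the factor $m$ absorbed into $\overline\sinkconst$ and the logarithm).
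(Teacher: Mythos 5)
Your proposal is correct and takes essentially the same route as the paper's own proof: you read \cref{alg:practical-FW} run on $\hat\beta_1,\dots,\hat\beta_m$ as an inexact Frank--Wolfe scheme for the true functional $\bary$, bound the gradient mismatch $\supnor{\hat g_k-\nabla\bary(\alpha_k)}$ via \cref{thm:sample-complexity-sinkhorn-gradients} (correctly noting that its MMD-based derivation makes the bound uniform along the random trajectory), convert it into an inner-linearization precision by an add-and-subtract argument that reproduces the content of \cref{p:inexactgrad}, and plug $L=6\eps e^{3\diameps}$ and $\Diam(\prob(\X))=2$ into \cref{thm:fw-informal}. The only divergence is bookkeeping: your explicit union bound over $j$ yields $\log(3m/\tau)$, and the closing claim that the factor $m$ can be ``absorbed into $\overline\sinkconst$'' is not literally valid since that constant is independent of $m$ — but the paper's proof silently skips this union bound altogether, so this is a constant/log-factor looseness shared with (indeed flagged more honestly than) the published argument, not a gap in the approach.
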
 
The proof is shown in \cref{sec:sample-complexity-sinkhorn-potentials}.
A consequence of \cref{thm:sinkhorn-barycenters-infinite-case} is that the accuracy of \fw{} depends simultaneously on the number of iterations and the sample size used in the approximation of the gradients: by choosing $n = k^2$ we recover the $O(1/k)$ rate of the finite setting, while for $n=k$ we have a rate of $O(k^{-1/2})$, which is reminiscent of typical sample complexity results, highlighting the statistical nature of the problem.

\begin{remark}[Incremental Sampling]
The above strategy requires sampling the empirical distributions for $\beta_1,\dots,\beta_m$ beforehand. A natural question is whether it is be possible to do this {\em incrementally}, sampling new points and updating $\hat\beta_j$ accordingly, as the number of \fw{} iterations increase. To this end, one can perform an intersection bound and see that this strategy is still consistent, but the bound in \cref{thm:sinkhorn-barycenters-infinite-case} 
worsens the logarithmic term, which becomes $\log (3m k/\tau)$.
\end{remark}

\section{Experiments}
\label{sec:experiments}
In this section we show the performance of our method in a range of experiments \footnote{\url{https://github.com/GiulsLu/Sinkhorn-Barycenters}}.   

\paragraph{Discrete measures: barycenter of nested ellipses} We compute the barycenter of $30$ randomly generated nested ellipses on a $50\times50$ grid similarly to \cite{cuturi14}. We interpret each image as a probability distribution in $2$D. The cost matrix is given by the squared Euclidean distances between pixels. \cref{fig:ellipses} reports $8$ samples of the input ellipses and the barycenter obtained with \cref{alg:practical-FW}. It shows qualitatively that our approach captures key geometric properties of the input measures. 

\begin{figure}[t]
\begin{minipage}[t]{0.3\textwidth}
\centering
\includegraphics[height=4.05cm,trim={0 -0.05cm 0 0},clip]{images/ellipses.png} \caption{Ellipses}
\label{fig:ellipses}
\end{minipage}
\begin{minipage}[t]{0.7\textwidth}
\centering
  \includegraphics[height=4cm]{images/gauss1.png}
  \includegraphics[height=4cm]{images/gauss2.png}  
\caption{Barycenters of Gaussians}
\label{fig:gauss}
\end{minipage}
\end{figure}


\paragraph{Continuous measures: barycenter of Gaussians}
We compute the barycenter of $5$ Gaussian distributions $\mathcal{N}(m_i,C_i)$ $i=1,\dots,5$ in $\R^2$, with mean $m_i\in\R^2$ and covariance $C_i$ randomly generated. We apply \cref{alg:practical-FW} to empirical measures obtained by sampling $n=500$ points from each $\mathcal{N}(m_i,C_i)$, $i=1,\dots,5$. Since the (Wasserstein) barycenter of Gaussian distributions can be estimated accurately (see \cite{AguehC11}), in \cref{fig:gauss}
we report both the output of our method (as a scatter plot) and the true Wasserstein barycenter (as level sets of its density). We observe that our estimator recovers both the mean and covariance of the target barycenter. See the supplementary material for additional experiments also in the case of mixtures of Gaussians. 


\paragraph{Image ``compression'' via distribution matching} 
Similarly to \cite{stochwassbary}, we test \cref{alg:practical-FW} in the special case of computing the ``barycenter'' of a single measure $\beta\in\prm(\dom)$. While the solution of this problem is the distribution $\beta$ itself, we can interpret the intermediate iterates $\alpha_k$ of \cref{alg:practical-FW} as compressed version of the original measure. In this sense $k$ would represent the level of compression since $\alpha_k$ is supported on {\em at most} $k$ points. \cref{fig:cheeta} (Right) reports iteration $k=5000$ of \cref{alg:practical-FW} applied to the $140\times140$ image in \cref{fig:cheeta} (Left) interpreted as a probability measure $\beta$ in $2$D. We note that the number of points in the support is $\sim 3900$: indeed, \cref{alg:practical-FW} selects the most relevant support points points multiple times to accumulate the right amount of mass on each of them (darker color = higher weight). This shows that \fw{} tends to greedily search for the most relevant support points, prioritizing those with higher weight 

\begin{figure}[t]
\begin{minipage}{0.6\textwidth}
  \centering
  \includegraphics[height = 4.7cm]{images/cheetaorig_copy.png}\quad
  \includegraphics[height = 4.7cm]{images/cheeta4k_copy_better_crop.png}
  \caption{(left) original image 140x140 pixels, sample (right) }
  \label{fig:cheeta} 
\end{minipage}
\begin{minipage}{0.39\textwidth}
    \centering
 \includegraphics[height = 4.7cm,trim={0 0.5cm 0 0.5cm},clip]{images/centroids2.png}
 \caption{$k$-means}
 \label{fig:centroids}
\end{minipage}
\end{figure}

\paragraph{k-means on MNIST digits} We tested our algorithm on a $k$-means clustering experiment. We consider a subset of $500$ random images from the MNIST dataset. Each image is suitably normalized to be interpreted as a probability distribution on the grid of $28\times28$ pixels with values scaled between $0$ and $1$. We initialize $20$ centroids according to the  $k$-means++ strategy \cite{kmeans++}. \cref{fig:centroids} deipcts the $20$ centroids obtained by performing $k$-means with \cref{alg:practical-FW}. We see that the structure of the digits is successfully detected, recovering also minor details (e.g. note the difference between the $2$ centroids).

\paragraph{Real data: Sinkhorn propagation of weather data}
We consider the problem of Sinkhorn {\em propagation} similar to the one in \cite{Solomon:2014:WPS}. The goal is to predict the distribution of missing measurements for weather stations in the state of Texas, US by ``propagating'' measurements from neighboring stations in the network. The problem can be formulated as minimizing the functional $\sum_{(v,u)\in\mathcal{V}} \omega_{uv}\sink(\rho_v,\rho_u)$ over the set $\{\rho_v\in\prob(\R^2) | v\in\mathcal{V}_0\}$ with: $\mathcal{V}_0\subset\mathcal{V}$ the subset of stations with missing measurements, $G = (\mathcal{V},\mathcal{E})$ the whole graph of the stations network, $\omega_{uv}$ a weight inversely proportional to the 	geographical distance between two vertices/stations $u,v\in\mathcal{V}$. The variable $\rho_v\in\prob(\R^2)$ denotes the distribution of measurements at station $v$ of daily {\em temperature} and {\em atmospheric pressure} over one year. This is a generalization of the barycenter problem \cref{eq:sinkhorn-barycenter} (see also \cite{peyre2017computational}).  
From the total $|\mathcal{V}|=115$, we randomly select $10\%,20\%$ or $30\%$ to be {\em available} stations, and use \cref{alg:practical-FW} to propagate their measurements to the remaining ``missing'' ones.
We compare our approach (\fw{}) with the Dirichlet (DR) baseline in \cite{Solomon:2014:WPS} in terms of the error $d(C_T,\hat C)$ between the covariance matrix $C_T$ of the groundtruth distribution and that of the predicted one. Here $d(A,B) = \norm{\log(A^{-1/2} B A^{-1/2})}$ is the geodesic distance on the cone of positive definite matrices. The average prediction errors are: $2.07$ (\fw{}), $2.24$ (DR) for $10\%$, $1.47$ (\fw{}), $1.89$(DR) for $20\%$ and $1.3$ (\fw{}), $1.6$ (DR) for $30\%$. \cref{fig:propagation} qualitatively reports the improvement $\Delta = d(C_T,C_{DR}) - d(C_T,C_{FW})$ of our method on individual stations: a higher color intensity corresponds to a wider gap in our favor between prediction errors, from light green $(\Delta\sim 0)$ to red $(\Delta\sim 2)$. Our approach tends to propagate the distributions to missing locations with higher accuracy.

\begin{figure}[t]
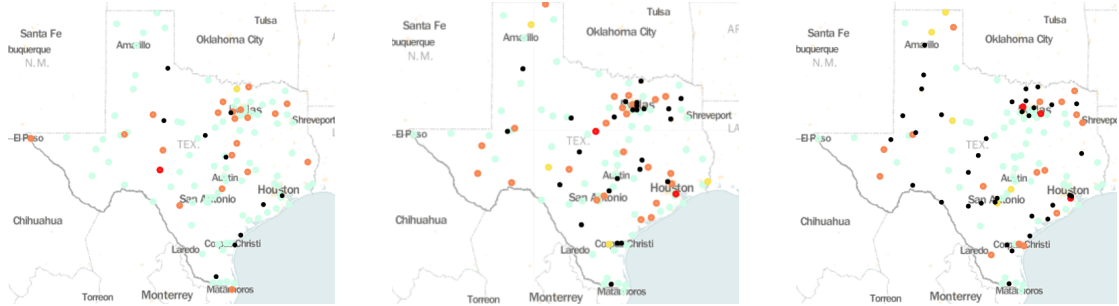

  \centering
    \includegraphics[height = 4cm]{images/fig_10_from_pdf.png}\qquad%
    \includegraphics[height = 4cm]{images/fig_meteo6.png}\qquad%
    \includegraphics[height = 4cm]{images/fig_35_from_pdf.png}
    \caption{From Left to Right: propagation of weather data with $10\%,20\%$ and $30\%$ stations with available measurements (see text).  \label{fig:propagation}}
\end{figure}

\section{Conclusion}
\label{sec:conclusion}

We proposed a Frank-Wolfe-based algorithm to find the Sinkhorn barycenter of probability distributions with either finitely or infinitely many support points. Our algorithm belongs to the family of barycenter methods with free support since it adaptively identifies support points rather than fixing them a-priori. In the finite settings, we were able to guarantee convergence of the proposed algorithm by proving the Lipschitz continuity of gradient of the barycenter functional in the Total Variation sense. Then, by studying the sample complexity of Sinkhorn potential estimation, we proved the convergence of our algorithm also in the infinite case. We empirically assessed our method on a number of synthetic and real datasets, showing that it exhibits good qualitative and quantitative performance. While in this work we have considered \fw{} iterates that are a convex combination of Dirac's delta, models with higher regularity (e.g. mixture of Gaussians) might be more suited to approximate the barycenter of distributions with smooth density. Hence, future work will investigate how the perspective adopted in this work could be extended also to other barycenter estimators. 

{
\bibliographystyle{plain}
\bibliography{biblio}
}

\newpage

\appendix

\crefname{assumption}{Assumption}{Assumptions}
\crefname{equation}{}{}
\Crefname{equation}{Eq.}{Eqs.}
\crefname{figure}{Figure}{Figures}
\crefname{table}{Table}{Tables}
\crefname{section}{Section}{Sections}
\crefname{theorem}{Theorem}{Theorems}
\crefname{proposition}{Proposition}{Propositions}
\crefname{fact}{Fact}{Facts}
\crefname{lemma}{Lemma}{Lemmas}
\crefname{corollary}{Corollary}{Corollaries}
\crefname{example}{Example}{Examples}
\crefname{remark}{Remark}{Remarks}
\crefname{algorithm}{Algorithm}{Algorithms}
\crefname{enumi}{}{}

\crefname{appendix}{Appendix}{Appendices}

\numberwithin{equation}{section}
\numberwithin{lemma}{section}
\numberwithin{proposition}{section}
\numberwithin{theorem}{section}
\numberwithin{corollary}{section}
\numberwithin{definition}{section}
\numberwithin{algorithm}{section}
\numberwithin{fact}{section}
\numberwithin{remark}{section}

\section*{\Huge\textbf{Supplementary Material}}

Below we give an overview of the structure of the supplementary material and highlight the main novel results of this work.


\paragraph{ \cref{sec:frank-wolfe}: abstract Frank-Wolfe algorithm in dual Banach spaces} This section contains full details on Frank-Wolfe algorithm. The novelty stands in the relaxation of the differentiability assumptions.\\

\paragraph{\cref{sec:PFtheory}: DAD problems and convergence of Sinkhorn-Knopp algorithm} This section is a brief review of basic concepts from the nonlinear Perrom-Frobeius theory, DAD problems, and applications to the study of Sinkorn algorithm.\\

\paragraph{ \cref{subsec:lipschitz-total-variation}: Lipschitz continuitity of the gradient of the Sinkhorn divergence with respect to Total Variation} This section contains one of the main contributions of our work, \cref{prop:lipschitz-continuity-total-variation2}, from which we derive \cref{thm:lip-continuity-total-variation-informal} in the main text. \\

\paragraph{\cref{sec:app-frank-wolfe-algorithm}: Frank-Wolfe algorithm for Sinkhorn barycenters} This section contains the  complete analysis of FW algorithm for Sinkhorn barycenters, which takes into account the error in the computation of Sinkhorn potentials and the error in their minimization.  The main result is the convergence of the Frank-Wolfe scheme for finitely supported distributions in \cref{thm:full_convergence_FW_with_error}.\\

\paragraph{\cref{sec:sample-complexity-sinkhorn-potentials}: Sample complexity of Sinkhorn potential and convergence of \cref{alg:practical-FW} in case of continuous measures} This section contains the discussion and the proofs of two of main results of the work \cref{thm:sample-complexity-sinkhorn-gradients}, \cref{thm:sinkhorn-barycenters-infinite-case}.\\

\paragraph{ \cref{sec:additional_exp}: additional experiments} This section contains additional experiment on barycenters of mixture of Gaussian, barycenter of a mesh in 3D (dinosau) and additional figures on the experiment on Sinkhorn propagation described in \cref{sec:experiments}.

\section{The Frank-Wolfe algorithm in dual Banach spaces}\label{sec:frank-wolfe}

In this section we detail the convergence analysis of the Frank-Wolfe algorithm in abstract dual Banach spaces and under mild directional differentiablility assumptions so to cover the setting of Sinkhorn barycenters described in \cref{sec:algorithm-theory} of the paper.

Let $\VV$ be a real Banach space and let
 be $\BB$ its topological dual.
Let $\CC\subset \BB$ be a nonempty, closed, 
convex, and bounded set and let
 $\func\colon\CC \to \R$ be a convex function. We address the following optimization problem
\begin{equation}
\label{eq:minprob}
\min_{\xx \in \CC} \func(\xx),
\end{equation}
assuming that the set of solutions is nonemtpy.

We recall the concept of the tangent cone of feasible directions.
\begin{definition}\label{def:cone-of-feasible-directions}
Let $\xx\in \CC$. Then \emph{the cone of feasible directions of $\CC$ at $\xx$} is
$\feas_{\CC}(\xx) = \R_+ (\CC - \xx)$ and the \emph{tangent cone of $\CC$ at $\xx$}
is
\begin{equation*}
\mathcal{T}_{\CC}(\xx) = \overline{\feas_{\CC}(\xx)} = \big\{ \uu \in \BB \,\vert\, (\exists (t_{k})_{k \in \N} \in \R_{++}^{\N})(t_k \to 0) (\exists (\xx_{k})_{k \in \N} \in \CC^\N)\ t_k^{-1}(\xx_k - \xx) \to \uu \big\}.
\end{equation*}
\end{definition}

\begin{remark}
\normalfont
$\feas_{\CC}(\xx)$ is the cone generated by $\CC-\xx$, and it is a convex cone.
Indeed, if $t>0$ and $\uu \in \feas_{\CC}(\xx)$, then $t \uu \in \feas_{\CC}(\xx)$. Moreover, if $\uu_1, \uu_2 \in \feas_{\CC}(\xx)$, then there exists $t_1,t_2>0$ and $\xx_1,\xx_2 \in \CC$ such that $\uu_i = t_i(\xx_i - \xx)$,
$i=1,2$. Thus, 
\begin{equation*}
\uu_1 + \uu_2 = (t_1 + t_2)
\Big( \frac{t_1}{t_1 + t_2} \xx_1 
+ \frac{t_2}{t_1 + t_2} \xx_2 - \xx \Big) \in \R_+ (\CC - \xx).
\end{equation*}
So, $\mathcal{T}_{\CC}(\xx)$ is a closed convex cone too.
\end{remark}

\begin{definition}\label{def:directional-derivative}
Let $\xx \in \CC$ and $\uu \in \feas_{\CC}(\xx)$. Then, 
\emph{the directional
derivative of $\func$ at $\xx$ in the direction $\uu$} is
\begin{equation*}
\func^\prime(\xx;\uu) = \lim_{t \to 0^+} \frac{\func(\xx + t \uu) - \func(\xx)}{t}  \in \left[-\infty, +\infty\right[.
\end{equation*}
\end{definition}

\begin{remark}\label{rem:properties-of-directional-derivative}
\normalfont
The above definition is well-posed. Indeed, 
since $v$ is a feasible direction of $\CC$ at $x$, there exists $t_1>0$
and $\xx_1 \in \CC$ such that $\uu = t_1 (\xx_1-\xx)$; hence
\begin{equation*}
(\forall\, t\in \left]0,1/t_1\right])\quad
x + t \uu = x + t\,t_1(\xx_1-\xx) = (1 - t\,t_1) \xx+ t\,t_1 \xx_1 \in \CC.
\end{equation*}
Moreover, since $\func$ is convex, the function $t\in \left]0,1/t_1\right] \mapsto (\func(\xx + t \uu) - \func(\xx))/t$ is increasing,
hence
\begin{equation}
\label{eq:20190410a}
\lim_{t \to 0^+} \frac{\func(\xx + t \uu) - \func(\xx)}{t}= 
\inf_{t \in \left]0,1/t_1\right]} \frac{\func(\xx + t \uu) - \func(\xx)}{t}.
\end{equation}
\end{remark}

It is easy to prove that the function
\begin{equation*}
\uu \in \feas_{\CC}(\xx) \mapsto \func^\prime(\xx;\uu) \in \left[-\infty,+\infty\right[
\end{equation*}
is positively homogeneous and sublinear (hence convex), that is,
\begin{enumerate}[{\rm (i)}]
\item $(\forall\, \uu \in \feas_{\CC}(\xx))(\forall\,t \in \R_+)$\ 
$\func^\prime(\xx;t \uu) = t \func^\prime(\xx;\uu)$;
\item $(\forall\,\uu_1,\uu_2 \in \feas_{\CC}(\xx))$\ 
$\func^\prime(\xx;\uu_1 + \uu_2) \leq \func^\prime(\xx;\uu_1) + \func^\prime(\xx;\uu_2)$.
\end{enumerate}

We make the following assumptions about $\func$:
\begin{enumerate}[{\rm H$1$}]
\item\label{H1} $(\forall\, \xx \in \CC)$\ the function  
$\uu\mapsto \func^\prime(\xx;\uu)$ is finite, that is, $\func^\prime(\xx;\uu) \in \R$.
\item\label{H2} The \emph{curvature of $\func$} is finite, that is,
\begin{equation}\label{eq:curvature}
C_{\func} = \sup_{\substack{\xx,\zz \in \CC\\ \gamma \in [0,1]}}
\frac{2}{\gamma^2}\big( \func(\xx + \gamma(\zz-\xx)) - \func(\xx) - \gamma \func^\prime(\xx, \zz - \xx) \big)<+\infty.
\end{equation}
\end{enumerate}

\begin{remark}
\normalfont
For every $\xx,\zz \in \CC$, we have
\begin{equation}
\label{eq:sort-of-convexity-directional-derivative}
\func(\zz) - \func(\xx) \geq \func^\prime(\xx,\zz-\xx).
\end{equation}
This follows from \eqref{eq:20190410a} with $\xx_1=\zz$ and $t = 1$ ($t_1 =1$).
\end{remark}

The (inexact) Frank-Wolfe algorithm is detailed in \cref{alg:abstract-FW}.

\begin{algorithm}
\caption{Frank-Wolfe in Dual Banach Spaces}
\label{alg:abstract-FW}
Let $(\gamma_k)_{k \in \N} \in \R_{++}^\N$ be such that $\gamma_0= 1$ and, for every $k \in \N$, $1/\gamma_k \leq 1/\gamma_{k+1} \leq 1/2 + 1/\gamma_{k}$ (i.e., $\gamma_k = 2/(k+2)$).
Let $\xx_0 \in \CC$ and $(\Delta_k)_{k \in \N} \in \R_{+}^\N$ be such that $(\Delta_k/\gamma_k)_{k \in \N}$ is nondecreasing.
Then
\begin{equation*}
\begin{array}{l}
\text{for}\;k=0,1,\dots\\[0.7ex]
\left\lfloor
\begin{array}{l}
\text{find }\zz_{k+1}\in \CC \text{ is such that } \func^\prime(\xx_k; \zz_{k+1} - \xx_k) 
\leq \inf_{\zz \in \CC} \func^\prime(\xx_k; \zz - \xx_k) +   \frac 1 2 \Delta_k\\[1ex]
\xx_{k+1} = \xx_k + \gamma_k(\zz_{k+1} - \xx_k)
\end{array}
\right.
\end{array}
\end{equation*}
\end{algorithm}

\begin{remark}\ 
\normalfont
\begin{enumerate}[{\rm (i)}]
\item 
\cref{alg:abstract-FW} does not 
require the sub-problem $\min_{\zz \in \CC} \func^\prime(\xx_k, \zz - \xx_k)$ to have solutions. Indeed it only requires
computing a $\precision_k$-minimizer of $\func^\prime(\xx_k;\cdot - \xx_k)$ on $\CC$, which always exists.
\item 
Since $\CC$ is weakly-$*$ compact (by Banach-Alaoglu theorem),
if $\func^\prime(\xx_k,\cdot - \xx_k)$ is weakly-$*$ continuous on $\CC$, then the sub-problem $\min_{\zz \in \CC} \func^\prime(\xx_k, \zz - \xx_k)$ admits solutions.
Note that this occurs when the directional derivative
$\func^\prime(\xx;\cdot)$ is linear and can be represented in $\VV$.
This case is addressed in the subsequent \cref{p:inexactgrad}.
\end{enumerate}
\end{remark}

\vspace{0.5ex}
\begin{theorem}
\label{thm:FWA}
Let $(\xx_k)_{k \in \N}$ be defined according to \cref{alg:abstract-FW}.
Then, for every integer $k\geq 1$,
\begin{equation}
\label{eq:20190418a}
\func(\xx_k) - \min \func \leq C_{\func} \gamma_k + \Delta_k.
\end{equation}
\end{theorem}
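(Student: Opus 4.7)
The plan is the standard Frank-Wolfe descent argument, lifted to the directional-derivative setting of H\ref{H1}--H\ref{H2} and adapted to the inexactness budget $\Delta_k$. I would proceed in three steps: derive a one-step recursion on the suboptimality gap, verify the claim at $k=1$, and close the induction using the algebra of $(\gamma_k,\Delta_k)$.

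\textbf{Step 1 (one-step recursion).} Fix a minimizer $\xx^\star \in \CC$ of $\func$ and set $h_k = \func(\xx_k) - \min \func$. Applying the curvature bound H\ref{H2} at $\xx = \xx_k$, $\zz = \zz_{k+1}$, $\gamma = \gamma_k$ gives
\begin{equation*}
\func(\xx_{k+1}) \leq \func(\xx_k) + \gamma_k\, \func'(\xx_k;\zz_{k+1}-\xx_k) + \frac{\gamma_k^2}{2}\, C_\func.
\end{equation*}
Since $\xx^\star - \xx_k \in \feas_\CC(\xx_k)$, the $\Delta_k/2$-approximate optimality of $\zz_{k+1}$ combined with the directional-derivative inequality \eqref{eq:sort-of-convexity-directional-derivative} at $\zz = \xx^\star$ yields
\begin{equation*}
\func'(\xx_k;\zz_{k+1}-\xx_k) \leq \func'(\xx_k;\xx^\star - \xx_k) + \frac{\Delta_k}{2} \leq -h_k + \frac{\Delta_k}{2}.
\end{equation*}
Subtracting $\min \func$ from both sides of the first display then gives the master recursion
\begin{equation*}
h_{k+1} \leq (1-\gamma_k)\, h_k + \frac{\gamma_k^2}{2}\, C_\func + \frac{\gamma_k \Delta_k}{2}.
\end{equation*}

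\textbf{Step 2 (base case).} Specializing the recursion to $k=0$ with $\gamma_0 = 1$ gives $h_1 \leq C_\func/2 + \Delta_0/2$. I would then observe that $\gamma_1 \geq 2/3$ (from $1/\gamma_1 \leq 1/2 + 1/\gamma_0 = 3/2$), hence $C_\func/2 \leq C_\func \gamma_1$. Moreover, the monotonicity of $(\Delta_k/\gamma_k)_{k\in\N}$ yields $\Delta_0 \leq \Delta_1/\gamma_1 \leq (3/2)\Delta_1$, so $\Delta_0/2 \leq \Delta_1$. Combining, $h_1 \leq C_\func \gamma_1 + \Delta_1$.

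\textbf{Step 3 (inductive step).} Suppose $h_k \leq C_\func \gamma_k + \Delta_k$ for some $k \geq 1$. Substituting into the recursion and simplifying gives
\begin{equation*}
h_{k+1} \leq (C_\func \gamma_k + \Delta_k)\Big(1 - \tfrac{\gamma_k}{2}\Big).
\end{equation*}
The stepsize condition $1/\gamma_{k+1} \leq 1/2 + 1/\gamma_k$ rearranges to $\gamma_k - (\gamma_{k+1}\gamma_k)/2 \leq \gamma_{k+1}$, and using $\gamma_{k+1}\leq \gamma_k$ gives the key inequality
\begin{equation*}
\gamma_k\Big(1 - \tfrac{\gamma_k}{2}\Big) \leq \gamma_{k+1}.
\end{equation*}
This immediately bounds the $C_\func$-term by $C_\func \gamma_{k+1}$. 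For the error term, the monotonicity $\Delta_k/\gamma_k \leq \Delta_{k+1}/\gamma_{k+1}$ gives $\Delta_k \leq (\gamma_k/\gamma_{k+1})\Delta_{k+1}$, and combining with the previous display yields $\Delta_k(1-\gamma_k/2) \leq \Delta_{k+1}$. Adding the two estimates closes the induction: $h_{k+1} \leq C_\func \gamma_{k+1} + \Delta_{k+1}$, which is \eqref{eq:20190418a}.

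The only nontrivial obstacle is the simultaneous propagation of both the $C_\func \gamma_k$ and $\Delta_k$ terms through the induction; the stepsize assumption $1/\gamma_{k+1}\leq 1/2 + 1/\gamma_k$ and the monotonicity of $\Delta_k/\gamma_k$ are designed precisely so that the single inequality $\gamma_k(1-\gamma_k/2) \leq \gamma_{k+1}$ controls both at once, making the closure of the induction automatic once that identity is in place.
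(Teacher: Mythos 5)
Your proposal is correct and follows essentially the same route as the paper: the same one-step recursion from H2 plus the $\Delta_k/2$-approximate linear minimization and \eqref{eq:sort-of-convexity-directional-derivative}, the same base case, and the same induction driven by $\gamma_k(1-\gamma_k/2)\leq\gamma_{k+1}$ together with the monotonicity of $\Delta_k/\gamma_k$. The only cosmetic difference is that the paper propagates the single quantity $C_k\gamma_k$ with $C_k = C_\func + \Delta_k/\gamma_k$, whereas you track the curvature and error terms separately.
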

\begin{proof}
Let $\xx_* \in \CC$ be a solution of problem \eqref{eq:minprob}.
It follows from \cref{H2} and the definition of $\xx_{k+1}$ in \cref{alg:abstract-FW}, 
that 
\begin{equation*}
\func(\xx_{k+1}) \leq \func(\xx_k) + \gamma_k \func^\prime(\xx_k;\zz_{k+1} - \xx_k) + \frac{\gamma_k^2}{2} C_{\func}.
\end{equation*}
Moreover, it follows from the definition of $\zz_{k+1}$ in \cref{alg:abstract-FW} 
and \eqref{eq:sort-of-convexity-directional-derivative} that
\begin{align*}
\func^\prime(\xx_k; \zz_{k+1} - \xx_k) 
&\leq \inf_{\zz \in \CC} \func^\prime(\xx_k; \zz - \xx_k) 
+ \frac 1 2 \Delta_k\\
 &\leq \func^\prime(\xx_k; \xx_* - \xx_k) + \frac1 2 \Delta_k\\
&\leq -( \func(\xx_k) - \func(\xx_*))+ \frac1 2 \Delta_k.
\end{align*}
Then,
\begin{equation}
\label{eq:20190418b}
\func(\xx_{k+1}) - \func(\xx_*) \leq (1 - \gamma_k) (\func(\xx_k) - \func(\xx_*))
+\frac{\gamma_k^2}{2} \Big(C_{\func}+  \frac{\Delta_k}{\gamma_k}\Big).
\end{equation}
Now, similarly to \cite[Theorem~2]{Jaggi2013}, we can prove \eqref{eq:20190418a} by induction.
Since $\gamma_0 = 1$, $1/\gamma_1 \leq 1/2 + 1/\gamma_0$,
and $\Delta_0/\gamma_0 \leq \Delta_1/ \gamma_1$, 
it follows from \eqref{eq:20190418b} that
\begin{equation}
    \func(\xx_{1}) - \func(\xx_*) \leq \frac{1}{2} 
    \Big(C_{\func} +\frac{\Delta_0}{\gamma_0}\Big)
    \leq \gamma_1 \Big(C_{\func} +\frac{\Delta_1}{\gamma_1}\Big),
\end{equation}
hence \eqref{eq:20190418a} is true for $k=1$.
Set, for the sake of brevity, $C_k = C_{\func} +\Delta_k/\gamma_k$ and
suppose that \eqref{eq:20190418a} holds for $k \in \N$, $k\geq 1$. Then, it follows from \eqref{eq:20190418b} and the properties of $(\gamma_k)_{k \in \N}$ that
\begin{align*}
\func(\xx_{k+1}) - \func(\xx_*) &\leq (1 - \gamma_k) \gamma_k C_k
+\frac{\gamma_k^2}{2}C_k\\
& = C_k\gamma_k \Big( 1 - \frac{\gamma_k}{2}\Big)\\
& \leq C_k\gamma_k \Big( 1 - \frac{\gamma_{k+1}}{2}\Big) \\
&\leq C_k \dfrac{1}{1/\gamma_{k+1} - 1/2}\Big( 1 - \frac{\gamma_{k+1}}{2}\Big)\\[0.8ex]
&= C_k \gamma_{k+1}\\[0.8ex]
&\leq  C_{k+1} \gamma_{k+1}.
\qedhere
\end{align*}
\end{proof}

\begin{corollary}
Under the assumptions of \cref{thm:FWA}, suppose in addition that $\precision_k = \precision \gamma_k^{\zeta}$, for some $\zeta \in [0,1]$ and $\precision \geq 0$. Then we have
\begin{equation}
\func(\xx_k) - \min \func \leq C_{\func} \gamma_k + \precision \gamma_k^{\zeta}.
\end{equation}
\end{corollary}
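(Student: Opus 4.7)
The plan is to derive this as a direct specialization of \cref{thm:FWA}. The theorem's conclusion gives a bound of the form $\func(\xx_k) - \min\func \leq C_{\func}\gamma_k + \Delta_k$, so substituting the prescribed choice $\Delta_k = \precision \gamma_k^{\zeta}$ immediately yields the right-hand side claimed in the corollary. The only real work is to verify that this particular choice of $(\Delta_k)_{k \in \N}$ satisfies the standing hypothesis of \cref{thm:FWA}, namely that the sequence $(\Delta_k/\gamma_k)_{k \in \N}$ is nondecreasing, so that the theorem actually applies.

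First I would write
\begin{equation*}
\frac{\Delta_k}{\gamma_k} = \precision\, \gamma_k^{\zeta - 1}
\end{equation*}
and observe that, by the choice of $\gamma_k = 2/(k+2)$ in \cref{alg:abstract-FW}, the sequence $(\gamma_k)_{k \in \N}$ is strictly decreasing. Since $\zeta \in [0,1]$, the exponent satisfies $\zeta - 1 \leq 0$, hence $t \mapsto t^{\zeta - 1}$ is nonincreasing on $\R_{++}$. Composing a nonincreasing function with the decreasing sequence $(\gamma_k)$ yields a nondecreasing sequence, and multiplication by the nonnegative constant $\precision$ preserves monotonicity. Thus $(\Delta_k/\gamma_k)_{k \in \N}$ is nondecreasing as required. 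The boundary cases $\zeta = 0$ and $\zeta = 1$ are trivial: in the first $\Delta_k/\gamma_k = \precision(k+2)/2$, which is strictly increasing, and in the second $\Delta_k/\gamma_k = \precision$ is constant.

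Having validated the hypothesis, an application of \cref{thm:FWA} gives $\func(\xx_k) - \min \func \leq C_{\func}\gamma_k + \Delta_k$, and substituting $\Delta_k = \precision \gamma_k^{\zeta}$ produces the stated inequality. I do not expect any genuine obstacle here; the corollary is essentially bookkeeping, designed to highlight the trade-off that if one is willing to shrink the inner-minimization tolerance at rate $\gamma_k^{\zeta}$, then the overall suboptimality still decays at the minimum of the two rates $\gamma_k$ and $\gamma_k^{\zeta}$, which for $\zeta = 1$ recovers the familiar $O(1/k)$ Frank-Wolfe rate up to constants.
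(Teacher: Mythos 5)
Your proposal is correct and follows exactly the paper's argument: the paper's proof likewise consists of noting that $\Delta_k/\gamma_k = \precision\,\gamma_k^{\zeta-1}$ is nondecreasing (since $\gamma_k$ is nonincreasing and $\zeta-1\le 0$) and then invoking \cref{thm:FWA}. Your version just spells out the monotonicity check in more detail; there is no substantive difference.
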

\begin{proof}
It follows from \cref{thm:FWA} by noting that the sequence $\precision_k/\gamma_k = 1/\gamma_k^{1 - \zeta}$ is nondecreasing.
\end{proof}

\begin{proposition}
\label{p:inexactgrad}
Suppose that there exists a mapping $\nabla \func\colon \CC \to \VV$
such that\footnote{This mapping does not need to be unique.}, 
\begin{equation}
\label{eq:20190410e}
(\forall\, \xx \in \CC)(\forall\,\zz \in \CC)\quad \scal{\nabla \func(\xx)}{\zz-\xx} = \func^\prime(\xx;\zz-\xx).
\end{equation}
Then the following holds.
\begin{enumerate}[{\rm (i)}]
    \item\label{p:inexactgrad_i} Let $k \in \N$
and suppose that there exists $u_k \in \VV$ such that
$\nor{u_k - \nabla \func(\xx_k)} \leq \Delta_{1,k}/4$
and that $\zz_{k+1} \in \CC$ satisfies 
\begin{equation*}
\scal{u_k}{\zz_{k+1}} \leq \min_{\zz \in \CC} \scal{u_k}{\zz} 
+ \frac{\Delta_{2,k}}{2},
\end{equation*}
for some $\Delta_{1,k},\Delta_{2,k}>0$. Then
\begin{equation}
\label{eq:inexactgrad}
\func^\prime(\xx_k; \zz_{k+1} - \xx_k) 
\leq \min_{\zz \in \CC} \func^\prime(\xx_k; \zz - \xx_k) +
\frac1 2( \Delta_{1,k} \Diam(\CC)+ \Delta_{2,k}).
\end{equation}
\item\label{p:inexactgrad_ii} 
Suppose that $\nabla \func\colon \CC \to \VV$ is $L$-Lipschitz continuous for some $L>0$. Then, for every $\xx,\zz \in \CC$
and $\gamma \in [0,1]$,
\begin{equation*}
\func(\xx + \gamma(\zz-\xx)) - \func(\xx) - \gamma\scal{\zz - \xx}
{\nabla \func(\xx)} \leq \frac{L}{2} 
\gamma^2\nor{\zz-\xx}^2
\end{equation*}
and hence $C_{\func} \leq L \mathrm{diam}(\CC)^2$.
\end{enumerate}
\end{proposition}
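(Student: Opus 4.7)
Both items follow from the linear representation \eqref{eq:20190410e} of the directional derivative through the map $\nabla\func$. Item (i) is an error-decomposition argument separating the inaccuracy of the surrogate gradient $u_k$ from the inaccuracy of the linear subproblem, while item (ii) is the classical descent lemma for functions with Lipschitz gradient, adapted to the directional-differentiability setting of this paper.

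\textbf{Plan for (i).} The idea is to insert and subtract $u_k$ inside the pairing. By \eqref{eq:20190410e},
\begin{equation*}
\func^\prime(\xx_k;\zz_{k+1}-\xx_k) = \scal{\nabla\func(\xx_k)}{\zz_{k+1}-\xx_k} = \scal{u_k}{\zz_{k+1}-\xx_k} + \scal{\nabla\func(\xx_k)-u_k}{\zz_{k+1}-\xx_k}.
\end{equation*}
By the duality pairing inequality and the hypothesis $\nor{u_k-\nabla\func(\xx_k)}\leq \Delta_{1,k}/4$, the second term is at most $(\Delta_{1,k}/4)\Diam(\CC)$. For the first term, subtracting $\scal{u_k}{\xx_k}$ from both sides of the approximate-minimization hypothesis rewrites it as $\scal{u_k}{\zz_{k+1}-\xx_k}\leq \min_{\zz\in\CC}\scal{u_k}{\zz-\xx_k}+\Delta_{2,k}/2$; inside this minimum I then swap $u_k$ for $\nabla\func(\xx_k)$, again at the cost of $(\Delta_{1,k}/4)\Diam(\CC)$. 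Summing the two error terms yields exactly $\tfrac12(\Delta_{1,k}\Diam(\CC)+\Delta_{2,k})$.

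\textbf{Plan for (ii).} For fixed $\xx,\zz\in\CC$, I consider the auxiliary convex function $\phi\colon[0,1]\to\R$, $\phi(t)=\func(\xx+t(\zz-\xx))$, well-defined because $\CC$ is convex. For each $t\in[0,1)$ the vector $\zz-\xx$ is a feasible direction of $\CC$ at $\xx+t(\zz-\xx)$, so \eqref{eq:20190410e} and assumption H1 give a finite right derivative $\phi^\prime_+(t)=\scal{\nabla\func(\xx+t(\zz-\xx))}{\zz-\xx}$. Since convex functions on an interval are absolutely continuous on compact subintervals of their interior (and continuous at the endpoints), we obtain $\phi(\gamma)-\phi(0)=\int_0^\gamma \phi^\prime_+(t)\,dt$; subtracting the trivial identity $\gamma\scal{\nabla\func(\xx)}{\zz-\xx}=\int_0^\gamma\scal{\nabla\func(\xx)}{\zz-\xx}\,dt$ produces
\begin{equation*}
\func(\xx+\gamma(\zz-\xx))-\func(\xx)-\gamma\scal{\nabla\func(\xx)}{\zz-\xx}=\int_0^\gamma\scal{\nabla\func(\xx+t(\zz-\xx))-\nabla\func(\xx)}{\zz-\xx}\,dt.
\end{equation*}
The $L$-Lipschitz hypothesis and the duality pairing bound the integrand by $Lt\nor{\zz-\xx}^2$, and integrating over $[0,\gamma]$ gives the desired $\tfrac{L}{2}\gamma^2\nor{\zz-\xx}^2$. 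Plugging into definition \eqref{eq:curvature} and majorizing $\nor{\zz-\xx}\leq\Diam(\CC)$ yields $C_\func\leq L\,\Diam(\CC)^2$.

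\textbf{Main obstacle.} The only subtle point is justifying the integral representation for $\phi$ in (ii): one needs $\phi^\prime_+$ to be integrable on $[0,\gamma]$ and to equal the almost-everywhere derivative of $\phi$. Both facts follow from convexity of $\phi$ together with the finiteness of the directional derivative guaranteed by H1. Everything else is bookkeeping familiar from the smooth finite-dimensional descent lemma.
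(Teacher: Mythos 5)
Your proposal is correct and follows essentially the same route as the paper: part (i) uses the identical insert-and-subtract decomposition of $u_k$ with the two $(\Delta_{1,k}/4)\Diam(\CC)$ error terms, and part (ii) restricts $\func$ to the segment, identifies the derivative via \eqref{eq:20190410e}, applies the fundamental theorem of calculus, and bounds the integrand by $Lt\nor{\zz-\xx}^2$. Your extra care in justifying the integral representation (one-sided derivatives, convexity plus H1 for endpoint continuity) only fills in what the paper dismisses as ``easy to see.''
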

\begin{proof}
\cref{p:inexactgrad_i}:
We have
\begin{align}
\label{eq:20190410c}
\nonumber\scal{\nabla \func(\xx_k)}{\zz_{k+1} - \xx_k} 
&= \scal{u_k}{\zz_{k+1} - \xx_k} + \scal{\nabla \func(\xx_k) - u_k}{\zz_{k+1} - \xx_k}\\[1ex]
& \leq \min_{\zz \in \CC} \scal{u_k}{\zz - \xx_k} 
+ \frac{\Delta_{2,k}}{2} + 
\frac{\Delta_{1,k}}{4}\mathrm{diam}(\CC).
\end{align}
Moreover, 
\begin{align*}
(\forall\, \zz \in \CC)\quad \scal{u_k}{\zz - \xx_k} 
&= \scal{\nabla \func(\xx_k)}{\zz - \xx_k}
+ \scal{u_k - \nabla \func(\xx_k)}{\zz - \xx_k}\\[1ex]
&\leq  \scal{\nabla \func(\xx_k)}{\zz - \xx_k}
+ \frac{\Delta_{1,k}}{4} \Diam(\CC),
\end{align*}
hence 
\begin{equation}
\label{eq:20190410d}
\min_{\zz \in \CC} \scal{u_k}{\zz - \xx_k} \leq \min_{\zz \in \CC} \scal{\nabla \func(\xx_k)}{\zz - \xx_k}
+ \frac{\Delta_{1,k}}{2} \Diam(\CC).
\end{equation}
Thus, \eqref{eq:inexactgrad} follows from \eqref{eq:20190410c}, \eqref{eq:20190410d}, and \eqref{eq:20190410e}.

\cref{p:inexactgrad_ii}:
Let $\xx,\zz \in \CC$, and define
 $\psi\colon[0,1]\to \VV^*$ such that,
 $\forall\, \gamma \in [0,1]$, 
$\psi(\gamma) = \func(\xx + \gamma(\zz-\xx))$.
Then, it is easy to see that for every $\gamma \in \left]0,1\right[$, $\psi$ is differentiable at $\gamma$
and $\psi^\prime(\gamma) = \func^\prime(\xx + \gamma(\zz-\xx);\zz-\xx) = \scal{\nabla \func(\xx+\gamma(\zz-\xx))}{\zz-\xx}$. Moreover, 
$\psi$ is continuous on $[0,1]$. Therefore,
the fundamental theorem of calculus yields
\begin{equation*}
    \psi(\gamma) - \psi(0) = \int_0^\gamma \psi\prime(t) d t
\end{equation*}
and hence
\begin{align*}
    \func(\xx+\gamma(\zz-\xx)) - \func(\xx) - \scal{\nabla \func(\xx)}{\zz-\xx}
     &= \int_0^\gamma \scal{\nabla \func(\xx + t(\zz-\xx)) - \nabla \func(\xx)}{\zz-\xx} dt\\[1ex]
     &\leq \int_0^\gamma\nor{\nabla \func(\xx + t(\zz-\xx)) - \nabla \func(\xx)} \nor{\zz-\xx} dt \\[1ex]
     &\leq \int_0^\gamma L t\nor{\zz-\xx}^2 dt 
     \\[1ex]
     &=  L \frac{\gamma^2}{2}\nor{\zz-\xx}^2.
     \qedhere
\end{align*}
\end{proof}

The following result is an extension of a classical result on the directional differentiability of a max function \cite[Theorem~4.13]{bonnans2013perturbation} which relaxes the inf-compactness condition and allows the parameter space to be a convex set, instead of the entire Banach space. This result provide a prototype of functions (of which the entropic regularization of the Wasserstein distance is an instance) which are directionally differentiable only along the feasible directions of their domain and satisfies the hypotheses of \cref{p:inexactgrad}.

\begin{proposition}
\label{p:diff_of_max}
Let $Z$ and $\VV$ be real Banach spaces and
let $\BB$ be the topological dual of $\VV$. Let $\CC\subset \BB$
be a nonempty closed convex set, and let
$g\colon Z\times \BB\to \R$ be such that
\begin{enumerate}[$1)$]
    \item for every $z \in Z$, $g(z,\cdot)\colon \BB \to \R$
    is G\^ateaux differentiable with derivative in $\VV$, and the partial derivative with respect to the second variable $D_2 g\colon Z \times \BB \to \VV$ is continuous.
    \item for every $\xx \in \CC$, $S(\xx):=\argmax_{Z} g(\cdot, \xx) \neq \varnothing$.
\item there exists a continuous mapping $\varphi\colon \CC \to Z$ such that, for every $\xx \in \CC$, $\varphi(\xx) \in S(\xx)$.
\end{enumerate}
Let $\func\colon \CC \to \R$ be defined
as
\begin{equation}
    \func(\xx) = \max_{z \in Z} g(z,\xx).
\end{equation}
Then, $\func$ is continuous, directionally differentiable, and, for every $\xx \in \CC$ and $\uu \in \feas_{\CC}(\xx)$
\begin{equation}
\label{eq:20190513b}
    \func^\prime(\xx;\uu) = \max_{z \in S(\xx)} \scal{D_2 g(z,\xx)}{\uu} = \scal{D_2 g(\varphi(\xx),\xx)}{\uu}.
\end{equation}
\end{proposition}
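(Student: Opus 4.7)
The plan is to adapt the classical argument for the directional differentiability of a max function \cite{bonnans2013perturbation} by trading the usual inf-compactness of the argmax set $S(\xx)$ for the availability of the continuous selection $\varphi$. The key observation that unlocks everything is the identity
\begin{equation*}
\func(\xx) = g(\varphi(\xx), \xx), \qquad \xx \in \CC,
\end{equation*}
which furnishes both an explicit form for $\func$ and a natural ``moving'' competitor for bounding the max from above.

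For the continuity of $\func$, I would combine this identity with the continuity of $\varphi$ and the continuity of $g$ on $Z \times \BB$, the latter being obtained from the Gâteaux differentiability of $g(z, \cdot)$ and the joint continuity of $D_2 g$ through a fundamental-theorem-of-calculus estimate along segments in $\BB$. Equivalently, lower semicontinuity of $\func$ is automatic as a supremum of continuous functions in the second variable, and upper semicontinuity can be read off from the identity above once continuity of $g$ along the graph of $\varphi$ is established.

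Next I fix $\xx \in \CC$ and a feasible direction $\uu = t_1(\xx_1 - \xx) \in \feas_{\CC}(\xx)$, noting that $\xx + t\uu \in \CC$ for every $t \in [0, 1/t_1]$, and bracket $\func^\prime(\xx;\uu)$ by two inequalities. For the lower bound, for any fixed $z \in S(\xx)$ we have $\func(\xx + t\uu) \geq g(z, \xx + t\uu)$ and $\func(\xx) = g(z, \xx)$; dividing by $t$, sending $t \to 0^+$, and using Gâteaux differentiability of $g(z, \cdot)$ yields
\begin{equation*}
\liminf_{t \to 0^+} \frac{\func(\xx + t\uu) - \func(\xx)}{t} \geq \scal{D_2 g(z, \xx)}{\uu},
\end{equation*}
and the supremum over $z \in S(\xx)$ produces one half of \eqref{eq:20190513b}. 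For the upper bound, with the moving optimizer $z_t = \varphi(\xx + t\uu)$ one has $\func(\xx + t\uu) - \func(\xx) \leq g(z_t, \xx + t\uu) - g(z_t, \xx)$; applying the fundamental theorem of calculus to the scalar function $s \mapsto g(z_t, \xx + s\uu)$ on $[0, t]$ rewrites the right-hand side as $\int_0^1 t \,\scal{D_2 g(z_t, \xx + rt\uu)}{\uu}\, dr$.

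The main technical obstacle is the passage to the limit in this integral (after dividing by $t$). The continuity of $\varphi$ gives $z_t \to \varphi(\xx)$ as $t \to 0^+$, and the joint continuity of $D_2 g$ ensures that the integrand converges pointwise in $r \in [0,1]$ to $\scal{D_2 g(\varphi(\xx), \xx)}{\uu}$ while remaining uniformly bounded on a neighborhood of $(\varphi(\xx), \xx)$, so dominated convergence yields
\begin{equation*}
\limsup_{t \to 0^+} \frac{\func(\xx + t\uu) - \func(\xx)}{t} \leq \scal{D_2 g(\varphi(\xx), \xx)}{\uu} \leq \sup_{z \in S(\xx)} \scal{D_2 g(z, \xx)}{\uu},
\end{equation*}
where the last inequality uses $\varphi(\xx) \in S(\xx)$. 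Sandwiching the liminf and limsup bounds shows that $\func^\prime(\xx; \uu)$ exists and equals both expressions in \eqref{eq:20190513b}, with the supremum attained at $\varphi(\xx)$. The crucial point—and what allows the continuous selection to stand in for inf-compactness—is precisely the joint continuity of $D_2 g$, which controls the behaviour along the moving pair $(z_t, \xx + rt\uu)$.
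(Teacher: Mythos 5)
Your proposal is correct and follows essentially the same route as the paper: the lower bound via a fixed maximizer $z \in S(\xx)$, the upper bound via the moving selection $\varphi(\xx+t\uu)$, and control of the resulting increment by the fundamental theorem of calculus along the segment together with joint continuity of $D_2 g$ and continuity of $\varphi$. The only (inessential) difference is organizational: the paper establishes a single uniform limit $\lim_{u\in\CC,\,u\to\xx}\bigl(\func(u)-\func(\xx)-\scal{D_2 g(\varphi(\xx),\xx)}{u-\xx}\bigr)/\nor{u-\xx}=0$, which yields continuity and all directional derivatives at once, whereas you argue direction-by-direction with a dominated-convergence passage and treat continuity separately.
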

\begin{proof}
The function $\func$ is well defined, 
since by assumption $2)$, for every $\xx \in \CC$,
$\argmax_{Z}g(\cdot, \xx) \neq \varnothing$.
Let $\xx,u \in \CC$ with $\xx \neq u$.
Then, since $\varphi(\xx) \in S(\xx)$, we have
$\func(\xx) = g(\varphi(\xx),\xx)$ and hence
\begin{multline}
\label{eq:20190511f}
    \frac{\func(u) - \func(\xx) - \scal{D_2 g(\varphi(\xx), \xx)}{u - \xx}}{\nor{u - \xx}} \\
    \geq \frac{g(\varphi(\xx), u) - g(\varphi(\xx), \xx)-\scal{D_2 g(\varphi(\xx), \xx)}{u - \xx}}{\nor{u-\xx}} \to 0,
\end{multline}
since $g(\varphi(\xx),\cdot)$ is Fr\'echet differentiable\footnote{continuously G\^ateaux differentiable function are Fr\'echet differentiable \cite[pp.34-35]{bonnans2013perturbation}.} at $\xx$ with gradient $D_2 g(\varphi(\xx), \xx)$.
Now, 
$\varphi(u) \in S(u)$, and hence 
$\func(u) = g(\varphi(u), u)$. 
Moreover, $g(\varphi(u), \xx) \leq \func(\xx)$. Therefore,
\begin{multline}
\label{eq:20190503b}
    \frac{\func(u) - \func(\xx) - \scal{D_2 g(\varphi(\xx), \xx)}{u - \xx}}{\nor{u - \xx}} \\
    \leq \frac{g(\varphi(u), u) - g(\varphi(u), \xx)-\scal{D_2 g(\varphi(\xx), \xx)}{u - \xx}}{\nor{u-\xx}}.
\end{multline}
Let $\varepsilon>0$. Since $D_2 g$ is continuous, there exists $\delta>0$ such that, for every $z^\prime\in Z$ and $\xx^\prime \in \BB$
\begin{equation}
\label{eq:20190503a}
    \nor{z^\prime - \varphi(\xx)}\leq \delta\ \text{and}\ 
    \nor{\xx^\prime - \xx} \leq \delta\ \implies\ 
    \nor{D_2 g(z^\prime,\xx^\prime) - D_2 g(\varphi(\xx), \xx)} \leq \varepsilon.
\end{equation}
Moreover, since $\varphi\colon \CC \to Z$ is continuous,
there exists $\eta>0$ such that, 
\begin{equation}
\label{eq:20190503c}
\nor{u - \xx} \leq \eta\ \implies\ \nor{\varphi(u) - \varphi(\xx)} \leq \delta.
\end{equation}
Let $z^\prime \in Z$ and suppose that $\nor{z^\prime - \varphi(\xx)} \leq \delta$ 
and $\nor{u - \xx} \leq \delta$.
Define $\psi\colon [0,1] \to \R$ such that,
for every $s \in [0,1]$, $\psi(s) = g(z^\prime,\xx+s(u - \xx))$.
Then, $\psi$ is continuously differentiable on $[0,1]$ and $\psi^\prime(s) = \scal{D_2 g(z^\prime,\xx+s (u - \xx))}{u - \xx}$. Therefore,
\begin{equation}
    \psi(1) - \psi(0) = \int_0^1 \psi^\prime(s) d s
\end{equation}
and hence, it follows from \eqref{eq:20190503a} that
\begin{align*}
    \lvert g(z^\prime,u) &- g(z^\prime, \xx) - 
     \scal{D_2 g(\varphi(\xx),\xx)}{u -\xx} \rvert \\
    &= \Big\lvert \int_0^1 \scal{D_2 g(z^\prime,\xx + s (u - \xx)) 
    - D_2 g(\varphi(\xx), \xx)}{u - \xx} ds \Big\rvert\\
    &\leq \int_0^1 \nor{D_2 g(z^\prime,\xx + s (u - \xx)) - D_2 g(\varphi(\xx),\xx)} \nor{u - \xx} ds\\
    &\leq \varepsilon \nor{u - \xx}.
\end{align*}
Therefore, we derive from \eqref{eq:20190503c}, that for every $u \in \CC$ such that 
$\nor{u - \xx} \leq \min\{\eta, \delta\}$,
we have 
\begin{equation*}
    \bigg\lvert \frac{g(\varphi(u),u) - g(\varphi(u), \xx) - 
     \scal{D_2 g(\varphi(\xx),\xx)}{u -\xx}}{\nor{u - \xx}}  \bigg\rvert \leq \varepsilon.
\end{equation*}
This shows that
\begin{equation}
\label{eq:20190511g}
    \lim_{\substack{u \in \CC\\u \to \xx}}\frac{g(\varphi(u),u) - g(\varphi(u), \xx) - 
     \scal{D_2 g(\varphi(\xx),\xx)}{u -\xx}}{\nor{u - \xx}} = 0.
\end{equation}
Then, we derive from \cref{eq:20190511f}, \eqref{eq:20190503b}, and \cref{eq:20190511g}  that
\begin{equation}
\label{eq:20190513a}
    \lim_{\substack{u \in \CC\\u \to \xx}}\frac{\func(u) - \func(\xx) - 
     \scal{D_2 g(\varphi(\xx),\xx)}{u -\xx}}{\nor{u - \xx}} = 0.
\end{equation}
This implies that $\lim_{u \in \CC,u \to \xx} \func(u)=\func(\xx)$.
Moreover, if $\uu \in \feas_{\CC}(\xx)$, there exists $\lambda>0$
and $u \in \CC$ such that $\uu = \lambda(u - \xx)$ and, for every $t \in \left]0,1/\lambda\right]$,
\begin{multline}
\frac{\func(\xx + t \uu) - \func(\xx)}{t}  - \scal{D_2 g(\varphi(\xx),\xx)}{\uu}\\
=  \nor{\lambda(u - \xx)} \frac{\func(\xx + t \lambda(u - \xx)) - \func(\xx) - \scal{D_2 g(\varphi(\xx),\xx)}{t \lambda(u - \xx)}}{\nor{t \lambda(u - \xx)}} 
\end{multline}
and the right hand side goes to zero as $t \to 0^+$, because of \cref{eq:20190513a}.
Therefore, for every $z \in S(\xx)$, since $\func(\xx) = g(z,\xx)$ and $\func(\xx + t\uu) \geq g(z, \xx + t\uu)$, we have
\begin{equation*}
\scal{D_2 g(\varphi(\xx),\xx)}{\uu} = \lim_{t\to 0^+}\frac{\func(\xx + t \uu) - \func(\xx)}{t} 
\geq \lim_{t\to 0^+}\frac{g(z, \xx + t \uu) - g(z, \xx)}{t} = \scal{D_2 g(z,\xx)}{\uu}
\end{equation*}
and \cref{eq:20190513b} follows.
\end{proof}

\section{DAD problems and convergence of Sinkhorn-Knopp algorithm}
\label{sec:PFtheory}

In this section we review the basic concepts of the nonlinear Perron-Frobenius theory \cite{lemmens2012nonlinear} which provides tools for dealing with DAD problems and ultimately to study the key properties of the Sinkhorn potentials. This analysis will allow us to provide
in \cref{subsec:lipschitz-total-variation} an upper bound estimate for the Lipschitz constant of the gradient of $\bary$, which is needed in the Frank-Wolfe algorithm. 

\subsection{Hilbert's metric and the Birkhoff-Hopf theorem}

In the rest of the appendix we will assume $\X\subset\R^d$ to be a compact set. We denote by $\cont(\X)$ the space of continuous functions on $\X$ endowed with the sup norm, namely $\supnor{f} = \sup_{x\in\X} \abs{f(x)}$. 
Let $\nneg(\X)$ be the cone of non-negative continuous functions, that is, $f\in\cont(\X)$ such that $f(x)\geq0$ for every $x\in\X$. Also, we denote by $\posi(\X)$ the set of continuous and (strictly) positive functions on $\X$, which  turns out to be the interior of $\nneg(\X)$.

Let $\dist:\X\times\X\to\R_+$ be a positive, symmetric, and continuous  function and define $\kerfun:\X\times\X\to\R_{++}$ as
\begin{equation}
\label{eq:kerfunc}
    (\forall x,y\in\X) \qquad \kerfun(x,y) = e^{ -\frac{\cost(x,y)}{\varepsilon} }.
\end{equation}
Set $\diam = \sup_{x,y\in\X}~\dist(x,y)$. Then, we have $\kerfun(x,y)\in[e^{-\diameps},1]$ for all $x,y\in\X$. 
Let $\alpha\in\prob(\X)$. The operator $\lmap_\alpha\colon \cont(\X)\to\cont(\X)$ is defined as 
\begin{equation}
\label{def:lmap}
    (\forall f\in\cont(\X))\qquad \lmap_\alpha f\colon x\mapsto \int \kerfun(x,z) f(z)~d\alpha(z).
\end{equation}
Note that $\lmap_\alpha$ is linear and continuous. In particular, since $k(x,y)\in[0,1]$ for all $x,y\in\X$, we have
\begin{equation}
\label{eq:L-maps-nneg-to-nneg}
    (\forall\,f\in\nneg(\X))\qquad\lmap_\alpha f \geq 0
\end{equation}
and
\begin{equation}
\label{eq:L_norm}
    (\forall\,f\in\cont(\X))\qquad \nor{\lmap_\alpha f}_\infty \leq \nor{f}_\infty.
\end{equation}



\paragraph{Hilbert's Metric} The cone $\nneg(\X)$ induces a partial ordering $\leq$ on $\cont(\X)$, such that
\begin{equation}
(\forall\, f,f^\prime \in \cont(\X))
\qquad f \leq f^\prime \Leftrightarrow\ f^\prime - f \in \nneg(\X).
\end{equation}

According to \cite{lemmens2012nonlinear}, we say that a function $f^\prime\in\nneg(\X)$ {\em dominates} $f\in\cont(\X)$ if there exist $t,s\in\R$ such that
\begin{equation}
	t f^\prime \leq f \leq s f^\prime.
\end{equation}
This notion induces an equivalence relation on $\nneg(\X)$, denoted $f\sim f^\prime$, meaning that $f$ dominates $f^\prime$ and $f^\prime$ dominates $f$. The corresponding equivalence classes are called {\em parts} of $\nneg(\X)$. 
Let $f,f^\prime \in \nneg(\X)$ be such that $f \sim f^\prime$.
We define
\begin{equation}
	M(f/f^\prime) = \inf\{s \in \R \,\vert\, f \leq s f^\prime \} 
	\qquad \text{and} \qquad m(f/f^\prime) 
	= \sup\{t\in \R \,\vert\, t f^\prime\leq f\}.
\end{equation}
Note that $m(f/f^\prime)\leq M(f/f^\prime)$. Moreover,
for every $f,f^\prime\in\nneg(\X)$ such that $f \sim f^\prime$, we have that $\supp(f) = \supp(f^\prime)$
 and if $f^\prime\neq 0$ (hence $f\neq 0$), then
 \begin{equation}
     M(f/f^\prime) = 
     \max_{x \in \supp(f^\prime)} \frac{f(x)}{f^\prime(x)}>0
     \quad\text{and}\quad
     m(f/f^\prime) = \min_{x \in \supp(f^\prime)} \frac{f(x)}{f^\prime(x)}>0.
 \end{equation}
The {\em Hilbert's metric} is defined as
\begin{equation}
	d_H(f,f^\prime) = \log\frac{M(f/f^\prime)}{m(f/f^\prime)},
\end{equation}
for all $f\sim f^\prime$ with $f\neq 0$ and $f^\prime\neq 0$, $d_H(0,0) = 0$ and $d_H(f,f^\prime) = +\infty$ otherwise. Direct calculation shows that \cite[Proposition~2.1.1]{lemmens2013birkhoff} 
\begin{enumerate}[{\rm (i)}]
    \item $d_H(f,f^\prime) \geq 0$ and $d_H(f,f^\prime) = d_H(f^\prime,f)$, for every $f,f^\prime\in\nneg(\X)$;
    \item $d_H(f,f^{\prime\prime}) \leq d_H(f,f^\prime) + d_H(f^\prime,f^{\prime\prime})$, for every $f,f^\prime, f^{\prime\prime}\in\nneg(\X)$ with $f \sim f^\prime$ and 
    $f^\prime \sim f^{\prime\prime}$;
    \item $d_H(s f,t f^\prime) = d_H(f,f^\prime)$, for every $f,f^\prime\in\nneg(\X)$ and $s,t>0$.
\end{enumerate}
Note that $d_H$ is not a metric on the parts of $\nneg(\X)$. However the set $\posi(\X)\cap \partial B_1(0) = \{f\in\posi(\X) ~|~ \supnor{f} = 1\}$ equipped with $d_H$ is a complete metric space \cite{nussbaum1988hilbert}. Also, $d_H$ induces a metric on the rays of the parts of $\nneg(\X)$ \cite[Lemma 2.1]{lemmens2013birkhoff}. 

We now focus on $\posi(\X)$. 
A direct consequence of Hilbert's metric properties is the following.

\begin{lemma}[Hilbert's Metric on $\posi(\X)$]\label{lem:hilbert-metric-on-the-interior}
The interior of $\nneg(\X)$ corresponds to the set of (strictly) positive functions $\posi(\X)$ and is a part of $\nneg(\X)$ with respect to the equivalence relation induced by dominance.
For every $f,f^\prime\in\posi(\X)$,
\begin{equation}
	M(f/f^\prime) = \max_{x\in\X} ~\frac{f(x)}{f^\prime(x)} \qquad m(f/f^\prime) = \min_{x\in\X}~\frac{f(x)}{f^\prime(x)},
\end{equation}
and $M(f/f^\prime) \geq m(f/f^\prime) >0$. Therefore
\begin{equation}
\label{eq:20190509b}
	d_H(f,f^\prime) = \log \max_{x,y\in\X}~\frac{f(x)~f^\prime(y)}{f(y)~f^\prime(x)}.
\end{equation}
\end{lemma}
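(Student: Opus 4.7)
The plan is to establish the five assertions of the lemma in turn; each follows from compactness of $\X$ together with continuity and strict positivity, and no deep ingredient is needed beyond the abstract definitions of $M$, $m$, and $d_H$ already recalled.

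First, to identify the interior of $\nneg(\X)$ with $\posi(\X)$, I would argue both inclusions separately. If $f \in \posi(\X)$, then by continuity and compactness $m_f := \min_{x \in \X} f(x) > 0$, so any $g \in \cont(\X)$ with $\supnor{g - f} < m_f$ satisfies $g(x) \geq f(x) - m_f \geq 0$ pointwise and hence lies in $\nneg(\X)$; this shows $\posi(\X)$ is contained in the interior. Conversely, if $f \in \nneg(\X) \setminus \posi(\X)$, there exists $x_0 \in \X$ with $f(x_0) = 0$, and then for any $\rho > 0$ the function $f - \rho \mathbf{1}$ lies within sup-norm distance $\rho$ of $f$ yet takes the value $-\rho$ at $x_0$, so $f$ is not interior.

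Next, to show that $\posi(\X)$ is a single part of $\nneg(\X)$, I would take arbitrary $f, f^\prime \in \posi(\X)$ and use compactness to obtain constants $0 < m_f \leq f \leq M_f$ and $0 < m_{f^\prime} \leq f^\prime \leq M_{f^\prime}$. Then $f \leq M_f \leq (M_f/m_{f^\prime}) f^\prime$ and $f \geq m_f \geq (m_f/M_{f^\prime}) f^\prime$, so $f^\prime$ dominates $f$; by symmetry $f$ dominates $f^\prime$, so $f \sim f^\prime$.

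For the explicit formulas for $M$ and $m$, the key observation is that for $f^\prime \in \posi(\X)$ the inequality $f \leq s f^\prime$ holds pointwise if and only if $s \geq f(x)/f^\prime(x)$ for every $x \in \X$; taking the infimum over such $s$ yields $M(f/f^\prime) = \sup_{x \in \X} f(x)/f^\prime(x)$, and by continuity of the ratio $f/f^\prime$ on the compact set $\X$ the supremum is attained, giving the max form. The argument for $m(f/f^\prime)$ is analogous with reversed inequalities, yielding $m(f/f^\prime) = \min_{x \in \X} f(x)/f^\prime(x)$. Positivity $m(f/f^\prime) > 0$ is immediate from the positivity of the ratio on the compact set, and $M \geq m$ follows from $\max \geq \min$. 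Substituting these formulas into the definition of $d_H$ and using $\min_y f(y)/f^\prime(y) = 1/\max_y f^\prime(y)/f(y)$ gives
\[
d_H(f,f^\prime) = \log \frac{\max_{x} f(x)/f^\prime(x)}{\min_{y} f(y)/f^\prime(y)} = \log \Bigl( \max_{x} \frac{f(x)}{f^\prime(x)} \cdot \max_{y} \frac{f^\prime(y)}{f(y)} \Bigr) = \log \max_{x,y \in \X} \frac{f(x)\, f^\prime(y)}{f(y)\, f^\prime(x)},
\]
which is the claimed expression. There is no serious obstacle in the proof: the lemma is a bookkeeping translation of the cone-theoretic definitions into the concrete framework of strictly positive continuous functions on a compact domain, with compactness used exclusively to pass from suprema and infima to attained maxima and minima.
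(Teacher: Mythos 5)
Your proof is correct and follows essentially the same route as the paper's: everything is a direct calculation from the definitions of $M$, $m$ and $d_H$, with compactness of $\X$ used to identify the interior of $\nneg(\X)$ and to turn the supremum/infimum of the continuous ratio $f/f^\prime$ into attained maxima/minima, and the final identity follows from $d_H=\log(M/m)$ together with $\bigl(\min_{y} f(y)/f^\prime(y)\bigr)^{-1}=\max_{y} f^\prime(y)/f(y)$, exactly as in the paper. The one genuine difference is the ``part'' claim: the paper simply cites Lemma~1.2.2 of Lemmens--Nussbaum, whereas you prove mutual dominance of any two strictly positive functions directly from the bounds $0<m_f\leq f\leq M_f$, which is a perfectly good self-contained substitute. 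Be aware, though, that this only shows $\posi(\X)$ is \emph{contained in} a single equivalence class; to conclude it \emph{is} a part you should also check that the class does not extend beyond $\posi(\X)$, i.e.\ that any $g\in\nneg(\X)$ with $g\sim f$ for some $f\in\posi(\X)$ is itself strictly positive. This is a one-line addition: $g$ dominating $f$ gives $f\leq s\,g$ for some $s\in\R$, so if $g(x)=0$ at some $x$ then $f(x)\leq 0$, contradicting $f\in\posi(\X)$; hence $g\in\posi(\X)$.
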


\begin{proof}
Since $\X$ is compact it is straightfoward to see that $\posi(\X)$ is the interior of $\nneg(\X)$. By applying \cite[Lemma 1.2.2]{lemmens2012nonlinear} we have that $\posi(\X)$ is a part of $\nneg(\X)$. The characterization of $M(f/f^\prime)$ and $m(f/f^\prime)$ follow by direct calculation from the definition using the fact that $\inf_\X h = \min_\X h >0$ for any $h
\in\posi(\X)$ since $\X$ is compact. Finally, the characterization of Hilbert's metric on $\posi(\X)$ is obtained by recalling that $(\min_{x\in\X}h(x))^{-1} = \max_{x\in\X}h(x)^{-1}$ for every $h\in\posi(\X)$.
\end{proof}

\begin{lemma}[Ordering properties of $\lmap_\alpha$]\label{lem:order-preserving}
Let $\alpha\in\prob(\X)$. Then the following holds:
\begin{enumerate}[{\rm (i)}]
\item\label{lem:order-preserving_i} 
the operator $\lmap_\alpha$ is {\em order-preserving} (with respect to the cone $\nneg(\X)$),
that is,
\begin{equation}
(\forall\, f, f^\prime \in \cont(\X))
\qquad f\leq f^\prime\ \Rightarrow\ 
\lmap_\alpha f \leq \lmap_\alpha f^\prime;
\end{equation}
\item\label{lem:order-preserving_ii} 
$\lmap_\alpha$ maps parts of $\nneg(\X)$ to parts of $\nneg(\X)$, that is,
\begin{equation}
(\forall\, f, f^\prime \in \cont(\X))
\qquad f\sim f^\prime\ \Rightarrow\ \lmap_\alpha f \sim \lmap_\alpha f^\prime;
\end{equation}
\item\label{lem:order-preserving_iii} 
$\lmap_\alpha(\nneg(\X)) \subset \posi(\X)\cup\{0\}$ and $\lmap_\alpha(\posi(\X))\subset\posi(\X)$.
\end{enumerate}
\end{lemma}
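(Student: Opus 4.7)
The plan is to prove the three items in order, exploiting only linearity of $\lmap_\alpha$, the uniform lower bound $\kerfun(x,z) \geq e^{-\diameps} > 0$ which follows from the definition in \cref{eq:kerfunc} on the compact domain, and the fact that $\alpha \in \prob(\X)$ has total mass one.

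For part \cref{lem:order-preserving_i}, I would note that $\lmap_\alpha$ is linear, so it suffices to observe that $\lmap_\alpha$ maps $\nneg(\X)$ into itself: for $g \in \nneg(\X)$ and any $x \in \X$, the integrand $\kerfun(x,z) g(z)$ is pointwise non-negative (since $\kerfun > 0$), hence $(\lmap_\alpha g)(x) \geq 0$. This is exactly the content of \cref{eq:L-maps-nneg-to-nneg}. Applying this to $g = f^\prime - f$ yields the order-preserving property.

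For part \cref{lem:order-preserving_ii}, if $f \sim f^\prime$, then by definition there exist $t,s \in \R$ with $t f^\prime \leq f \leq s f^\prime$. Applying $\lmap_\alpha$ and invoking \cref{lem:order-preserving_i} together with linearity gives $t \lmap_\alpha f^\prime \leq \lmap_\alpha f \leq s \lmap_\alpha f^\prime$, which is exactly the statement that $\lmap_\alpha f^\prime$ dominates $\lmap_\alpha f$; the symmetric inequality is obtained in the same way, so $\lmap_\alpha f \sim \lmap_\alpha f^\prime$.

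Part \cref{lem:order-preserving_iii} is where the strict positivity of $\kerfun$ is essential. Given $f \in \nneg(\X)$, set $c = \int f \, d\alpha \geq 0$. Using the uniform lower bound, for every $x \in \X$,
\begin{equation}
(\lmap_\alpha f)(x) = \int \kerfun(x,z) f(z) \, d\alpha(z) \geq e^{-\diameps} \int f(z)\, d\alpha(z) = e^{-\diameps} c.
\end{equation}
If $c > 0$, this forces $\lmap_\alpha f$ to be bounded below by a positive constant on all of $\X$, so $\lmap_\alpha f \in \posi(\X)$. If $c = 0$, then $f = 0$ holds $\alpha$-a.e., so $\kerfun(x,\cdot) f(\cdot)$ vanishes $\alpha$-a.e.\ for every $x$, giving $\lmap_\alpha f \equiv 0$. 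Thus $\lmap_\alpha(\nneg(\X)) \subset \posi(\X) \cup \{0\}$. For $f \in \posi(\X)$, compactness of $\X$ and continuity of $f$ yield $\min_\X f > 0$; since $\alpha(\X) = 1$, we get $c \geq \min_\X f > 0$, placing us in the first case and concluding $\lmap_\alpha f \in \posi(\X)$.

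No real obstacle is expected: the argument is essentially a bookkeeping exercise, and the only place where care is needed is the dichotomy in \cref{lem:order-preserving_iii}, which is resolved cleanly by the uniform lower bound $e^{-\diameps}$ on $\kerfun$ together with compactness of $\X$ and the unit mass of $\alpha$.
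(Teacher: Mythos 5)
Your proposal is correct and follows essentially the same route as the paper's proof: linearity plus positivity of the kernel for (i), applying the order-preserving linear map to the dominance inequalities for (ii), and the uniform lower bound $\kerfun \geq e^{-\diameps}$ together with the dichotomy on $\int f\,d\alpha$ for (iii). The only cosmetic difference is that in the case $\int f\,d\alpha=0$ you argue via $f=0$ $\alpha$-a.e., whereas the paper uses the upper bound $(\lmap_\alpha f)(x)\leq \int f\,d\alpha$ coming from $\kerfun\leq 1$; both settle that case equally well.
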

\begin{proof}
\ref{lem:order-preserving_i}:
Let $f,f^\prime \in \cont(\X)$ with
$f\leq f^\prime$. Then $f^\prime-f\in\nneg(\X)$ and by linearity of $\lmap_\alpha$ combined with \cref{eq:L-maps-nneg-to-nneg}, we have 
$\lmap_\alpha f^\prime - \lmap_\alpha f = \lmap_\alpha(f-f^\prime) \geq 0$.

\ref{lem:order-preserving_ii}:
Let $f,f^\prime \in \nneg(\X)$ with
$f\sim f^\prime$. Then there exist
 $t,s\in\R$ and $s^\prime,t^\prime \in \R$ such that
 $t f^\prime \leq f \leq s f^\prime$
 and $t^\prime f \leq f^\prime \leq s^\prime f$.
Since $L_\alpha$ is linear and order-preserving, we have
 $\lmap_\alpha f \sim \lmap_\alpha f^\prime$.
 
 \ref{lem:order-preserving_iii}:
 Let $f\in\nneg(\X)$. By \cref{eq:L-maps-nneg-to-nneg} and 
\cref{eq:L_norm}, for any $x\in\X$
\begin{equation}\label{eq:max-upper-bound-L-alpha}
0 \leq (\lmap_\alpha f)(x) \leq \supnor{\lmap_\alpha f} \leq \int f(x)~ d\alpha(x) = \nor{f}_{L^1(\X,\alpha)}. 
\end{equation}
Moreover,
\begin{equation}\label{eq:min-lower-bound-L-alpha}
	 \lmap_\alpha f(x) = \int k(y,x) f(y)~d\alpha(y) \geq e^{-\diameps}~\nor{f}_{L^1(\X,\alpha)}.
\end{equation}
Therefore, if $\nor{f}_{L^1(\X,\alpha)} = 0$ then by \cref{eq:max-upper-bound-L-alpha} $\lmap_\alpha f = 0$ while, if $\nor{f}_{L^1(\X,\alpha)}>0$ then by \cref{eq:min-lower-bound-L-alpha} $\lmap_\alpha f\in\posi(\X)$. We conclude that the operator $\lmap_\alpha$ maps $\nneg(\X)$ in $\posi(\X)\cup\{0\}$. Moreover, $\lmap_\alpha(\posi(\X))\subset\posi(\X)$, since for every $f\in\posi(\X)$ we have $\nor{f}_{L^1(\X,\alpha)}\geq \min_{\X} f>0$.
\end{proof}

Following \cite[Section~A.4]{lemmens2012nonlinear} we now introduce a quantity which plays a central role in our analysis. 
\begin{definition}[Projective Diameter of $\lmap_\alpha$] 
Let $\alpha\in\prob(\X)$. The {\em projective diameter of $\lmap_\alpha$} is
\begin{equation}\label{eq:projective-diameter}
	\Delta(\lmap_\alpha) = \sup\{d_H(\lmap_\alpha f, \lmap_\alpha f^\prime) ~|~ f,f^\prime\in\nneg(\X),~ \lmap_\alpha f\sim \lmap_\alpha f^\prime\}.
\end{equation}
\end{definition}


The following result shows that it is possible to find a finite upper bound on $\Delta(\lmap_\alpha)$ that is independent on $\alpha$. 

\begin{proposition}[Upper bound on the Projective Diameter of $\lmap_\alpha$]\label{rem:projective-diameter-upper-bound}
Let $\alpha\in\prob(\X)$. Then
\begin{equation}
    \Delta(\lmap_\alpha) \leq 2\diameps.
\end{equation}
\end{proposition}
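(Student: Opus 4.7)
The plan is to unwind the definition of $\Delta(\lmap_\alpha)$ to a pointwise estimate on ratios of integrals, and then exploit the uniform upper and lower bounds on the kernel $\kerfun(x,y)=e^{-\cost(x,y)/\eps}$ coming from the compactness of $\dom$.

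First, I would reduce to the strictly positive case. Fix $f,f'\in\nneg(\X)$ with $\lmap_\alpha f\sim\lmap_\alpha f'$; in particular neither image is the zero function, so by \cref{lem:order-preserving}\ref{lem:order-preserving_iii} both $\lmap_\alpha f$ and $\lmap_\alpha f'$ lie in $\posi(\X)$. Hence by \cref{lem:hilbert-metric-on-the-interior},
\begin{equation*}
d_H(\lmap_\alpha f,\lmap_\alpha f') \;=\; \log\max_{x,y\in\X}\frac{\lmap_\alpha f(x)\,\lmap_\alpha f'(y)}{\lmap_\alpha f(y)\,\lmap_\alpha f'(x)}.
\end{equation*}
So it suffices to bound this supremum by $e^{2\diameps}$, uniformly in the admissible pair $(f,f')$.

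The key pointwise estimate is that for every $x,y,z\in\X$,
\begin{equation*}
\frac{\kerfun(x,z)}{\kerfun(y,z)} \;=\; e^{(\cost(y,z)-\cost(x,z))/\eps} \;\in\; [e^{-\diameps},\,e^{\diameps}],
\end{equation*}
since $\cost(x,z),\cost(y,z)\in[0,\diam]$ implies $|\cost(x,z)-\cost(y,z)|\leq\diam$. Because $f\in\nneg(\X)$ and $\alpha\in\prob(\X)$, the integrand in $\lmap_\alpha f$ is nonnegative, so the inequality $\kerfun(x,z)\leq e^{\diameps}\kerfun(y,z)$ integrates to
\begin{equation*}
\lmap_\alpha f(x)\;=\;\int \kerfun(x,z)\,f(z)\,d\alpha(z)\;\leq\; e^{\diameps}\int \kerfun(y,z)\,f(z)\,d\alpha(z)\;=\; e^{\diameps}\,\lmap_\alpha f(y),
\end{equation*}
and symmetrically $\lmap_\alpha f(x)\geq e^{-\diameps}\lmap_\alpha f(y)$. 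The same bounds hold for $f'$.

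Combining these, for every $x,y\in\X$,
\begin{equation*}
\frac{\lmap_\alpha f(x)\,\lmap_\alpha f'(y)}{\lmap_\alpha f(y)\,\lmap_\alpha f'(x)} \;=\; \frac{\lmap_\alpha f(x)}{\lmap_\alpha f(y)}\cdot\frac{\lmap_\alpha f'(y)}{\lmap_\alpha f'(x)} \;\leq\; e^{\diameps}\cdot e^{\diameps} \;=\; e^{2\diameps}.
\end{equation*}
Taking the logarithm and the supremum over $x,y$ gives $d_H(\lmap_\alpha f,\lmap_\alpha f')\leq 2\diameps$, and then taking the supremum over admissible $(f,f')$ yields the claim. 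There is no real obstacle here: the argument is just Birkhoff's classical projective diameter bound specialized to the integral operator with strictly positive continuous kernel, and the uniform two-sided bound $\kerfun\in[e^{-\diameps},1]$ makes the estimate independent of $\alpha$.
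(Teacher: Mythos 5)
Your proof is correct and follows essentially the same route as the paper's: both unwind the Hilbert-metric expression for $d_H(\lmap_\alpha f,\lmap_\alpha f')$ and reduce it to the uniform kernel bound $\kerfun(x,y)\in[e^{-\diameps},1]$ (the paper bounds the four-point cross-ratio $\kerfun(x,z)\kerfun(y,w)/(\kerfun(y,z)\kerfun(x,w))$ jointly, while you bound the two factors $\lmap_\alpha f(x)/\lmap_\alpha f(y)$ and $\lmap_\alpha f'(y)/\lmap_\alpha f'(x)$ separately --- the same estimate). One tiny imprecision: $\lmap_\alpha f\sim\lmap_\alpha f'$ does not imply both images are nonzero, since both may vanish (the paper dispatches this case via $d_H(0,0)=0$), but this does not affect your bound.
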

\begin{proof}
Let $f,f^\prime \in \nneg(\X)$.
Recall that $\lmap_\alpha$ maps $\nneg(\X)$ into $\posi(\X)\cup\{0\}$ (see  \cref{lem:order-preserving}\cref{lem:order-preserving_iii}) and that $\{0\}$ and $\posi(\X)$ are two parts of $\nneg(\X)$ with respect to the relation $\sim$ (see \cite[Lemma 1.2.2]{lemmens2012nonlinear}). Now, if $\lmap_\alpha f = \lmap_\alpha f^\prime = 0$, then we have $d_H(\lmap_\alpha f, \lmap_\alpha f^\prime) = d_H(0,0) = 0$. Therefore it is sufficient to study the case that $\lmap_\alpha f,\lmap_\alpha f^\prime\in\posi(\X)$. 
Following the characterization of Hilbert's metric on $\posi(\X)$ given in \cref{lem:hilbert-metric-on-the-interior},
we have
\begin{align*}
d_H(\lmap_\alpha f,\lmap_\alpha f^\prime) & = \log~ \max_{x,y\in\X}~ \frac{(\lmap_\alpha f)(x)~(\lmap_\alpha f^\prime)(y)}{(\lmap_\alpha f)(y)~(\lmap_\alpha f^\prime)(x)} \\
& = \log~ \max_{x,y\in\X}~\frac{\int \kerfun(x,z)f(z)~d\alpha(z)~\int \kerfun(y,w)f^\prime(w)~d\alpha(w)}{\int \kerfun(y,z)f(z)~d\alpha(z)~\int \kerfun(x,w)f^\prime(w)~d\alpha(w)} \\
& = \log~ \max_{x,y\in\X}~\frac{\int \kerfun(x,z)\kerfun(y,w)~f(z)f^\prime(w)~d\alpha(z)d\alpha(w)}{\int \kerfun(y,z)\kerfun(x,w)~f(z)f^\prime(w)~d\alpha(z)d\alpha(w)} \\
& = \log~ \max_{x,y\in\X}~\frac{\int \frac{\kerfun(x,z)\kerfun(y,w)}{\kerfun(y,z)\kerfun(x,w)}~\kerfun(y,z)\kerfun(x,w)~f(z)f^\prime(w)~d\alpha(z)d\alpha(w)}{\int \kerfun(y,z)\kerfun(x,w)~f(z)f^\prime(w)~d\alpha(z)d\alpha(w)} \\
& \leq \log~ \max_{x,y,z,w\in\X}~\frac{\kerfun(x,z)\kerfun(y,w)}{\kerfun(y,z)\kerfun(x,w)}.
\end{align*}
Since,  for every $x,y\in\X$, $\dist(x,y)\in[0,\diam]$, we have $k(x,y)\in[e^{-\diameps},1]$ and hence
\begin{equation*}
d_H(\lmap_\alpha f, \lmap_\alpha f^\prime) \leq 2\diameps.
\qedhere
\end{equation*}
\end{proof}

A consequence of \cref{rem:projective-diameter-upper-bound} is a special case of Birkhoff-Hopf theorem. 

\begin{theorem}[Birkhoff-Hopf Theorem]\label{thm:birkhoff-hopf}
Let $\lambda = \frac{e^{\diameps}-1}{e^{\diameps}+1}$ and  $\alpha\in\prob(\X)$. Then, for every $f,f^\prime\in\nneg(\X)$ such that $f\sim f^\prime$, we have
\begin{equation}
\label{eq:contraction1}
	d_H(\lmap_\alpha f,\lmap_\alpha f^\prime) \leq \lambda~d_H(f,f^\prime).
\end{equation}
\end{theorem}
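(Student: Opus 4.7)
The plan is to reduce the statement to the classical Birkhoff--Hopf contraction theorem (see, e.g., \cite[Theorem~A.4.1]{lemmens2012nonlinear}), which asserts that for any linear, order-preserving map $L$ between partially ordered Banach spaces that maps the positive cone into itself and has finite projective diameter $\Delta(L)<\infty$, the following contraction estimate holds with respect to Hilbert's metric:
\begin{equation*}
    d_H(Lf,Lf^\prime)\leq \tanh\!\bigl(\Delta(L)/4\bigr)\,d_H(f,f^\prime)
\end{equation*}
for every $f,f^\prime$ in the same part of the cone. All the substantive work specific to our setting has in fact already been done; what remains is to verify the hypotheses and then translate the generic contraction ratio into the closed form given in the statement.

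First, I would check that $\lmap_\alpha$ satisfies the hypotheses of Birkhoff--Hopf. Linearity is immediate from \cref{def:lmap}. Order-preservation and preservation of parts follow respectively from \cref{lem:order-preserving}\cref{lem:order-preserving_i} and \cref{lem:order-preserving}\cref{lem:order-preserving_ii}, so whenever $f\sim f^\prime$ we also have $\lmap_\alpha f\sim \lmap_\alpha f^\prime$ and the right-hand side of \cref{eq:contraction1} makes sense. The crucial quantitative input is \cref{rem:projective-diameter-upper-bound}, which gives the uniform (and $\alpha$-independent) bound
\begin{equation*}
    \Delta(\lmap_\alpha)\leq 2\diameps <\infty.
\end{equation*}

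Second, since $\tanh$ is strictly increasing on $\R_+$, the projective diameter bound yields $\tanh(\Delta(\lmap_\alpha)/4)\leq \tanh(\diameps/2)$, and a direct algebraic simplification gives
\begin{equation*}
    \tanh\!\bigl(\diameps/2\bigr)
    =\frac{e^{\diameps/2}-e^{-\diameps/2}}{e^{\diameps/2}+e^{-\diameps/2}}
    =\frac{e^{\diameps}-1}{e^{\diameps}+1}=\lambda,
\end{equation*}
so combining these two observations with the Birkhoff--Hopf inequality produces exactly \cref{eq:contraction1}.

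The principal obstacle is external to this paper, namely the proof of the Birkhoff--Hopf theorem itself, which requires a genuinely nontrivial projective cross-ratio argument to show that $\tanh(D/4)$ is the sharp contraction ratio associated with projective diameter $D$. Since this result is a pillar of nonlinear Perron--Frobenius theory and is stated in the form we need in \cite{lemmens2012nonlinear}, I would simply cite it. All the effort specific to the entropic optimal transport setting has been concentrated upstream in \cref{rem:projective-diameter-upper-bound}, where the exponential factor $e^{-\diameps}$ appearing in the lower and upper bounds on $\kerfun$ was used to control the projective diameter; once that bound is in hand, the Birkhoff--Hopf theorem plus the elementary identity for $\tanh(\diameps/2)$ close the argument.
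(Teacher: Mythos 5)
Your proposal is correct and follows essentially the same route as the paper: the paper likewise invokes the classical Birkhoff--Hopf theorem of \cite{lemmens2012nonlinear} to get the contraction ratio $\kappa(\lmap_\alpha)=\tanh\bigl(\tfrac14\Delta(\lmap_\alpha)\bigr)$, plugs in the projective-diameter bound $\Delta(\lmap_\alpha)\leq 2\diameps$ from \cref{rem:projective-diameter-upper-bound}, and simplifies $\tanh(\diameps/2)$ to $\lambda=\frac{e^{\diameps}-1}{e^{\diameps}+1}$. Your additional verification of linearity, order preservation, and part preservation via \cref{lem:order-preserving} matches the hypotheses the paper relies on implicitly.
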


\begin{proof}
The statement is a direct application of the Birkhoff-Hopf theory \cite[Sections A.4 and A.7]{lemmens2012nonlinear}
The {\em Birkhoff contraction ratio} of $\lmap_\alpha$ is defined as
\begin{equation*}
    \kappa(\lmap_\alpha) = 
    \inf\big\{ \hat{\lambda} \in \R_+ ~ \big\vert ~ d_H(\lmap_\alpha f, \lmap_\alpha f^\prime)\leq \hat{\lambda} d_H(f,f^\prime)~~ \forall f,f^\prime\in\nneg(\X),~~ f\sim f^\prime\big\}. 
\end{equation*}
Then it follows from 
Birkhoff-Hopf theorem
 \cite[Theorem~A.4.1]{lemmens2012nonlinear} that
\begin{equation}
    \kappa(\lmap_\alpha) = \tanh\left(\frac{1}{4}\Delta(\lmap_\alpha)\right).
\end{equation}
Recalling 
the upper bound on the projective diameter f $\lmap_\alpha$ given in \cref{rem:projective-diameter-upper-bound}, we have
\begin{equation*}
    \kappa(\lmap_\alpha) \leq \tanh\left(\frac{\diam}{2\varepsilon}\right) = \frac{e^{\diameps}-1}{e^{\diameps}+1} = \lambda,
\end{equation*}
and \eqref{eq:contraction1}
follows.
\end{proof}

\subsection{DAD problems}
\label{subsec:sinkiter}

\paragraph{The map $\tmap_\alpha$} 
Let $\alpha \in \prm(\X)$.
We define
the map $\tmap_\alpha\colon\posi(\X)\to\posi(\X)$, such that
\begin{equation}
\label{eq:Ta}
(\forall\,f\in\posi(\X))\qquad
	\tmap_\alpha(f) = \inv \circ \lmap_\alpha (f) = 1/(\lmap_\alpha f),
\end{equation}
where 
$\inv\colon\posi(\X)\to\posi(\X)$ is defined by $\inv (f) = 1/f$ with
\begin{equation}
    (1/f)\colon x\mapsto \frac{1}{f(x)}.
\end{equation}
Note that $\tmap_\alpha$ is well defined since, by \cref{lem:order-preserving}\cref{lem:order-preserving_iii}, $\lmap_\alpha(\posi(\X))\subset\posi(\X)$ and,
for every $f \in \posi(\X)$, $\min_\X f>0$, being $\X$ compact.
Moreover, it follows from \cref{eq:20190509b} in \cref{lem:hilbert-metric-on-the-interior}, that, for any two
$f,f^\prime\in\posi(\X)$
\begin{equation}\label{eq:hilbert-metric-for-inversion}
    d_H(1/f,1/f^\prime) = \log~\max_{x,y\in\X} \frac{f(y)f^\prime(x)}{f(x)f^\prime(y)} = d_H(f,f^\prime).
\end{equation}

We highlight here the connection between $\rmap_\alpha$ introduced in the main text in \cref{eq:rmap} and $\tmap_\alpha$, namely for any $\alpha\in\prob(\X)$ and $u\in\cont(\X)$
\begin{equation}
    \rmap_\alpha(u) = \eps\log(\tmap_\alpha(e^{u/\eps})).
\end{equation}

\paragraph{Dual $\oteps$ Problem} 
focus on the dual problem \cref{eq:dual_pb} of the optimal transport problem with entropic regularization. Let $\alpha,\beta\in\prob(\X)$ and $\varepsilon>0$, we consider
\begin{equation}\label{eq:ot-dual-problem}
	\max_{u,v\in\cont(\X)} ~ \int u(x)~d\alpha + \int v(y)~d\beta(y) - \varepsilon\int e^{\frac{u(x) + v(y) - \dist(x,y)}{\varepsilon}}~d\alpha(x)d\beta(y). 
\end{equation}
The optimality conditions for problem \eqref{eq:ot-dual-problem}
are
\begin{equation}
\label{eq:Dopt}
\begin{cases}
 e^{-\frac{u(x)}{\varepsilon}} = {\displaystyle\int_{\X}} 
 e^{\frac{v(y) - \cost(x,y)}{\varepsilon}}\,d\beta(y)\quad(\forall\, x \in \supp(\alpha))\\[3ex]
 e^{-\frac{v(y)}{\varepsilon}} = {\displaystyle\int_{\X}} e^{\frac{u(x) - \cost(x,y)}{\varepsilon}}\,d\alpha(x)\quad(\forall\, y \in \supp(\beta)),
\end{cases}
\end{equation}
which are equivalent to
\begin{equation}
\label{eq:Dopt2}
\begin{cases}
g(y)^{-1} = {\displaystyle\int_{\X}} e^{\frac{- \cost(x,y)}{\varepsilon}} f(x)\,d\alpha(x)\quad(\forall\, y \in \supp(\beta))\\[3ex]
f(x)^{-1} = {\displaystyle\int_{\X}} e^{\frac{-\cost(x,y)}{\varepsilon}} g(y)\,d\beta(y)\quad(\forall\, x \in \supp(\alpha)),
\end{cases}
\end{equation}
where $f = e^{u/\varepsilon} \in \posi(\X)$ and $g = e^{v/\varepsilon} \in \posi(\X)$.
In the rest of the section we will consider 
the following \emph{DAD problem} \cite{lemmens2012nonlinear,nussbaum1993entropy}\begin{equation}
\label{eq:DAD}
    (\forall\,y \in \X)\ \ 
    \int_{\X} f(x) \kerfun(x,y) g(y)\,d\alpha (x)=1
    \ \ \text{and}\ \ 
    (\forall\,x \in \X)\ \ 
    \int_{\X} f(x) \kerfun(x,y) g(y)\,d\beta (y) = 1.
\end{equation}
It is clear that a solution of \cref{eq:DAD}
is also a solution of \cref{eq:Dopt2}.
However, the vice versa is in general not true, even though
there is a canonical way to build solutions of \cref{eq:DAD}
starting from solutions of \cref{eq:Dopt2}: indeed
if $(f,g)$ is a solution of \cref{eq:Dopt2},
then the functions $\bar f,\bar g\colon \X \to \R$ defined through $\bar{f}(x)^{-1} = \int_{\X} \kerfun(x,y) g(y)\,d\beta(y)$ and $\bar{g}(y)^{-1} = \int_{\X} \kerfun(x,y) g(y)\,d\beta(y)$ provide a solution of \cref{eq:DAD}.
So, the dual $\oteps$ problem \cref{eq:ot-dual-problem} admits a solution if and only if the corresponding DAD problem \cref{eq:DAD} admits a solution.
Recalling the definition of $\tmap_\alpha$ in \eqref{eq:Ta},
problem \eqref{eq:DAD} can be more compactly written as
\begin{equation}\label{eq:fixed-point-f-g}
	f = \tmap_\beta(g) \qquad \text{and} \qquad g = \tmap_\alpha(f),
\end{equation}
or equivalently, by setting $\tmap_{\beta\alpha} = \tmap_\beta\circ\tmap_\alpha$ and 
$\tmap_{\alpha\beta} = \tmap_\alpha\circ\tmap_\beta$,
\begin{equation}\label{eq:fixed-point-f-f}
f =\tmap_{\beta\alpha}(f) \qquad \text{and} \qquad g = \tmap_{\alpha\beta}(g).
\end{equation}
This shows that the solutions of the DAD problem 
\cref{eq:DAD}
are the fixed points of $\tmap_{\alpha\beta}$ and $\tmap_{\beta\alpha}$ respectively. 
Note that the operators $\tmap_{\beta \alpha}$
and $\tmap_{\alpha\beta}$
are positively homogeneous,
that is, for every $t \in \R_{++}$ and $f \in \posi(\X)$,
$\tmap_{\beta\alpha}(t f) = t\tmap_{\beta\alpha}(f)$
and
$\tmap_{\alpha\beta}(t f) = t\tmap_{\alpha\beta}(f)$.
Thus, if $f$ is a fixed point of $\tmap_{\beta\alpha}$,
then $t f$ is also a fixed point of $\tmap_{\beta\alpha}$,
for every $t>0$.
If $(f,g)$ is a solution of the DAD problem \cref{eq:DAD}, then the pair $(u,v)$, with $u = \varepsilon\log f$ and $v = \varepsilon\log g$ is a solution of \cref{eq:ot-dual-problem}. We refer 
to these solutions as {\em Sinkhorn potentials} of the pair $(\alpha,\beta)$. 
Finally, note that, it follows from \cref{eq:Dopt}
that solutions of \cref{eq:ot-dual-problem}
are determined $(\alpha,\beta)$-a.e. on $\X$ and 
up to a translation of the form $(u+t,v-t)$, for some $t\in\R$.

The following result is essentially the specialization of \cite[Thm. 7.1.4]{lemmens2012nonlinear} to the case of the map $\tmap_{\beta\alpha}$. We report the proof here for convenience and completeness.

\begin{theorem}[Hilbert's metric contraction for $\tmap_{\beta\alpha}$]\label{thm:fixed-point-sinkhorn-iteration}
The map $\tmap_{\beta\alpha}:\posi(\X)\to\posi(\X)$ has a unique fixed point up to positive scalar multiples. Moreover, let $\lambda = \frac{e^{\diameps}-1}{e^{\diameps}+1}$. Then, for every $f,f^\prime\in\posi(\X)$, 
\begin{equation}\label{eq:birkhoff-contration-varphi-a-b}
	d_H(\tmap_{\beta\alpha}(f),\tmap_{\beta\alpha}(f^\prime)) \leq \lambda^2 ~d_H(f,f^\prime).
\end{equation}
\end{theorem}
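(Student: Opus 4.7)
My plan is to factor $\tmap_{\beta\alpha}$ into pieces each of which is either a Hilbert-metric contraction (coming from Birkhoff--Hopf) or an isometry (coming from inversion), and then get existence/uniqueness by projecting onto a normalized slice and invoking Banach fixed point.

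First, I would establish the contraction bound \eqref{eq:birkhoff-contration-varphi-a-b}. Writing $\tmap_\alpha = \inv\circ \lmap_\alpha$, I can combine two facts already in the excerpt: by \cref{thm:birkhoff-hopf}, for every $g,g^\prime \in \posi(\X)$,
\begin{equation*}
d_H(\lmap_\alpha g,\lmap_\alpha g^\prime) \leq \lambda\, d_H(g,g^\prime),
\end{equation*}
and by \eqref{eq:hilbert-metric-for-inversion}, inversion is an isometry of $(\posi(\X),d_H)$. Hence $\tmap_\alpha$ is itself a $\lambda$-contraction on $\posi(\X)$ (and the same holds for $\tmap_\beta$). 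Applying this twice to $\tmap_{\beta\alpha} = \tmap_\beta\circ\tmap_\alpha$ at $f,f^\prime\in\posi(\X)$ yields the claimed $\lambda^2$-contraction:
\begin{equation*}
d_H(\tmap_{\beta\alpha}(f),\tmap_{\beta\alpha}(f^\prime)) \leq \lambda\, d_H(\tmap_\alpha(f),\tmap_\alpha(f^\prime)) \leq \lambda^2\, d_H(f,f^\prime).
\end{equation*}

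For the existence and uniqueness of the fixed point (up to positive scalars), I would first verify that $\tmap_{\beta\alpha}$ is positively homogeneous of degree $1$: since $\lmap_\alpha$ is linear, $\tmap_\alpha(tf) = t^{-1}\tmap_\alpha(f)$ for $t>0$, so composing gives $\tmap_{\beta\alpha}(tf) = t\,\tmap_{\beta\alpha}(f)$. Consequently $\tmap_{\beta\alpha}$ descends to a well-defined self-map on the normalized slice $\Sigma := \{f\in\posi(\X) : \supnor{f}=1\}$ by setting $\tilde{\tmap}_{\beta\alpha}(f) := \tmap_{\beta\alpha}(f)/\supnor{\tmap_{\beta\alpha}(f)}$. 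Because $d_H$ is scale-invariant (by the third property listed after the definition of Hilbert's metric), the $\lambda^2$-contraction property is inherited by $\tilde{\tmap}_{\beta\alpha}$:
\begin{equation*}
d_H(\tilde{\tmap}_{\beta\alpha}(f),\tilde{\tmap}_{\beta\alpha}(f^\prime)) = d_H(\tmap_{\beta\alpha}(f),\tmap_{\beta\alpha}(f^\prime)) \leq \lambda^2 d_H(f,f^\prime).
\end{equation*}
Since $(\Sigma,d_H)$ is a complete metric space (as recalled in the text, citing \cite{nussbaum1988hilbert}) and $\lambda^2<1$, the Banach fixed point theorem delivers a unique fixed point $f_*\in\Sigma$ of $\tilde{\tmap}_{\beta\alpha}$. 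Lifting back, positive homogeneity of $\tmap_{\beta\alpha}$ together with the identity $\tilde{\tmap}_{\beta\alpha}(f_*)=f_*$ shows $\tmap_{\beta\alpha}(f_*) = \supnor{\tmap_{\beta\alpha}(f_*)}\,f_*$, and a rescaling $c f_*$ with $c=\supnor{\tmap_{\beta\alpha}(f_*)}$ (if needed) gives a genuine fixed point; uniqueness on $\Sigma$ translates into uniqueness up to positive scalar multiples in $\posi(\X)$.

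The only delicate step is verifying that the normalization does not spoil anything: the contraction estimate transfers to $\tilde{\tmap}_{\beta\alpha}$ only because $d_H$ ignores positive rescalings, which is exactly the content of property (iii) of $d_H$ noted earlier. Once this is in place, the proof is essentially a Birkhoff--Hopf-plus-Banach argument, with the key technical input being the bound $\Delta(\lmap_\alpha)\leq 2\diameps$ of \cref{rem:projective-diameter-upper-bound} that underpinned \cref{thm:birkhoff-hopf}.
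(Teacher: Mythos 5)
Your contraction argument and the passage to the normalized slice are exactly the paper's route, but the final ``lifting back'' step has a genuine gap. You correctly note that $\tmap_{\beta\alpha}$ is positively homogeneous of degree $+1$, i.e.\ $\tmap_{\beta\alpha}(tf)=t\,\tmap_{\beta\alpha}(f)$ for $t>0$. Precisely because of this, the Banach fixed point of the normalized map only gives you an \emph{eigenvector}: $\tmap_{\beta\alpha}(f_*)=c\,f_*$ with $c=\supnor{\tmap_{\beta\alpha}(f_*)}>0$, and no rescaling can turn it into a fixed point unless $c=1$, since for any $s>0$
\begin{equation*}
\tmap_{\beta\alpha}(s f_*)=s\,\tmap_{\beta\alpha}(f_*)=s\,c\,f_*,
\end{equation*}
which equals $s f_*$ only if $c=1$. (A rescaling trick would work for a map homogeneous of degree $\neq 1$, but not here.) So as written, your argument proves existence of a positive eigenvector, not of a fixed point, and the existence half of the statement is not yet established.

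The paper closes exactly this hole with a separate computation showing the eigenvalue must be $1$: setting $\bar g=\tmap_\alpha(f_*)$, the definitions give $\bar g\,\lmap_\alpha f_*\equiv 1$ and $f_*\,\lmap_\beta \bar g\equiv c^{-1}$, and the Fubini-type symmetry
\begin{equation*}
\scal{\bar g\,\lmap_\alpha f_*}{\beta}=\scal{f_*\,\lmap_\beta \bar g}{\alpha}=\int_{\X\times\X} f_*(x)\,\kerfun(x,y)\,\bar g(y)\,d(\alpha\otimes\beta)(x,y)
\end{equation*}
forces $c^{-1}=1$ (both $\alpha$ and $\beta$ being probability measures), so $f_*$ is a genuine fixed point. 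Your uniqueness-up-to-scalars argument via uniqueness on the slice is fine and matches the paper; adding the eigenvalue-equals-one step would make the proof complete.
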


\begin{proof}
By combining \cref{eq:hilbert-metric-for-inversion} with \cref{thm:birkhoff-hopf} we obtain that, for any $f,f^\prime\in\posi(\X)$
\begin{equation}
	d_H(\tmap_\alpha(f),\tmap_\alpha(f^\prime)) ~=~ d_H(1/(\lmap_\alpha f),1/(\lmap_\alpha f^\prime)) ~=~  d_H(\lmap_\alpha f,\lmap_\alpha f^\prime) ~\leq~ \lambda~ d_H(f,f^\prime).
\end{equation}
Since the same holds for $\tmap_\beta$ then \cref{eq:birkhoff-contration-varphi-a-b} is satisfied. 
Now, let $C = \posi(\X) \cap \partial B_1(0)$. Let $\overline\tmap_{\beta\alpha}\colon C\to C$ be the map
such that
\begin{equation}
	(\forall f\in C) \qquad \overline\tmap_{\beta\alpha}(f) = \frac{\tmap_{\beta\alpha} (f)}{\supnor{\tmap_{\beta\alpha} (f)}}. 
\end{equation}
Then, since $d_H(s f,t f^\prime) = d_H(f,f^\prime)$ for any $s,t>0$ and $f,f^\prime\in C$, we have
\begin{equation}
	d_H(\overline\tmap_{\beta\alpha}(f),\overline\tmap_{\beta\alpha}(f^\prime)) = d_H(\tmap_{\beta\alpha}(f),\tmap_{\beta\alpha}(f^\prime)) \leq \lambda^2 ~ d_H(f,f^\prime). 
\end{equation}
Since $(C,d_H)$ is a complete metric space \cite[Theorem~1.2]{nussbaum1988hilbert} and $\overline\tmap_{\beta\alpha}$ is a contraction, we can apply Banach's contraction theorem and conclude that there exists a unique fixed point of $\overline\tmap_{\beta\alpha}$, namely a function  $\bar f\in C$ such that 
\begin{equation}
	\bar f = \overline\tmap_{\beta\alpha}(\bar f) = \frac{\tmap_{\beta\alpha}(\bar f)}{\supnor{\tmap_{\beta\alpha}(\bar f)}}.
\end{equation}
Hence $\bar f$ is an eigenvector for $\tmap_{\beta\alpha}$ with eigenvalue 
$t=\lVert \tmap_{\beta\alpha}(\bar{f}) \rVert_{\infty}>0$. Now, we note that
\begin{equation}
\label{eq:20190517a}
 (\forall\,f,g \in \posi(\X))\quad  
 \scal {g \lmap_\alpha f}{\beta}
= \scal{f \lmap_\beta g}{\alpha}
= \int_{\X\times\X} f(x) k(x,y) g(y) d (\alpha\otimes\beta)(x,y).
\end{equation}
Set $\bar{g} = \tmap_{\alpha} (\bar{f})$,
so that $\tmap_\beta (\bar{g}) = t \bar{f}$.
Then, recalling the definitions of
$\tmap_\alpha$ and $\tmap_\beta$,
we have
$\bar{g} \lmap_\alpha \bar{f} \equiv 1$ and 
$t^{-1} \equiv \bar{f} \lmap_\beta \bar{g}$. Hence
$t^{-1} = \scal{\bar{f} \lmap_\beta \bar{g}}{\alpha} =  
\scal{\bar{g} \lmap_\alpha \bar{f}}{\beta} = 1$. Therefore $\bar{f}$ is a fixed point 
of $\tmap_{\beta\alpha}$.
Finally,
if $\bar f^\prime\in\posi(\X)$ is a fixed point of $\tmap_{\beta\alpha}$, then,
since $\tmap_{\beta\alpha}$
is positively homogeneous,
we have
\begin{equation}
    \overline\tmap_{\beta\alpha}(\bar{f}^\prime/\lVert \bar{f}^\prime \rVert_{\infty}) = 
    \dfrac{\tmap_{\beta\alpha}(\bar{f}^\prime/\lVert \bar{f}^\prime \rVert_{\infty})}
    {\lVert \tmap_{\beta\alpha}(f^\prime/\lVert \bar{f}^\prime \rVert_{\infty}) \rVert_{\infty}} =
    \dfrac{\tmap_{\beta\alpha}(\bar{f}^\prime)}
    {\lVert \tmap_{\beta\alpha}(\bar{f}^\prime)\rVert_{\infty}} = \dfrac{\bar{f}^\prime}{\lVert \bar{f}^\prime \rVert_{\infty}},
\end{equation}
that is,
$\bar f^\prime/\supnor{\bar f^\prime}$ is a fixed point of $\overline\tmap_{\beta\alpha}$. 
Thus, $\bar f^\prime/\supnor{\bar f^\prime} = \bar f$ and hence $\bar f^\prime$ is a multiple of $\bar f$.
\end{proof}

\begin{corollary}[Existence and uniqueness of Sinkhorn potentials]
Let $\alpha, \beta \in \prm(\X)$. Then,
the DAD problem \cref{eq:DAD} admits a solution $(f,g)$ and every other solution is of type $(t f, t^{-1} g)$, for some $t>0$.
Moreover, there exists a pair $(u,v) \in \cont(\X)^2$ of Sinkhorn potentials and every other pair of Sinkhorn potentials 
 is of type $(u + s, v- s)$, 
for some $s \in \R$.
In particular, for every $x_o \in \X$,
there exist a unique pair $(u,v)$
of Sinkhorn potentials such that $u(x_0) = 0$.
\end{corollary}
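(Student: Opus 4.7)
The plan is to derive this corollary as a direct consequence of the preceding theorem on the contraction properties of $\tmap_{\beta\alpha}$, together with the log-transform $u = \varepsilon \log f$, $v = \varepsilon \log g$. Since the substantive analytical work is already contained in that theorem (existence of a fixed point of $\tmap_{\beta\alpha}$ in $\posi(\X)$, uniqueness up to positive scalars), what remains is essentially a translation between the DAD formulation \cref{eq:DAD} and the dual $\oteps$ formulation \cref{eq:ot-dual-problem}, plus a careful tracking of the one-parameter family of solutions.

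First, for existence of a DAD solution: invoke the preceding theorem to obtain $f \in \posi(\X)$ such that $f = \tmap_{\beta\alpha}(f)$, and set $g := \tmap_\alpha(f) \in \posi(\X)$ (which is well-defined by \cref{lem:order-preserving}\cref{lem:order-preserving_iii} since $\lmap_\alpha$ maps $\posi(\X)$ into itself). Then by construction $g = \tmap_\alpha(f)$, and $f = \tmap_{\beta\alpha}(f) = \tmap_\beta(g)$, so \cref{eq:fixed-point-f-g} holds, which is precisely \cref{eq:DAD}.

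Second, for the uniqueness up to rescaling: given another DAD solution $(f',g')$, the identities $f' = \tmap_\beta(g')$ and $g' = \tmap_\alpha(f')$ imply that $f'$ is also a fixed point of $\tmap_{\beta\alpha}$; by the preceding theorem $f' = t f$ for some $t>0$. Since $\lmap_\alpha$ is linear, $\tmap_\alpha = 1/\lmap_\alpha(\cdot)$ is positively homogeneous of degree $-1$, hence $g' = \tmap_\alpha(tf) = t^{-1}\tmap_\alpha(f) = t^{-1} g$. Setting $u = \varepsilon \log f$ and $v = \varepsilon \log g$ (both continuous on $\X$ since $f,g \in \posi(\X)$ and $\X$ is compact, hence $f,g$ are bounded away from $0$) yields a Sinkhorn potential pair, and the rescaling $(tf, t^{-1}g)$ translates into the shift $(u+s, v-s)$ with $s = \varepsilon \log t$. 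Conversely, any shift $(u+s, v-s)$ of a Sinkhorn potential pair corresponds via the exponential map to a rescaling $(tf, t^{-1}g)$ with $t = e^{s/\varepsilon}$, so the two parametrizations are in bijection.

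Finally, for the normalization at the anchor $x_o$: given any pair $(u,v)$ of Sinkhorn potentials, the shift $s = -u(x_o)$ produces a new pair $(u',v') = (u-u(x_o), v+u(x_o))$ with $u'(x_o) = 0$; if $(u'',v'')$ is any other Sinkhorn potential pair with $u''(x_o) = 0$, then $u'' = u' + s'$ for some $s'\in\R$, and evaluating at $x_o$ forces $s' = 0$, hence $(u'',v'') = (u',v')$. I do not anticipate a real obstacle — the only subtle point is verifying that DAD solutions and dual $\oteps$ solutions are in exact correspondence (the ``canonical way to build DAD solutions'' discussed just before \cref{eq:fixed-point-f-g} shows one direction, and the other is immediate), and that the positive homogeneity of $\tmap_\alpha$ has degree $-1$ rather than $+1$.
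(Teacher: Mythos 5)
Your proposal is correct and follows essentially the same route as the paper, whose proof simply cites \cref{thm:fixed-point-sinkhorn-iteration} together with the discussion after \cref{eq:fixed-point-f-f}; you merely spell out the details (setting $g=\tmap_\alpha(f)$, the degree $-1$ homogeneity of $\tmap_\alpha$ giving $(tf,t^{-1}g)$, the bijection $s=\varepsilon\log t$ with the potentials, and the anchor normalization), all of which check out.
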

\begin{proof}
It follows from \cref{thm:fixed-point-sinkhorn-iteration} and the discussion after \cref{eq:fixed-point-f-f}.
\end{proof}

\paragraph{Bounding $(f,g)$ point-wise} We conclude this section by providing additional properties of the solutions $(f,g)$ of the DAD problem \cref{eq:fixed-point-f-g}. In particular, we show that there exists one such solution for which it is possible to provide a point-wise upper and lower bound independent on $\alpha$ and $\beta$. 

\begin{remark}
Let $f \in \posi(\X)$ and set $g=\tmap_\alpha (f)$. Then,
recalling \cref{eq:Ta} and \eqref{eq:L_norm}, we have that, for every $x \in \X$,
\begin{equation*}
	1= g(x) (\lmap_\alpha~ f)(x) 
	\leq g(x) \supnor{\lmap_\alpha f} \leq g(x) \supnor{f}
\end{equation*}
and 
\begin{equation*}
	1= g(x) (\lmap_\alpha~ f)(x)  \geq g(x)(\min_{\X} f) \int \kerfun(x,z)~d\alpha(z) \geq g(x)(\min_{\X} f)e^{-\diameps}.
\end{equation*}
Therefore,
\begin{equation}
\label{eq:20190430a}
    \min_{\X} g \geq \frac{1}{\supnor{f}}
    \quad\text{and}\quad
    \supnor{g} \leq \frac{e^{\diameps}}{\min_{\X} f}.
\end{equation}
\end{remark}

\begin{lemma}(Auxiliary Cone)
\label{lem:auxiliary-cone}
Consider the set
\begin{equation}
\label{eq:defconeK}
	K = \{f \in \nneg(\X) ~ | ~ f(x) \leq f(y)~ e^{\diameps} ~~ \forall x,y\in\X \}. 
\end{equation}
Let $\alpha \in \prm(\X)$. Then the following holds.
\begin{enumerate}[{\rm (i)}]
\item\label{lem:auxiliary-cone_i} 
$K$ is a closed convex cone and $K\subset\posi(\X)\cup\{0\}$;
\item\label{lem:auxiliary-cone_ii} $\lmap_\alpha(\cont_+(\X)) \subset K$;
\item\label{lem:auxiliary-cone_iii} $\mathsf{R}(K) \subset K$;
\item\label{lem:auxiliary-cone_iv}  
$\ran(\tmap_{\alpha}) \subset K$; 
\item\label{lem:auxiliary-cone_v} If $f \in K$ and $g=\tmap_\alpha f$,
then $g \in K$ and $1\leq (\min_{\X}g)\supnor{f} \leq \supnor{g} \supnor{f} \leq e^{2 \diameps}$.
\item\label{lem:auxiliary-cone_vi}
If $f \in K$ is such that $f(x_o)=1$ for some $x_o \in \X$, then $\nor{\varepsilon \log f}_{\infty} \leq \diam$.
\end{enumerate}
\end{lemma}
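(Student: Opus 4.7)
The plan is to verify the six claims in order, each being a short direct computation from the definition of $K$ in \eqref{eq:defconeK} and the properties of $\lmap_\alpha$, $\inv$, and $\tmap_\alpha$ already established.

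For \cref{lem:auxiliary-cone_i}, closedness and the cone/convexity properties are immediate since $K$ is defined by the family of pointwise-linear inequalities $f(x) - e^{\diameps} f(y) \leq 0$, which are preserved under pointwise limits, positive scalar multiplication, and convex combination. For the inclusion $K \subset \posi(\X)\cup\{0\}$, I would observe that if some $f \in K$ has $f(x_0)>0$ at a single point, then the defining inequality forces $f(y) \geq e^{-\diameps} f(x_0)>0$ for every $y \in \X$; otherwise $f \equiv 0$.

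For \cref{lem:auxiliary-cone_ii}, the key point is that the kernel $\kerfun$ from \eqref{eq:kerfunc} satisfies $\kerfun(x,z) \leq e^{\diameps}\,\kerfun(y,z)$ for every $x,y,z \in \X$, because both numerator and denominator lie in $[e^{-\diameps},1]$. Integrating against $f\,d\alpha$ with $f \in \nneg(\X)$ gives $(\lmap_\alpha f)(x) \leq e^{\diameps}(\lmap_\alpha f)(y)$. For \cref{lem:auxiliary-cone_iii}, given $f \in K\setminus\{0\}$ (so $f \in \posi(\X)$ by \cref{lem:auxiliary-cone_i}), inverting the inequality $f(y)\leq e^{\diameps}f(x)$ yields $(1/f)(x) \leq e^{\diameps}(1/f)(y)$, i.e.\ $\inv f \in K$. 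Part \cref{lem:auxiliary-cone_iv} is then immediate from the decomposition $\tmap_\alpha = \inv\circ\lmap_\alpha$ together with \cref{lem:auxiliary-cone_ii} and \cref{lem:auxiliary-cone_iii}.

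For \cref{lem:auxiliary-cone_v}, the inclusion $g \in K$ follows from \cref{lem:auxiliary-cone_iv}. The middle inequality is trivial. For the lower bound, I would use $1 = g(x)(\lmap_\alpha f)(x) \leq g(x)\,\supnor{f}$ pointwise (via \eqref{eq:L_norm}) and take the infimum in $x$. For the upper bound, I would invoke the estimate $\supnor{g} \leq e^{\diameps}/\min_{\X} f$ from \eqref{eq:20190430a} and then use that $f \in K$ gives $\supnor{f}/\min_{\X} f \leq e^{\diameps}$, so that $\supnor{g}\,\supnor{f} \leq e^{2\diameps}$. Finally, for \cref{lem:auxiliary-cone_vi}, normalizing at $f(x_o)=1$ and applying the $K$-inequality in both directions yields $e^{-\diameps} \leq f(x) \leq e^{\diameps}$ for every $x \in \X$; taking $\varepsilon\log$ gives $\nor{\varepsilon\log f}_\infty \leq \diam$.

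None of the steps appears to present a real obstacle; the only mild subtlety is in \cref{lem:auxiliary-cone_v}, where one must remember to chain the cone inequality $\supnor{f}\leq e^{\diameps}\min_{\X} f$ with the pre-existing bound \eqref{eq:20190430a} to get the uniform constant $e^{2\diameps}$ independent of $\alpha$ and $f$. This uniformity is what will later be used to control Sinkhorn potentials pointwise in \cref{subsec:lipschitz-total-variation}.
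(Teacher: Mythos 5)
Your proof is correct and follows essentially the same route as the paper: the same pointwise ratio bound on $\kerfun$ for (ii), inversion of the defining inequality for (iii), and for (v)--(vi) exactly the chaining of the bounds $\min_{\X} g \geq 1/\supnor{f}$, $\supnor{g}\leq e^{\diameps}/\min_{\X} f$ with the cone inequality $\supnor{f}\leq e^{\diameps}\min_{\X} f$ that the paper uses (citing its preceding remark rather than re-deriving it). No gaps.
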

\begin{proof}
\ref{lem:auxiliary-cone_i}:
We see that for any $f\in K$, 
\begin{equation}
\label{eq:20190430b}
    \max_{\X} f \leq (\min_{\X} f)~ e^{\diameps},
\end{equation}
so, if $f(x) = 0$ for some $x\in\X$, then $f(x) = 0$ on all $\X$. Hence $K\subseteq\posi(\X)\cup\{0\}$. 
It is straightforward to verify that $K$ is a convex cone. Moreover $K$ is also closed. Indeed if $(f_n)_{n \in \N}$ is a sequence in $K$ which converges uniformly to $f \in \cont(\X)$, then, for every
$x,y \in \X$ and every $n \in \N$, $f_n(x)\leq f_n(y) e^{\diameps}$ and hence,
letting $n\to +\infty$,
we have $f(x) \leq f(y) e^{\diameps}$.

\ref{lem:auxiliary-cone_ii}:
For every $f\in \cont_+(\X)$ and $x,y\in\X$, we have
\begin{align*}
	(\lmap_\alpha f)(x) & = \int \kerfun(x,z) f(z) ~d\alpha(z) \\
					& = \int \frac{\kerfun(x,z)}{\kerfun(y,z)} ~ \kerfun(y,z)f(z)~d\alpha(z) \\
					& \leq e^{\diameps} \int \kerfun(y,z) f(z) ~d\alpha(z) \\
					& = e^{\diameps} (\lmap_\alpha f)(y).
\end{align*}

\ref{lem:auxiliary-cone_iii}:
For every $f\in K$,
\begin{equation*}
(\forall\, x,y \in \X)\qquad
	f(x) \leq f(y)~ e^{\diameps} ~\Leftrightarrow~ \frac{1}{f(y)} \leq \frac{1}{f(x)}~ e^{\diameps}.
\end{equation*}

\ref{lem:auxiliary-cone_iv}
It follows from \cref{lem:auxiliary-cone_ii} and \cref{lem:auxiliary-cone_iii}
and the definitions of $\tmap_\alpha$.

\ref{lem:auxiliary-cone_v}:
It follows from \ref{lem:auxiliary-cone_iv}, \eqref{eq:20190430a}, 
and \eqref{eq:20190430b}.

\ref{lem:auxiliary-cone_vi}:
Let $f \in K$ be such that $f(x_o)=1$. Then $\min_{\X} f \leq 1 \leq \max_{\X} f$.
Thus, it follows from \eqref{eq:20190430b} that
\begin{equation}
\label{eq:20190516b}
\max_{\X} f \leq e^{\diameps}\quad \text{and} \quad
\min_{\X} f \geq e^{-\diameps}
\end{equation}
and hence, for every $x \in \X$, $-\diam \leq \varepsilon \log f(x) \leq \diam$.
\end{proof}

As a direct consequence of \cref{lem:auxiliary-cone} we can establish a uniform point-wise upper and lower bound for the value of DAD solutions.

\begin{corollary}
\label{cor:dad-solutions-bounded}
Let $\alpha,\beta\in\prob(\X)$.
Let $x_o \in \X$ and let $(f,g)$ be
the solution of \cref{eq:fixed-point-f-g} such that $f(x_o) = 1$. 
Then $\nor{f}_\infty \leq e^{\diameps}$ and $\nor{g}_{\infty} \leq e^{2\diameps}$. Moreover,
the corrisponding pair $(u,v)$
of Sinkhorn potentials satifies 
$\supnor{u}\leq \diam$ and $\supnor{v} \leq 2\diam$.
\end{corollary}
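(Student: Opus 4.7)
The plan is to leverage \cref{lem:auxiliary-cone} directly, specifically items \ref{lem:auxiliary-cone_iv}--\ref{lem:auxiliary-cone_vi}, together with the fixed point characterization of DAD solutions.

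First I would use the fixed point equations \cref{eq:fixed-point-f-g}: since $f = \tmap_\beta(g)$ with $g \in \posi(\X)$, item \ref{lem:auxiliary-cone_iv} of \cref{lem:auxiliary-cone} yields $f \in K$; analogously $g = \tmap_\alpha(f) \in K$. Thus both components of the DAD solution lie in the auxiliary cone $K$.

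Next, since $f \in K$ and $f(x_o) = 1$, item \ref{lem:auxiliary-cone_vi} gives $\supnor{\varepsilon \log f} \leq \diam$, which in particular implies $\supnor{f} \leq e^{\diameps}$ (this is also \eqref{eq:20190516b}). Setting $u = \varepsilon \log f$ we immediately obtain $\supnor{u} \leq \diam$, and because $u(x_o) = 0$, this is the desired Sinkhorn potential. To bound $g$, I apply item \ref{lem:auxiliary-cone_v} to the pair $(f,g=\tmap_\alpha f)$: it gives $1 \leq (\min_\X g)\supnor{f} \leq \supnor{g}\supnor{f} \leq e^{2\diameps}$. Since $\supnor{f} \geq f(x_o) = 1$, the upper bound yields $\supnor{g} \leq e^{2\diameps}$, and combining with $\supnor{f} \leq e^{\diameps}$ and the lower bound $1 \leq (\min_\X g)\supnor{f}$ we also get $\min_\X g \geq e^{-\diameps}$. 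Therefore $v := \varepsilon \log g$ satisfies $-\diam \leq v(x) \leq 2\diam$ for every $x \in \X$, hence $\supnor{v} \leq 2\diam$.

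There is no real obstacle: the work has already been done in \cref{lem:auxiliary-cone}, and the corollary is essentially a bookkeeping exercise that combines the range property $\ran(\tmap_\alpha) \subset K$ with the normalization $f(x_o) = 1$. The only point that requires a moment of care is deducing $\supnor{g} \leq e^{2\diameps}$ from item \ref{lem:auxiliary-cone_v}, which needs the observation $\supnor{f} \geq 1$ coming from the anchor condition; and the translation from multiplicative bounds on $(f,g)$ to additive bounds on $(u,v) = (\varepsilon \log f, \varepsilon \log g)$, which is immediate by monotonicity of the logarithm.
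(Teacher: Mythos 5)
Your proof is correct and follows essentially the same route as the paper: both deduce $f,g\in K$ from the fixed-point equations and \cref{lem:auxiliary-cone}\cref{lem:auxiliary-cone_iv}, then invoke \cref{lem:auxiliary-cone}\cref{lem:auxiliary-cone_vi} for the bound on $f$ (hence $u$) and the quantitative bounds of the surrounding remark/lemma for $g$. The only cosmetic difference is that you bound $\supnor{g}$ via \cref{lem:auxiliary-cone}\cref{lem:auxiliary-cone_v} together with $\supnor{f}\geq f(x_o)=1$, whereas the paper combines \cref{eq:20190430a} with \cref{eq:20190516b}; your explicit handling of $\min_\X g\geq e^{-\diameps}$ to conclude $\supnor{v}\leq 2\diam$ is a welcome completion of a step the paper leaves implicit.
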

\begin{proof}
Since $f$ and $g$ are fixed points of $\tmap_{\beta\alpha}$ and $\tmap_{\alpha\beta}$
respectively, it follows from \cref{lem:auxiliary-cone}\cref{lem:auxiliary-cone_iv} that $f,g \in K$.
Then, \cref{lem:auxiliary-cone}\cref{lem:auxiliary-cone_vi} 
yields $\norm{f}_{\infty} \leq e^{\diameps}$, whereas by the second of \cref{eq:20190430a} and \cref{eq:20190516b} we derive that  $\nor{g}_{\infty} \leq e^{2\diameps}$.
\end{proof}

\subsection{Sinkhorn-Knopp algorithm in infinite dimension}

In the context of optimal transport,  Sinkhorn-Knopp algorithm is often presented and studied in finite dimension \cite{cuturi2013sinkhorn,peyre2017computational}.
The algorithm 
originates from so called
\emph{matrix scaling problems},
also called \emph{DAD problems}, which consists in finding, for a given matrix $A$ with nonnegative entries, two diagonal matrices $D_1$, $D_2$
such that $D_1 A D_2$ is doubly stochastic \cite{sinkhorn1967}.
In our setting it is crucial to analyze the algorithm in infinite dimension.

\cref{thm:fixed-point-sinkhorn-iteration} shows that $\tmap_{\beta\alpha}$ is a contraction with respect to the Hilbert's metric. This suggests a direct approach to find the solutions of the DAD problem by adopting a fixed-point strategy, which turns out to applying the operators $\tmap_\alpha$ and $\tmap_{\beta}$ alternatively, starting from some $f^{(0)}\in\posi(\X)$. This is exactly the approach to the Sinkhorn algorithm pioneered by \cite{menon1967,Frank1989}
and further developed in an infinite dimensional setting in \cite{nussbaum1993entropy}.
In this section we review the algorithm and give the convergence properties
for the special  kernel $\kerfun$ in \cref{eq:kerfunc}. In particular we provide rate of convergence in the sup norm  $\supnor{\cdot}$.

\begin{algorithm}
\caption{Sinkhorn-Knopp algorithm (infinite dimensional case)}
\label{alg:Sinkhorn_cont}
Let $\alpha,\beta \in \prm(\X)$. Let $f^{(0)} \in \posi(\X)$ and define, 
\begin{equation*}
\begin{array}{l}
\text{for}\;\Siter=0,1,\dots\\[0.7ex]
\left\lfloor
\begin{array}{l}
g^{(\Siter+1)} = \tmap_{\alpha}(f^{(\Siter)})\\[1ex]
f^{(\Siter+1)} = \tmap_{\beta}(g^{(\Siter+1)})
\end{array}
\right.
\end{array}
\end{equation*}
\end{algorithm}


\begin{theorem}[Convergence of Sinkhorn-Knopp algorithm]
\label{thm:Sinkhornalgo}
Let $(f^{(\Siter)})_{\Siter \in \N}$ be defined according to \cref{alg:Sinkhorn_cont}.
Let $x_o \in \X$ and let $(f, g)$ be the solution of the 
DAD problem \eqref{eq:Dopt2}
such that $f(x_o) = 1$. Then,
defining, for every $\Siter \in \N$, 
$\tilde{f}^{(\Siter)} = f^{(\Siter)}/f^{(\Siter)}(x_o)$ and
$\tilde{g}^{(\Siter+1)} = g^{(\Siter+1)} f^{(\Siter)}(x_o)$, we have
\begin{equation}
\label{eq:sinkalgo2}
\begin{cases}
\lVert \log \tilde{f}^{(\Siter)} - \log f\rVert_{\infty} \leq \lambda^{2\Siter}
\bigg(\dfrac{\diam}{\varepsilon}+ \log \dfrac{\nor{f^{(0)}}_{\infty}}{\min_{\X} f^{(0)}} \bigg) \\[2.5ex]
\lVert \log \tilde{g}^{(\Siter+1)} - \log g\rVert_{\infty} \leq e^{3\diameps} \lVert \log \tilde{f}^{(\Siter)} - \log f\rVert_{\infty}.
\end{cases}
\end{equation}
Moreover, let the potentials $(u,v) = (\varepsilon\log f, \varepsilon\log g)$ and, for every $\Siter \in \N$, $(\tilde{u}^{(\Siter)}, \tilde{v}^{(\Siter)}) 
= (\varepsilon\log \tilde{f}^{(\Siter)}, \varepsilon\log \tilde{g}^{(\Siter)})$.
Then we have
\begin{equation}
\label{eq:sinkalgo3}
\lVert \tilde{u}^{(\Siter)} - u\rVert_{\infty} \leq \lambda^{2\Siter}
\bigg(\frac{\diam + \max\nolimits_{\X} u^{(0)} - \min\nolimits_{\X} u^{(0)}}{\varepsilon}\bigg).
\end{equation}
\end{theorem}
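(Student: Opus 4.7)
The plan is to exploit the Hilbert--metric contraction $\tmap_{\beta\alpha}$ from \cref{thm:fixed-point-sinkhorn-iteration} and then pass to the sup--log norm using the normalization $\tilde{f}^{(\Siter)}(x_o)=f(x_o)=1$ as an anchor. First, I would observe that a full Sinkhorn sweep in \cref{alg:Sinkhorn_cont} is exactly one application of $\tmap_{\beta\alpha}=\tmap_\beta\circ\tmap_\alpha$, so $f^{(\Siter)}=\tmap_{\beta\alpha}^{\,\Siter}(f^{(0)})$; since $f$ is a fixed point of $\tmap_{\beta\alpha}$, iterating \cref{eq:birkhoff-contration-varphi-a-b} gives $d_H(f^{(\Siter)},f)\leq \lambda^{2\Siter}\,d_H(f^{(0)},f)$. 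Scale--invariance of Hilbert's metric then yields $d_H(\tilde{f}^{(\Siter)},f)=d_H(f^{(\Siter)},f)$, which is a step I would mention explicitly.

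The crux is converting $d_H$ into $\supnor{\cdot}$. Setting $h=\log \tilde{f}^{(\Siter)}-\log f$, the normalization forces $h(x_o)=0$, so $0\in[\min_\X h,\max_\X h]$, and \cref{lem:hilbert-metric-on-the-interior} gives
\begin{equation*}
\supnor{h}\,\leq\,\max_\X h-\min_\X h\,=\,d_H(\tilde{f}^{(\Siter)},f).
\end{equation*}
For the initial bound I would use that $f\in K$ (by \cref{lem:auxiliary-cone}\cref{lem:auxiliary-cone_iv}), which gives the uniform bound $f(y)/f(x)\leq e^{\diameps}$. Using \cref{eq:20190509b},
\begin{equation*}
d_H(f^{(0)},f)\,=\,\log\max_{x,y\in\X}\frac{f^{(0)}(x)}{f^{(0)}(y)}\frac{f(y)}{f(x)}\,\leq\,\diameps+\log\frac{\nor{f^{(0)}}_\infty}{\min_\X f^{(0)}},
\end{equation*}
and chaining the three estimates gives the first line of \eqref{eq:sinkalgo2}.

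For $\tilde{g}^{(\Siter+1)}$, the linearity of $\lmap_\alpha$ makes $\tmap_\alpha$ positively $(-1)$-homogeneous, so $\tilde{g}^{(\Siter+1)}=g^{(\Siter+1)}f^{(\Siter)}(x_o)=\tmap_\alpha(\tilde f^{(\Siter)})$ while $g=\tmap_\alpha(f)$. Writing $\tilde f^{(\Siter)}=f\,e^{h}$ with $\supnor{h}=\supnor{\log\tilde f^{(\Siter)}-\log f}$ and comparing pointwise the integrals $\lmap_\alpha(\tilde f^{(\Siter)})(x)$ and $\lmap_\alpha(f)(x)$ gives $e^{-\supnor{h}}\lmap_\alpha(f)(x)\leq\lmap_\alpha(\tilde f^{(\Siter)})(x)\leq e^{\supnor{h}}\lmap_\alpha(f)(x)$, hence $\supnor{\log\tilde g^{(\Siter+1)}-\log g}\leq\supnor{h}$, which implies the stated bound with constant $e^{3\diameps}$. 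The potential bound \eqref{eq:sinkalgo3} then follows by multiplying the first estimate by $\varepsilon$ and rewriting $\varepsilon\log(\nor{f^{(0)}}_\infty/\min_\X f^{(0)})=\max_\X u^{(0)}-\min_\X u^{(0)}$ via $u^{(0)}=\varepsilon\log f^{(0)}$. The only delicate step is the second one: without the anchor $x_o$, Hilbert's metric only controls oscillation rather than the sup norm, so the normalization is what makes the translation lossless.
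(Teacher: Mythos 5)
Your argument is correct and, for the first inequality of \cref{eq:sinkalgo2}, follows the paper's own route step for step: one sweep of \cref{alg:Sinkhorn_cont} is one application of $\tmap_{\beta\alpha}$, so \cref{thm:fixed-point-sinkhorn-iteration} together with scale invariance of $d_H$ gives $d_H(\tilde f^{(\Siter)},f)=d_H(f^{(\Siter)},f)\le\lambda^{2\Siter}d_H(f^{(0)},f)$; your anchored comparison $\supnor{h}\le\max_\X h-\min_\X h=d_H(\tilde f^{(\Siter)},f)$ is exactly \cref{eq:relation-supnor-hilbert2} in \cref{lem:relation-supnor-hilbert}, re-derived inline; and your bound on $d_H(f^{(0)},f)$ via $f\in K$ (from \cref{lem:auxiliary-cone}) is equivalent to the paper's, which instead writes $d_H(f^{(0)},f)=d_H(1/f^{(0)},\lmap_\beta g)$ and uses $\lmap_\beta(\posi(\X))\subset K$. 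Where you genuinely deviate is the second line of \cref{eq:sinkalgo2}: the paper simply invokes \cref{lem:Lipschitz2}, which carries the constant $e^{3\diameps}$, whereas your pointwise sandwich $e^{-\supnor{h}}\,\lmap_\alpha f\le\lmap_\alpha(f e^{h})\le e^{\supnor{h}}\,\lmap_\alpha f$, combined with $\tilde g^{(\Siter+1)}=\tmap_\alpha(\tilde f^{(\Siter)})$ from the $(-1)$-homogeneity of $\tmap_\alpha$, yields the sharper constant $1$. That step is correct, more elementary than the paper's, and trivially implies the stated bound since $e^{3\diameps}\ge1$.

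One bookkeeping point on \cref{eq:sinkalgo3}: multiplying the first line of \cref{eq:sinkalgo2} by $\varepsilon$, as you propose, gives $\supnor{\tilde u^{(\Siter)}-u}\le\lambda^{2\Siter}\bigl(\diam+\max_\X u^{(0)}-\min_\X u^{(0)}\bigr)$, which is \emph{not} the displayed right-hand side of \cref{eq:sinkalgo3}: the display carries an additional factor $1/\varepsilon$. That extra factor appears to be a slip in the paper's own statement (its proof yields no more than what you derived), and your estimate implies the displayed one only in the regime $\varepsilon\le1$. So rather than asserting that \cref{eq:sinkalgo3} ``follows'' from multiplying by $\varepsilon$, state the bound you actually obtain and note the discrepancy with the display.
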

\begin{proof}
Let $\mathcal{A}$ be the set in \cref{lem:relation-supnor-hilbert}.
Clearly, for every $\Siter \in \N$, we have 
$f^{(\Siter+1)} = \tmap_{\beta \alpha} (f^{(\Siter)})$
and  $\bar{f}, \tilde{f}^{\Siter} \in \mathcal{A}$.
Thus, it follows from \cref{thm:fixed-point-sinkhorn-iteration}
and \cref{eq:relation-supnor-hilbert2} in \cref{lem:relation-supnor-hilbert} that,
for every $\Siter \in \N$,
\begin{equation*}
\lVert \log \tilde{f}^{(\Siter)} - \log f\rVert_{\infty} \leq  d_H(\tilde{f}^{\Siter}, f) 
 =  d_H(\tmap_{\beta \alpha}^{(\Siter)}(f^{(0)}), f)  \leq  \lambda^{2\Siter} d_H(f^{(0)},f).
\end{equation*}
Moreover, recalling \cref{eq:20190509b}, we have
\begin{equation*}
d_H(f^{(0)},f) = d_H (1/f^{(0)}, \lmap_{\beta} g)
= \log \max_{x,y \in \X} \frac{f^{(0)}(y) \lmap_\beta g (y)}{f^{(0)}(x) \lmap_\beta g (x)}
\leq \log \bigg[e^{\diameps} \max_{x,y \in \X} \frac{f^{(0)}(y) }{f^{(0)}(x) } \bigg] 
\end{equation*}
where we used the fact that $\lmap_\beta(\cont_{++}(\X)) \subset K$
and the definition \cref{eq:defconeK}. Thus, the first inequality in \cref{eq:sinkalgo2} follows.
The second inequality in \cref{eq:sinkalgo2} and \cref{eq:sinkalgo3} follow 
directly from \cref{lem:Lipschitz2} and the fact that $u^{(0)} = \varepsilon \log f^{(0)}$.
\end{proof}



\begin{algorithm}
\caption{Sinkhorn-Knopp algorithm (finite dimensional case)}
\label{algo:sinkalgo_disc}
Let $\mathsf{M} \in \R_{++}^{n_1\times n_2}$,
$\mathsf{a} \in \R^{n_1}_+$,
with $\mathsf{a}^\top \mathsf{1}_{n_1} = 1$,
and $\mathsf{b} \in \R^{n_2}_+$, with
$\mathsf{b}^\top \mathsf{1}_{n_2}=1$.
Let $\mathsf{f}^{(0)} \in \R^{n_1}_{++}$ and define
\begin{equation*}
\begin{array}{l}
\text{for}\;\Siter=0,1,\dots\\[0.7ex]
\left\lfloor
\begin{array}{l}
 \mathsf{g}^{(\Siter+1)}= \dfrac{\mathsf{b}}{\mathsf{M}^\top \mathsf{f}^{(\Siter)}}\\[2ex]
 \mathsf{f}^{(\Siter+1)}= \dfrac{\mathsf{a}}{\mathsf{M} \mathsf{g}^{(\Siter+1)}}. 
\end{array}
\right.
\end{array}
\end{equation*}
\end{algorithm}

\begin{proposition}
\label{rmk:discrete-sinkhorn}
Suppose that $\alpha$ and $\beta$ 
are probability measures with finite support. Then \cref{alg:Sinkhorn_cont} can be reduced to 
the finite dimensional \cref{algo:sinkalgo_disc}. More specifically,
suppose that $\alpha = \sum_{i=1}^{n_1} a_{i} \delta_{x_{i}}$,
and $\beta = \sum_{i=1}^{n_2} b_{i} \delta_{y_{i}}$, where
$\mathsf{a} = (a_i)_{1 \leq i \leq n_1} \in \R^{n_1}_+$, $\sum_{i=1}^n a_i = 1$ and
$\mathsf{b} = (b_i)_{1 \leq i \leq n_2} \in \R^{n_2}_+$, $\sum_{i=1}^n b_{i} = 1$.
Let $\mathsf{K} \in \R^{n_1\times n_2}$ be such that $\mathsf{K}_{i_1,i_2} = \kerfun(x_{i_1},y_{i_2})$
and let $\mathsf{M} = \mathrm{diag}(\mathsf{a})\mathsf{K}\mathrm{diag}(\mathsf{b}) \in \R^{n_1\times n_2}$.
Let $(\mathsf{f}^{(\Siter)})_{\Siter \in \N}$ and  $(f^{(\Siter)})_{\Siter \in \N}$ be defined according to \cref{algo:sinkalgo_disc} and \cref{alg:Sinkhorn_cont} respectively, with 
 $\mathsf{f}^{(0)} = (f^{(0)}(x_i))_{1 \leq i \leq n_1}$.
 Then, for every $\Siter \in \N$,
 \begin{equation*}
(\forall\, x \in \X)(\forall\, y \in \X)\ 
g^{(\Siter +1)}(y)^{-1} = \sum_{i_1=1}^{n_1} k(x_{i_1},y)a_{i_1} \mathsf{f}^{(\Siter)}_{i_1}
\ \text{and}\ f^{(\Siter +1)}(x)^{-1} = \sum_{i_2=1}^{n_2} k(x,y_{i_2}) 
b_{i_2} \mathsf{g}_{i_2}^{(\Siter+1)}.
\end{equation*}
Moreover, setting $u^{(\Siter)} = \varepsilon \log f^{(\Siter)}$, $v^{(\Siter)} = \varepsilon \log g^{(\Siter)}$, $\mathsf{u}^{(\Siter)} = \varepsilon \log \mathsf{f}^{(\Siter)}$,
and $\mathsf{v}^{(\Siter)} = \varepsilon \log \mathsf{g}^{(\Siter)}$, we have
 \begin{equation}
 \label{eq:20190523c}
 \begin{cases}
 \displaystyle
(\forall\, y \in \X)\quad
v^{(\Siter +1)}(y) = - \varepsilon \log \sum_{i_1=1}^{n_1} \exp( \mathsf{u}_{i_1}^{(\Siter)} - \cost(x_{i_1},y) ) a_{i_1}\\[1ex]
 \displaystyle
(\forall\, x \in \X)\quad u^{(\Siter +1)}(x) = - \varepsilon \log \sum_{i_2=1}^{n_2} 
\exp(\mathsf{v}_{i_2}^{(\Siter + 1)} - \cost(x,y_{i_2}) ) b_{i_2}.
 \end{cases}
\end{equation}
\end{proposition}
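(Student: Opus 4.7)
The plan is to prove the statement by induction on $\Siter \in \N$, with the key observation that the continuous operators $\tmap_\alpha$ and $\tmap_\beta$ reduce to sums (rather than integrals) when $\alpha$ and $\beta$ are atomic measures, and that the evaluations of the continuous iterates on the supports $\{x_{i_1}\}$ and $\{y_{i_2}\}$ satisfy exactly the discrete Sinkhorn recursion. Concretely, the induction hypothesis at step $\Siter$ will assert that $\mathsf{f}^{(\Siter)}_{i_1} = f^{(\Siter)}(x_{i_1})$ for all $i_1$, which holds at $\Siter = 0$ by the choice of initialization.

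First I would unfold the definition of $\tmap_\alpha$ for the discrete $\alpha = \sum_{i_1} a_{i_1} \delta_{x_{i_1}}$: for any $y \in \X$, the integral in $\lmap_\alpha f^{(\Siter)}$ collapses to $\sum_{i_1} k(x_{i_1},y)\, a_{i_1}\, f^{(\Siter)}(x_{i_1})$ by symmetry of $\cost$, which by the inductive hypothesis equals $\sum_{i_1} k(x_{i_1},y)\, a_{i_1}\, \mathsf{f}^{(\Siter)}_{i_1}$. Taking reciprocals gives the first identity for $g^{(\Siter+1)}(y)^{-1}$. Specializing to $y = y_{i_2}$ yields $\mathsf{g}^{(\Siter+1)}_{i_2} := g^{(\Siter+1)}(y_{i_2}) = \bigl(\sum_{i_1} a_{i_1} k(x_{i_1}, y_{i_2}) \mathsf{f}^{(\Siter)}_{i_1}\bigr)^{-1}$. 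A short verification then shows this coincides with the discrete update $\mathsf{g}^{(\Siter+1)} = \mathsf{b}/(\mathsf{M}^\top \mathsf{f}^{(\Siter)})$: since $(\mathsf{M}^\top)_{i_2,i_1} = a_{i_1} k(x_{i_1},y_{i_2}) b_{i_2}$, the factor $b_{i_2}$ cancels between numerator and denominator. The same argument applied to $\tmap_\beta$ gives the second identity for $f^{(\Siter+1)}(x)^{-1}$ and, restricted to $x = x_{i_1}$, establishes $\mathsf{f}^{(\Siter+1)}_{i_1} = f^{(\Siter+1)}(x_{i_1})$, closing the induction.

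The log-form identities \cref{eq:20190523c} then follow by simply substituting the definition of the kernel $\kerfun(x,y) = \exp(-\cost(x,y)/\varepsilon)$ and the relations $\mathsf{f}^{(\Siter)} = \exp(\mathsf{u}^{(\Siter)}/\varepsilon)$, $\mathsf{g}^{(\Siter)} = \exp(\mathsf{v}^{(\Siter)}/\varepsilon)$ into the two pointwise formulas obtained above, and taking $-\varepsilon \log(\cdot)$ of both sides.

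This result is essentially bookkeeping and I do not anticipate any serious obstacle: the only subtle point is keeping track of the normalization by the factor $b_{i_2}$ (resp.\ $a_{i_1}$) introduced by the matrix $\mathsf{M} = \diag(\mathsf{a}) \mathsf{K} \diag(\mathsf{b})$, which is why the discrete update uses $\mathsf{b}$ and $\mathsf{a}$ as numerators rather than $\mathsf{1}_{n_2}$ and $\mathsf{1}_{n_1}$. Carefully matching these factors against the marginal weights entering the integrals in $\lmap_\alpha$ and $\lmap_\beta$ is the one place where a small amount of care is needed.
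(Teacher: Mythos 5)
Your proposal is correct and follows essentially the same route as the paper: unfold $\lmap_\alpha$, $\lmap_\beta$ on the atomic measures so the integrals collapse to sums, identify the discrete iterates with the evaluation vectors of the continuous iterates at the support points by induction from the initialization (the paper does this by multiplying through by $a_{i_1}$, $b_{i_2}$ to match $\mathsf{M}$, which is the same cancellation you carry out explicitly), and then take $-\varepsilon\log$ to get \cref{eq:20190523c}. No gaps.
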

\begin{proof}
Since $\alpha$ and $\beta$ have finite support, 
we derive from the definitions of $f^{(\Siter+1)}$ and $g^{(\Siter+1)}$
in \cref{alg:Sinkhorn_cont} and that of $\tmap_\alpha$ and $\tmap_\beta$ that
\begin{equation*}
\begin{cases}
    \displaystyle(\forall\, x \in \X)\quad 
    g^{(\Siter+1)}(y)^{-1} = (\lmap_\alpha f^{(\Siter)}))(y)
    = \sum_{i_1=1}^{n_1} a_{i_1} \kerfun(x_{i_1}, y) f^{(\Siter)}(x_{i_1})\\[2ex]
    \displaystyle (\forall\, y \in \X)\quad 
    f^{(\Siter+1)}(x)^{-1} = (\lmap_\beta g^{(\Siter+1)}))(x)
    = \sum_{i_2=1}^{n_2} \kerfun(x, y_{i_2})b_{i_2} g^{(\Siter+1)}(y_{i_2}).
\end{cases}
\end{equation*}
Now, multiplying the above equations by $b_{i_2}$ and $a_{i_1}$ respectively, and recalling that $\mathsf{M}_{i_1,i_2} = a_{i_1} \kerfun(x_{i_1},y_{i_2}) b_{i_2}$, we have
\begin{equation*}
    \begin{bmatrix}
    b_{1} g^{(\Siter+1)}(y_1)^{-1}\\
    \vdots\\[0.8ex]
    b_{n_2} g^{(\Siter+1)}(y_{n_2})^{-1}
    \end{bmatrix}
    = \mathsf{M}^\top
    \begin{bmatrix}
    f^{(\Siter)}(x_{1})\\
    \vdots\\
    f^{(\Siter)}(x_{n_1})
    \end{bmatrix},
    \ \ 
        \begin{bmatrix}
    a_{1} f^{(\Siter+1)}(x_1)^{-1}\\
    \vdots\\[0.8ex]
    a_{n_1} f^{(\Siter+1)}(x_{n_1})^{-1}
    \end{bmatrix}
    = \mathsf{M}
    \begin{bmatrix}
    g^{(\Siter+1)}(y_{1})\\
    \vdots\\[0.8ex]
    g^{(\Siter+1)}(y_{n_2})
    \end{bmatrix},
\end{equation*}
and hence
\begin{equation*}
   \begin{bmatrix}
     g^{(\Siter+1)}(y_1)\\
    \vdots\\
    g^{(\Siter+1)}(y_{n_2})
    \end{bmatrix}
    = \mathsf{b} \bigg/ \mathsf{M}^\top
    \begin{bmatrix}
    f^{(\Siter)}(x_{1})\\
    \vdots\\
    f^{(\Siter)}(x_{n_1})
    \end{bmatrix},
    \ \ 
        \begin{bmatrix}
     f^{(\Siter+1)}(x_1)\\
    \vdots\\
    f^{(\Siter+1)}(x_{n_1})
    \end{bmatrix}
    = \mathsf{a} \bigg/ \mathsf{M}
    \begin{bmatrix}
    g^{(\Siter+1)}(y_{1})\\
    \vdots\\
    g^{(\Siter+1)}(y_{n_2})
    \end{bmatrix}.
\end{equation*}
Therefore, since 
$\mathsf{f}^{(0)} = (f^{(0)}(x_i))_{1 \leq i \leq n_1}$, recalling \cref{algo:sinkalgo_disc}, it follows by induction that, for every $\Siter \in \N$, $\mathsf{f}^{(\Siter)} = (f^{(\Siter)}(x_i))_{1 \leq i \leq n_1}$
and $\mathsf{g}^{(\Siter)} = (g^{(\Siter)}(x_i))_{1 \leq i \leq n_1}$. Thus, the first part of the statement follows. The second part follows directly from the definitions of $u^{(\Siter)}$, $v^{(\Siter)}$,
$\mathsf{u}^{(\Siter)}$, and $\mathsf{v}^{(\Siter)}$.
\end{proof}

\begin{remark}\ 
\normalfont
\begin{enumerate}[{\rm (i)}]
    \item 
Algorithm \cref{algo:sinkalgo_disc} is the classical (discrete) Sinkhorn algorithm
which was recently studied in several papers \cite{cuturi2013sinkhorn}.
It follows from \cref{thm:Sinkhornalgo} that 
considering the solution $(f,g)$ of the DAD problem
such that $f(x_1) = 1$ and
defining 
$\tilde{\mathsf{f}}^{(\Siter)} = \mathsf{f}^{(\Siter)}/\mathsf{f}^{(\Siter)}_0$
and $\tilde{\mathsf{g}}^{(\Siter)} = \mathsf{g}^{(\Siter)}\mathsf{f}^{(\Siter)}_0$, 
and $\mathsf{f}_i = f(x_i)$ and  $\mathsf{g}_j = g(y_j)$,
we have
\begin{equation*}
\lVert \log \tilde{\mathsf{f}}^{(\Siter)} - \log \mathsf{f}\rVert_{\infty} \leq \lambda^{2\Siter} \bigg(\dfrac{\diam}{\varepsilon}+ \log \dfrac{\max_i \mathsf{f}_i^{(0)}}{\min_i \mathsf{f}_i^{(0)}} \bigg).
\end{equation*}
\item The procedure {\sc SinkhornKnopp} discussed in the paper and called in
\cref{alg:practical-FW}, actually output the vector $\mathsf{v}=\varepsilon \log \mathsf{g}^{(\Siter)}$ for sufficiently large $\Siter$.
\item Referring to \cref{sec:algorithm-practice}
in the paper, we recognize that
the expressions on the right hand side of \cref{eq:20190523c} are precisely $\rmap_\alpha(u^{(\Siter)})(x)$ and 
$\rmap_\beta(v^{(\Siter+1)})(x)$ respectively.
\end{enumerate}
\end{remark}

\section{Lipschitz continuity of the gradient of Sinkhorn divergence with respect to the Total Variation}\label{subsec:lipschitz-total-variation}

In this section we show that the gradient of the Sinkhorn divergence is Lipschitz continuous  with respect to the Total Variation on $\prob(\X)$. 

We start by characterizing the relation between Hilbert's metric between functions of the form $f = e^{u/\varepsilon}$ and the $\supnor{\cdot}$ norm between functions of the form $u = \varepsilon \log f$. 

\begin{lemma}
\label{lem:relation-supnor-hilbert}
Let $f,f^\prime \in \posi(\X)$ and set $u = \varepsilon\log f$ and $u^\prime = \varepsilon\log f^\prime$. Then
\begin{equation}\label{eq:relation-supnor-hilbert}
	d_H(f,f^\prime) \leq 2\supnor{\log f - \log f^\prime} \quad \text{or, equivalently} \quad d_H(e^{u/\varepsilon},e^{u^\prime/\varepsilon}) \leq \frac{2}{\varepsilon} \supnor{u-u^\prime}.
\end{equation}
Moreover, let $ x_o\in\X$, consider the sets $\mathcal{A} = \{h\in\posi(\X) ~|~ h(x_o) = 1\}$
and $\mathcal{B} = \{w \in \cont(\X) ~|~ w(x_o) = 0\}$. Suppose that
 $f,f^\prime\in \mathcal{A}$ (or equivalently that $u,u^\prime\in \mathcal{B}$). Then
\begin{equation}
\label{eq:relation-supnor-hilbert2}
\frac{1}{2} d_H(f,f^\prime) \leq \supnor{\log f - \log f^\prime} \leq d_H(f,f^\prime).
\end{equation}
and
\begin{equation}
\label{eq:relation-supnor-hilbert3}
\frac{\varepsilon}{2} d_H(e^{u/\varepsilon},e^{u^\prime/\varepsilon}) \leq \supnor{u - u^\prime} \leq \varepsilon~ d_H(e^{u/\varepsilon},e^{u^\prime/\varepsilon}).
\end{equation}
\end{lemma}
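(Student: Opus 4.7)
The plan is to reduce everything to the explicit formula for Hilbert's metric on $\posi(\X)$ given in \cref{lem:hilbert-metric-on-the-interior}, namely
\begin{equation*}
d_H(f,f^\prime) = \log\max_{x,y\in\X}\frac{f(x)\,f^\prime(y)}{f(y)\,f^\prime(x)},
\end{equation*}
and then rewrite the right-hand side in terms of the difference $h := \log f - \log f^\prime \in \cont(\X)$. Splitting the logarithm of the ratio yields
\begin{equation*}
d_H(f,f^\prime) = \max_{x,y\in\X}\bigl[h(x) - h(y)\bigr] = \max_{\X} h - \min_{\X} h,
\end{equation*}
so the whole lemma becomes a statement about the oscillation of the continuous function $h$ on the compact set $\X$.

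For the first inequality \cref{eq:relation-supnor-hilbert}, I simply bound $\max h - \min h \leq |\max h| + |\min h| \leq 2\supnor{h}$, which gives $d_H(f,f^\prime) \leq 2\supnor{\log f - \log f^\prime}$. Substituting $\log f = u/\varepsilon$ and $\log f^\prime = u^\prime/\varepsilon$ produces the equivalent form in terms of $u,u^\prime$.

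For the two-sided bound \cref{eq:relation-supnor-hilbert2}, I would use the anchoring condition $f(x_o)=f^\prime(x_o)=1$, which forces $h(x_o)=0$ and hence $\max_{\X} h \geq 0 \geq \min_{\X} h$. Then
\begin{equation*}
\supnor{h} = \max\bigl(\max_{\X} h,\ -\min_{\X} h\bigr) \leq \max_{\X} h - \min_{\X} h = d_H(f,f^\prime) \leq 2\supnor{h},
\end{equation*}
where the last step is the same estimate used in the first part. Dividing by $2$ on the left gives the claimed inequality. The statement \cref{eq:relation-supnor-hilbert3} then follows immediately by multiplying through by $\varepsilon$ and using $\supnor{u-u^\prime} = \varepsilon\supnor{\log f - \log f^\prime}$, noting that the condition $u(x_o)=u^\prime(x_o)=0$ is exactly equivalent to $f,f^\prime\in\mathcal{A}$.

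There is no real obstacle here: the whole argument is a short computation once the explicit max-ratio formula for $d_H$ on $\posi(\X)$ is available. The only point to be careful about is making sure the anchor condition is used precisely where it is needed (to obtain $\max h \geq 0 \geq \min h$), since without it the lower bound $\supnor{h}\leq d_H(f,f^\prime)$ fails in general (one can shift $f$ by a positive scalar without affecting $d_H$ but altering $\supnor{\log f - \log f^\prime}$ arbitrarily).
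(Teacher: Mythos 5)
Your proof is correct and follows essentially the same route as the paper: both reduce to the explicit max-ratio formula for $d_H$ on $\posi(\X)$, which the paper also rewrites as $\max_{\X}\log(f/f^\prime)+\max_{\X}\log(f^\prime/f)$ (i.e.\ exactly your oscillation $\max h-\min h$), and both use the anchor $f(x_o)=f^\prime(x_o)=1$ in the same way to get the upper bound $\supnor{\log f-\log f^\prime}\leq d_H(f,f^\prime)$. Your remark on why the anchor is indispensable (scaling $f$ by a positive constant leaves $d_H$ unchanged) is a nice explicit observation the paper leaves implicit.
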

\begin{proof}
We have
\begin{align*}
d_H(f,f^\prime) & = \log~\max_{x,y\in\X}~\frac{f(x)f^\prime(y)}{f(y)f^\prime(x)} \\
& = \log~\max_{x\in\X}~\frac{f(x)}{f^\prime(x)} + \log~\max_{y\in\X}~\frac{f^\prime(y)}{f(y)} \\
& = \max_{x\in\X} \log\frac{f(x)}{f^\prime(x)} + \max_{y\in\X}~\log\frac{f^\prime(y)}{f(y)} \\
& \leq 2\max_{x\in\X}~\left|\log\frac{f(x)}{f^\prime(x)}\right| \\
& = 2\supnor{\log(f/f^\prime)}\\[1ex]
& = 2\supnor{\log f - \log f^\prime}
\end{align*}
and \eqref{eq:relation-supnor-hilbert} follows. Suppose that
 $f,f^\prime\in \mathcal{A}$. Then
\begin{align*}
\supnor{\log f - \log f^\prime} & = \max\left\{\log \max_{x\in\X} \frac{f(x)}{f^\prime(x)}, \log \max_{x\in\X} \frac{f^\prime(x)}{f(x)}\right\} \\ 
& = \max\left\{\log \max_{x\in\X} \frac{f(x)f^\prime(\bar x)}{f(\bar x)f^\prime(x)}, \log \max_{x\in\X} \frac{f(\bar x)f^\prime(x)}{f(x)f^\prime(\bar x)}\right\} \\
& \leq \max\left\{\log \max_{x,y\in\X} \frac{f(x)f^\prime(y)}{f(y)f^\prime(x)}, \log \max_{x,y\in\X} \frac{f(y)f^\prime(x)}{f(x)f^\prime(y)}\right\} \\
& = d_H(f,f^\prime),
\end{align*}
since $f(x_o)/f^\prime(_ox) = f^\prime(x_o)/f(x_o) = 1$. Therefore, 
\eqref{eq:relation-supnor-hilbert2} follows.
\end{proof}

\begin{lemma}
\label{lem:20190510a}
For every $x,y \in \R_{++}$ we have
\begin{equation}
    \lvert \log x- \log y \rvert \leq 
    \max\big\{x^{-1}, y^{-1}\big\}
    \lvert x - y \rvert.
\end{equation}
\end{lemma}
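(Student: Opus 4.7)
The plan is to reduce to the case $x \geq y > 0$ by symmetry, since both sides of the inequality are symmetric in $(x,y)$: swapping $x$ and $y$ leaves $|\log x - \log y|$, $|x-y|$, and $\max\{x^{-1}, y^{-1}\}$ unchanged. Under the assumption $x \geq y > 0$ we have $\log x - \log y \geq 0$ and $\max\{x^{-1}, y^{-1}\} = y^{-1}$, so the inequality to establish reduces to
\begin{equation*}
\log x - \log y \leq \frac{x-y}{y}.
\end{equation*}

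I would prove this by the integral representation $\log x - \log y = \int_y^x t^{-1}\,dt$, then bound the integrand by its maximum value $y^{-1}$ on the interval $[y,x]$, yielding $\log x - \log y \leq y^{-1}(x-y)$. (Equivalently, one could invoke the mean value theorem to produce $\xi \in [y,x]$ with $\log x - \log y = \xi^{-1}(x-y) \leq y^{-1}(x-y)$, or use concavity of $\log$ in the form $\log x \leq \log y + y^{-1}(x-y)$.) Since the manipulation is elementary, no real obstacle arises; the only thing to be careful about is handling the symmetry cleanly so that the $\max\{x^{-1},y^{-1}\}$ on the right-hand side is justified in both orderings.
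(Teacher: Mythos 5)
Your proof is correct: the symmetry reduction is legitimate since both sides are invariant under swapping $x$ and $y$, and for $x \geq y > 0$ the bound $\log x - \log y = \int_y^x t^{-1}\,dt \leq y^{-1}(x-y)$ with $\max\{x^{-1},y^{-1}\} = y^{-1}$ gives exactly the claim. The paper states this lemma without any proof, treating it as elementary, so there is nothing to compare against; your argument (equivalently the mean value theorem or concavity of $\log$) is the standard justification the authors evidently had in mind.
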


The following result allows to extend the previous observations on a pair $f,f^\prime$ to the corresponding $g = \tmap_\alpha f$ and $g^\prime = \tmap_\alpha f^\prime$.

\begin{lemma}
\label{lem:Lipschitz2}
Let $x_o \in \X$ and $K\subset\nneg(\X)$ the cone from \cref{lem:auxiliary-cone}. Let $f,f^\prime \in K$ be such that
$f(x_o)=f^\prime(x_o)=1$, and set $g = \tmap_\alpha f$ and $g^\prime = \tmap_\alpha f^\prime$. Then,
\begin{equation}
    \supnor{\log g - \log g^\prime} \leq e^{3 \diameps} \supnor{\log f - \log f^\prime}.
\end{equation}
\end{lemma}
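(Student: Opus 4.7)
The plan is to reduce everything to pointwise estimates of $\lmap_\alpha f$ versus $\lmap_\alpha f'$ and then exploit the logarithm-to-exponential inequalities already prepared in the preceding lemmas.

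First, I would rewrite the target quantity using the definition of $\tmap_\alpha$. Since $g = 1/(\lmap_\alpha f)$ and $g' = 1/(\lmap_\alpha f')$, for every $x \in \X$,
\begin{equation*}
    \lvert \log g(x) - \log g'(x) \rvert
    = \bigl\lvert \log(\lmap_\alpha f)(x) - \log(\lmap_\alpha f')(x) \bigr\rvert.
\end{equation*}
Applying \cref{lem:20190510a} with $x = (\lmap_\alpha f)(x)$ and $y = (\lmap_\alpha f')(x)$ gives
\begin{equation*}
    \lvert \log g(x) - \log g'(x) \rvert \leq \max\!\Bigl\{(\lmap_\alpha f)(x)^{-1}, (\lmap_\alpha f')(x)^{-1}\Bigr\} \,\lvert (\lmap_\alpha f)(x) - (\lmap_\alpha f')(x) \rvert.
\end{equation*}

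Next, I would bound each factor separately, using that $f, f' \in K$ with $f(x_o) = f'(x_o) = 1$. By \cref{lem:auxiliary-cone}\cref{lem:auxiliary-cone_vi} and \cref{eq:20190516b}, we have $\min_\X f, \min_\X f' \geq e^{-\diameps}$ and $\max_\X f, \max_\X f' \leq e^{\diameps}$. Since $\kerfun(x,z) \geq e^{-\diameps}$, this yields $(\lmap_\alpha f)(x) \geq e^{-\diameps}\min_\X f \geq e^{-2\diameps}$, and similarly for $f'$; thus the max factor above is bounded by $e^{2\diameps}$. For the second factor, I would use linearity of $\lmap_\alpha$ and the elementary mean value inequality $\lvert e^s - e^t \rvert \leq e^{\max(s,t)} \lvert s - t \rvert$ applied to $s = \log f(z)$, $t = \log f'(z)$, obtaining $\lvert f(z) - f'(z)\rvert \leq e^{\diameps} \supnor{\log f - \log f'}$. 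Since $\kerfun \leq 1$ and $\alpha$ is a probability measure,
\begin{equation*}
    \lvert (\lmap_\alpha f)(x) - (\lmap_\alpha f')(x) \rvert
    \leq \int \kerfun(x,z)\, \lvert f(z) - f'(z)\rvert \, d\alpha(z)
    \leq e^{\diameps} \supnor{\log f - \log f'}.
\end{equation*}

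Combining the two estimates and taking the supremum over $x\in\X$ gives the claimed bound $\supnor{\log g - \log g'} \leq e^{3\diameps}\supnor{\log f - \log f'}$. No step here is genuinely hard; the only subtlety is remembering to use $f \in K$ with $f(x_o) = 1$ (and similarly for $f'$) to get the uniform lower bound $\min_\X f \geq e^{-\diameps}$, which in turn yields $(\lmap_\alpha f)(x) \geq e^{-2\diameps}$ — this is what produces the factor $e^{2\diameps}$ rather than a weaker bound that would lose the targeted constant $e^{3\diameps}$.
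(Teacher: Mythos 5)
Your proof is correct and follows essentially the same route as the paper: apply \cref{lem:20190510a} to $\lmap_\alpha f$ versus $\lmap_\alpha f'$, bound the resulting factor $\max\{g,g'\}$ by $e^{2\diameps}$ using the cone $K$ and the normalization $f(x_o)=f'(x_o)=1$, and control $\lvert\lmap_\alpha f-\lmap_\alpha f'\rvert$ via the Lipschitz bound $e^{\diameps}$ for the exponential on $\left]-\infty,\diameps\right]$. The only cosmetic difference is that you obtain the $e^{2\diameps}$ bound from a direct pointwise lower bound $(\lmap_\alpha f)(x)\geq e^{-\diameps}\min_\X f\geq e^{-2\diameps}$, whereas the paper invokes \cref{lem:auxiliary-cone}\cref{lem:auxiliary-cone_v} together with $\supnor{f}\geq 1$; both yield the same constant.
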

\begin{proof}
It follows from \cref{eq:Ta} and \cref{lem:20190510a} that
\begin{equation*}
    \lvert \log g - \log g^\prime \rvert 
    = \Big\lvert \log \frac{g}{g^\prime} \Big\rvert 
    = \Big\lvert \log \frac{\lmap_\alpha f^\prime}{\lmap_\alpha f} \Big\rvert
    \leq \max\big\{g^\prime,
    g\big\} \lvert \lmap_\alpha f - \lmap_\alpha f^\prime \rvert.
\end{equation*}
Therefore, since $1 \leq \supnor{f}, \supnor{f^\prime}$, and recalling \cref{lem:auxiliary-cone}\cref{lem:auxiliary-cone_v} and \cref{eq:L_norm},
we have
\begin{align*}
    \supnor{\log g - \log g^\prime} 
    &\leq \max\{\supnor{g},\supnor{g^\prime}\} 
    \supnor{\lmap_\alpha f-\lmap_\alpha f^\prime}\\
    &\leq \max\{\supnor{f}\supnor{g},\supnor{f^\prime}\supnor{g^\prime}\} 
    \supnor{\lmap_\alpha f-\lmap_\alpha f^\prime}\\
    &\leq e^{2\diameps} \supnor{f - f^\prime}\\
    &= e^{2\diameps} \lVert e^{\log f} - e^{\log f^\prime} \rVert_{\infty}.
\end{align*}
Now, since $f,f^\prime\leq e^{\diameps}$,
we have $\log f, \log f^\prime \leq \diameps$. Thus,
the statement follows by noting that the exponential function 
is Lipschitz continuous on $\left]-\infty, \diameps\right]$
with constant $e^{\diameps}$.
\end{proof}

We are ready to prove the main result of the section.

\begin{theorem}[Lipschitz continuity of the Sinkhorn potentials with respect to the total variation]\label{prop:lipschitz-continuity-total-variation2}
Let $\alpha,\beta,\alpha',\beta'\in\prob(\X)$ and let $x_o \in \X$. Let $(u,v),(u^\prime,v')\in\cont(\X)^2$ be the two pairs of Sinkhorn potentials corresponding to the solution of the regularized OT problem in \cref{eq:ot-dual-problem} for $(\alpha,\beta)$ and $(\alpha',\beta')$ respectively such that $u(x_o) = u^\prime(x_o) = 0$.  
Then
\begin{equation}
\label{eq:20190503d}
    \supnor{u - u^\prime} \leq 2\varepsilon  e^{3\diameps}\nor{(\alpha-\alpha',\beta - \beta')}_{TV}.
\end{equation}
Hence, the map which, for each pair of probability distributions $(\alpha,\beta)\in\prob(\X)^2$ associates the component $u$ of the corresponding Sinkhorn potentials is $2\varepsilon e^{3\diameps}$-Lipschitz continuous {\em with respect to the total variation}. 
\end{theorem}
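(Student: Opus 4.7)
My plan is to pass to exponential variables and combine the Birkhoff--Hopf contraction of the Sinkhorn operator (\cref{thm:fixed-point-sinkhorn-iteration}) with a total-variation stability estimate for the linear operators $\lmap_\alpha$ under perturbations of $\alpha$. Write $f = e^{u/\varepsilon}$ and $f^\prime = e^{u^\prime/\varepsilon}$: the normalization $u(x_o) = u^\prime(x_o) = 0$ forces $f(x_o) = f^\prime(x_o) = 1$, and as fixed points of $\tmap_{\beta\alpha}$ and $\tmap_{\beta^\prime\alpha^\prime}$ both functions lie in the auxiliary cone $K$ of \cref{lem:auxiliary-cone}, so in particular $\supnor{f}, \supnor{f^\prime} \leq e^{\diameps}$. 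The second inequality of \cref{lem:relation-supnor-hilbert} then reduces the task to bounding the Hilbert distance, since $\supnor{u - u^\prime} = \varepsilon \supnor{\log f - \log f^\prime} \leq \varepsilon\, d_H(f, f^\prime)$.

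I would next exploit the fixed-point identities together with the $\lambda^2$-contractivity of $\tmap_{\beta^\prime\alpha^\prime}$ via the triangle inequality:
\begin{equation*}
d_H(f, f^\prime) = d_H(\tmap_{\beta\alpha}(f), \tmap_{\beta^\prime\alpha^\prime}(f^\prime)) \leq d_H(\tmap_{\beta\alpha}(f), \tmap_{\beta^\prime\alpha^\prime}(f)) + \lambda^2\, d_H(f, f^\prime),
\end{equation*}
which yields $d_H(f, f^\prime) \leq (1-\lambda^2)^{-1} d_H(\tmap_{\beta\alpha}(f), \tmap_{\beta^\prime\alpha^\prime}(f))$. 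Decomposing $\tmap_{\beta\alpha} = \tmap_\beta \tmap_\alpha$ and inserting $\tmap_\beta(h)$ with $h = \tmap_{\alpha^\prime}(f)$ as an intermediate point, a second triangle inequality together with the $\lambda$-contractivity of $\tmap_\beta$ and the identity $d_H(\tmap_\mu g, \tmap_\mu g^\prime) = d_H(\lmap_\mu g, \lmap_\mu g^\prime)$ (\cref{eq:hilbert-metric-for-inversion}) gives
\begin{equation*}
d_H(\tmap_{\beta\alpha}(f), \tmap_{\beta^\prime\alpha^\prime}(f)) \leq \lambda\, d_H(\lmap_\alpha f, \lmap_{\alpha^\prime} f) + d_H(\lmap_\beta h, \lmap_{\beta^\prime} h).
\end{equation*}

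The heart of the proof is then bounding each of these Hilbert distances in terms of total variation. For the first, the linearity identity $\lmap_\alpha f - \lmap_{\alpha^\prime} f = \int \kerfun(\cdot, z) f(z)\, d(\alpha-\alpha^\prime)(z)$ combined with $\kerfun \leq 1$ and $\supnor{f} \leq e^{\diameps}$ yields $\supnor{\lmap_\alpha f - \lmap_{\alpha^\prime} f} \leq e^{\diameps}\nor{\alpha-\alpha^\prime}_{TV}$. The uniform lower bound $\lmap_\alpha f, \lmap_{\alpha^\prime} f \geq e^{-2\diameps}$ coming from \cref{lem:auxiliary-cone}, together with the elementary inequality $|\log a - \log b| \leq \max\{a^{-1}, b^{-1}\}|a-b|$ of \cref{lem:20190510a}, upgrades this to $\supnor{\log \lmap_\alpha f - \log \lmap_{\alpha^\prime} f} \leq e^{3\diameps}\nor{\alpha-\alpha^\prime}_{TV}$, and the first inequality of \cref{lem:relation-supnor-hilbert} then delivers $d_H(\lmap_\alpha f, \lmap_{\alpha^\prime} f) \leq 2e^{3\diameps}\nor{\alpha-\alpha^\prime}_{TV}$. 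The same recipe applies to $d_H(\lmap_\beta h, \lmap_{\beta^\prime} h)$ after rescaling $h$ to $\tilde h = h/h(x_o)$, which preserves $d_H$ by scale-invariance while forcing $\tilde h \in K$ with $\tilde h(x_o)=1$, giving the same pointwise bounds and hence the analogous estimate for $\beta-\beta^\prime$.

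Assembling these ingredients and absorbing the factor $(1-\lambda^2)^{-1}$ into the constant yields the stated estimate. The main technical obstacle is the interplay between the multiplicative nature of Hilbert's metric and the additive nature of total variation: converting a TV-small perturbation of $\alpha$ into a small Hilbert increment for $\lmap_\alpha f$ requires uniform pointwise lower bounds on integrals of the form $\int \kerfun(x,\cdot) f\, d\alpha$, which in turn force all intermediate functions ($f$, $f^\prime$, $h$, and their $\lmap$-images) to remain within the auxiliary cone $K$ of \cref{lem:auxiliary-cone}. Carrying this cone tracking through the two-step decomposition, while preserving the contraction factor $\lambda$ in the right places, is the delicate bookkeeping on which the argument rests.
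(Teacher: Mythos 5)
Your overall route is the same as the paper's: pass to $f=e^{u/\varepsilon}$, use the Birkhoff--Hopf contraction of $\tmap_{\beta'\alpha'}$ to self-bound $d_H(f,f')$ by $(1-\lambda^2)^{-1}d_H(\tmap_{\beta\alpha}(f),\tmap_{\beta'\alpha'}(f))$, split the perturbation into an $\alpha$-part and a $\beta$-part, control each via the TV pairing, \cref{lem:20190510a} and \cref{lem:relation-supnor-hilbert}, and finish with $\supnor{u-u'}\leq\varepsilon\,d_H(f,f')$. The only structural difference (your intermediate point $\tmap_\beta(\tmap_{\alpha'}f)$ versus the paper's $\tmap_{\beta'}(\tmap_\alpha f)$) is harmless.

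However, as written your chain does not deliver the stated constant $2\varepsilon e^{3\diameps}$, and the final step ``absorbing the factor $(1-\lambda^2)^{-1}$ into the constant'' is where it fails. By decoupling $\supnor{f}\leq e^{\diameps}$ from the lower bound $\lmap_\alpha f,\lmap_{\alpha'}f\geq e^{-2\diameps}$ you get the per-term estimate $d_H(\lmap_\alpha f,\lmap_{\alpha'}f)\leq 2e^{3\diameps}\nor{\alpha-\alpha'}_{TV}$ (and likewise for $\beta$), so after the contraction step you obtain
\begin{equation*}
\supnor{u-u'}\;\leq\;\frac{2\varepsilon e^{3\diameps}}{1-\lambda^2}\,\nor{(\alpha-\alpha',\beta-\beta')}_{TV},
\qquad \frac{1}{1-\lambda^2}=\frac{(e^{\diameps}+1)^2}{4e^{\diameps}},
\end{equation*}
and since $(1-\lambda^2)^{-1}$ grows like $e^{\diameps}/4$ it cannot be absorbed: your bound is of order $2\varepsilon e^{4\diameps}$ and is strictly weaker than \cref{eq:20190503d} for every $\diam>0$. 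The fix is to use the cone ratio rather than the two bounds separately: since $f$ and the rescaled $\tilde h$ lie in $K$, \cref{lem:auxiliary-cone} gives $\supnor{f}/\min_\X f\leq e^{\diameps}$, hence $\max\{1/\lmap_\alpha f,\,1/\lmap_{\alpha'}f\}\,\supnor{f}\leq e^{\diameps}\,\supnor{f}/\min_\X f\leq e^{2\diameps}$ (the paper achieves the same $e^{2\diameps}$ per term by exploiting the fixed-point identity $1/(\lmap_\beta g)=f$ together with $\supnor{f}\supnor{g}\leq e^{2\diameps}$). With $2e^{2\diameps}$ per term, multiplying by $(1-\lambda^2)^{-1}\leq e^{\diameps}$ recovers exactly $2\varepsilon e^{3\diameps}$; this matters because the precise constant is reused in the convergence rates (\cref{thm:lip-continuity-total-variation-informal}, \cref{thm:sinkhorn-barycenters-finite-case}).
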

\begin{proof}
The functions $f = e^{u/\varepsilon}$ and $f^\prime = e^{u^\prime/\varepsilon}$ are fixed points of the maps $\tmap_{\beta\alpha}$ and $\tmap_{\beta'\alpha'}$ respectively. Then,
it follows from \cref{thm:fixed-point-sinkhorn-iteration} that
\begin{align*}
	d_H(f,f^\prime) & = d_H(\tmap_{\beta\alpha}(f),\tmap_{\beta'\alpha'}(f^\prime)) \\
	& \leq d_H(\tmap_{\beta\alpha}(f),\tmap_{\beta'\alpha'}(f)) + d_H(\tmap_{\beta'\alpha'}(f),\tmap_{\beta'\alpha'}(f^\prime)) \\
	& \leq d_H(\tmap_{\beta\alpha}(f),\tmap_{\beta'\alpha'}(f)) + \lambda^2 d_H(f,f^\prime),
\end{align*}
hence,
\begin{equation}
\label{eq:20190511b}
	d_H(f,f^\prime) \leq \frac{1}{1 - \lambda^2}~d_H(\tmap_{\beta\alpha}(f),\tmap_{\beta'\alpha'}(f)). 
\end{equation}
Moreover, using \cref{eq:relation-supnor-hilbert}, we have
\begin{align}
\nonumber	d_H(\tmap_{\beta\alpha}(f),\tmap_{\beta'\alpha'}(f)) & \leq d_H(\tmap_{\beta\alpha}(f),\tmap_{\beta'\alpha}(f)) + d_H(\tmap_{\beta'\alpha}(f),\tmap_{\beta'\alpha'}(f)) \\
\nonumber	& \leq d_H(\tmap_{\beta}(g),\tmap_{\beta'}(g)) + \lambda d_H(\tmap_\alpha(f),\tmap_{\alpha'}(f)) \\ 
\label{eq:20190511a}	& \leq 2\supnor{\log~\frac{\tmap_\beta(g)}{\tmap_{\beta'}(g)}} +2\lambda \supnor{\log~\frac{\tmap_\alpha(f)}{\tmap_{\alpha'}(f)} }.
\end{align}
Now, note that by \cref{lem:20190510a}
\begin{equation}
    	\bigg\lvert \log~\frac{\tmap_\beta(g)}{\tmap_{\beta^\prime}(g)}\bigg\rvert = \bigg\lvert \log~\frac{\lmap_{\beta^\prime} g}{\lmap_\beta g}\bigg\rvert \leq \max\{1/\lmap_\beta g, 1/\lmap_{\beta^\prime} g\} \lvert (\lmap_{\beta'}-\lmap_\beta) g  \rvert
\end{equation}
and that, for every $x \in \X$,
\begin{equation}
\begin{aligned}\label{eq:pairing-for-TV-lipschitz}
    [(\lmap_{\beta^\prime}-\lmap_\beta) g](x)  & = \int \kerfun(x,z) g(z)~d(\beta-\beta^\prime)(z) \\
    & = \scal{\kerfun(x,\cdot) g}{\beta-\beta^\prime}
     \leq \supnor{g}\nor{\beta - \beta^\prime}_{TV},
\end{aligned}
\end{equation}
and, similarly, $[(\lmap_{\beta}-\lmap_{\beta^\prime}) g](x) \leq \supnor{g}\nor{\beta - \beta'}_{TV}$.
Therefore, since $1/(\lmap_\beta g) = \tmap_\beta(g) = f$ and $\lmap_{\beta^\prime}g \geq e^{-\diameps} \min g$, it follows from \cref{lem:auxiliary-cone}\cref{lem:auxiliary-cone_v}
and \cref{eq:20190430b} (applied to $g$) that 
\begin{equation}
\label{eq:20190511c}
\supnor{\log~\frac{\tmap_\beta(g)}{\tmap_{\beta'}(g)} } \leq
\max\left\{\supnor{f},\frac{e^{\diameps}}{\min g}\right\} \supnor{g}\nor{\beta-\beta'}_{TV} \leq
e^{2\diameps}~\nor{\beta-\beta'}_{TV}. 
\end{equation}
Analogously, we have 
\begin{equation}
\label{eq:20190511d}
	\supnor{\log~\frac{\tmap_\alpha(f)}{\tmap_{\alpha'}(f)}} \leq e^{2\diameps}~\nor{\alpha-\alpha'}_{TV}.
\end{equation}
Putting \cref{eq:20190511b}, \cref{eq:20190511a}, \cref{eq:20190511c}, and \cref{eq:20190511d} together, we have
\begin{equation}
	d_H(f,f^\prime) \leq \frac{2e^{2\diameps}}{1-\lambda^2}\left(\lambda \nor{\alpha-\alpha'}_{TV} + \nor{\beta - \beta'}_{TV}\right). 
\end{equation}
Now, note that since $e^{\diameps}\geq1$
\begin{equation}
    \frac{1}{1-\lambda^2} = \frac{(e^{\diameps} + 1)^2}{4e^\diameps} \leq e^{\diameps}.
\end{equation}
Finally, recalling \cref{eq:relation-supnor-hilbert3}, we have
\begin{equation}
	\supnor{u - u^\prime} \leq 2\varepsilon e^{3\diameps}\nor{(\alpha-\alpha',\beta - \beta')}_{TV},
\end{equation}
where $\nor{(\alpha-\alpha',\beta - \beta')}_{TV} = \nor{\alpha-\alpha'}_{TV} + \nor{\beta - \beta'}_{TV}$ is the total variation norm on $\meas(\X)^2$.
\end{proof}

\begin{corollary}
\label{cor:Lipschitz2}
Under the assumption of \cref{prop:lipschitz-continuity-total-variation2}, we have
\begin{equation}
\supnor{u - u^\prime}+\supnor{v - v^\prime} \leq 2 \varepsilon e^{3\diameps}(1+ \varepsilon e^{3\diameps})
\nor{(\alpha-\alpha',\beta - \beta')}_{TV}.
\end{equation}
\end{corollary}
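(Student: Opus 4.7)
Since \cref{prop:lipschitz-continuity-total-variation2} already controls $\supnor{u-u'}$ by $2\varepsilon e^{3\diameps}\nor{(\alpha-\alpha',\beta-\beta')}_{TV}$, the only remaining task is to bound $\supnor{v-v'}$ by a comparable quantity. Writing $f=e^{u/\varepsilon}$ and $f'=e^{u'/\varepsilon}$, so that $v=\varepsilon\log \tmap_\alpha f$ and $v'=\varepsilon\log \tmap_{\alpha'}f'$, the key idea is to insert the intermediate object $\tmap_\alpha f'$ (same measure $\alpha$ but the other potential) and split via the triangle inequality:
\begin{equation*}
\supnor{v-v'}\;\le\;\varepsilon\supnor{\log\tmap_\alpha f-\log\tmap_\alpha f'}\,+\,\varepsilon\supnor{\log\tmap_\alpha f'-\log\tmap_{\alpha'}f'}.
\end{equation*}

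For the first piece I apply \cref{lem:Lipschitz2}: its hypotheses are satisfied because $f,f'\in K$ by \cref{lem:auxiliary-cone}\cref{lem:auxiliary-cone_iv} (they are fixed points of $\tmap_{\beta\alpha}$ and $\tmap_{\beta'\alpha'}$ respectively), and $f(x_o)=f'(x_o)=1$ since $u(x_o)=u'(x_o)=0$. This gives $\varepsilon\,e^{3\diameps}\supnor{\log f-\log f'}=e^{3\diameps}\supnor{u-u'}$, which by \cref{prop:lipschitz-continuity-total-variation2} is at most $2\varepsilon e^{6\diameps}\nor{(\alpha-\alpha',\beta-\beta')}_{TV}$. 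The second piece is the genuinely new estimate: it is not a contraction, but a TV-continuity of $\tmap_\mu$ in its measure argument at a fixed input. I handle it pointwise by \cref{lem:20190510a}, writing
\begin{equation*}
\bigl|\log\tmap_\alpha f'(y)-\log\tmap_{\alpha'}f'(y)\bigr|
\,=\,\bigl|\log\lmap_{\alpha'}f'(y)-\log\lmap_\alpha f'(y)\bigr|
\,\le\,\max\{\tmap_\alpha f'(y),\tmap_{\alpha'}f'(y)\}\,\bigl|(\lmap_\alpha-\lmap_{\alpha'})f'(y)\bigr|,
\end{equation*}
and combining with the pairing estimate $|(\lmap_\alpha-\lmap_{\alpha'})f'(y)|\le\supnor{f'}\nor{\alpha-\alpha'}_{TV}$ (identical to \cref{eq:pairing-for-TV-lipschitz}) and the uniform cone bound $\supnor{\tmap_\alpha f'}\supnor{f'}\le e^{2\diameps}$ from \cref{lem:auxiliary-cone}\cref{lem:auxiliary-cone_v}. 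This produces an upper bound of $\varepsilon e^{2\diameps}\nor{\alpha-\alpha'}_{TV}$ for the second summand.

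Adding the two pieces I obtain $\supnor{v-v'}\le e^{3\diameps}\supnor{u-u'}+\varepsilon e^{2\diameps}\nor{\alpha-\alpha'}_{TV}$, and substituting the bound from \cref{prop:lipschitz-continuity-total-variation2} and then summing with $\supnor{u-u'}$ yields the stated estimate, absorbing the lower-order $\varepsilon e^{2\diameps}$ term into $2\varepsilon e^{3\diameps}$. The main obstacle is precisely the second summand: a purely contractive argument in Hilbert's metric (as used in \cref{thm:fixed-point-sinkhorn-iteration} and \cref{prop:lipschitz-continuity-total-variation2}) does not suffice because $\alpha$ and $\alpha'$ sit \emph{inside} the integral operator $\lmap$; the way out is the combination of the elementary logarithmic inequality \cref{lem:20190510a} with the uniform cone estimates of \cref{lem:auxiliary-cone}, which is what keeps the resulting constant independent of the four measures $\alpha,\alpha',\beta,\beta'$.
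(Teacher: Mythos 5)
Your decomposition and both of its estimates are sound, and they actually supply a step that the paper's own one-line proof (``it follows from \cref{prop:lipschitz-continuity-total-variation2} and \cref{lem:Lipschitz2}'') glosses over: \cref{lem:Lipschitz2} compares $\tmap_\alpha f$ with $\tmap_\alpha f'$ for the \emph{same} measure $\alpha$, so the change of measure $\alpha\to\alpha'$ in $v'=\varepsilon\log\tmap_{\alpha'}f'$ has to be handled separately, exactly as you do via \cref{lem:20190510a}, the pairing bound as in \cref{eq:pairing-for-TV-lipschitz}, and the cone estimates of \cref{lem:auxiliary-cone}. Up to that point your argument correctly yields $\supnor{v-v'}\le e^{3\diameps}\supnor{u-u'}+\varepsilon e^{2\diameps}\nor{\alpha-\alpha'}_{TV}$.

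The gap is in the final ``absorption''. Adding $\supnor{u-u'}\le 2\varepsilon e^{3\diameps}\nor{(\alpha-\alpha',\beta-\beta')}_{TV}$ to your bound gives
$\supnor{u-u'}+\supnor{v-v'}\le \bigl(2\varepsilon e^{3\diameps}(1+e^{3\diameps})+\varepsilon e^{2\diameps}\bigr)\nor{(\alpha-\alpha',\beta-\beta')}_{TV}$,
i.e.\ a factor $(1+e^{3\diameps})$, not the stated $(1+\varepsilon e^{3\diameps})$. The absorption you invoke would require $2e^{6\diameps}+e^{2\diameps}\le 2\varepsilon e^{6\diameps}$, i.e.\ $\varepsilon\ge 1+\tfrac12 e^{-4\diameps}$, so it fails precisely in the relevant regime $\varepsilon\le1$; and the $\varepsilon e^{2\diameps}$ remainder is not the obstruction --- the point is that $e^{3\diameps}\not\le\varepsilon e^{3\diameps}$ when $\varepsilon<1$. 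You should be aware, though, that the paper's own route has the same feature: since $\log f-\log f'=(u-u')/\varepsilon$, \cref{lem:Lipschitz2} gives $\supnor{v-v'}\le e^{3\diameps}\supnor{u-u'}$ with no extra $\varepsilon$, so the constant $2\varepsilon e^{3\diameps}(1+\varepsilon e^{3\diameps})$ in the statement looks like an $\varepsilon$-bookkeeping slip (it is what one gets by replacing $\supnor{\log f-\log f'}$ by $\supnor{u-u'}$ without dividing by $\varepsilon$), and your bound of order $\varepsilon e^{6\diameps}$ is what the cited lemmas actually deliver --- which suffices for the only downstream use of the corollary, namely the continuity of the selection of Sinkhorn potentials. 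Still, as a proof of the literal inequality stated, your last step does not go through for $\varepsilon\le1$; you should either state the corollary with the constant you actually obtain or justify the sharper constant by a different argument.
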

\begin{proof}
It follows from \cref{prop:lipschitz-continuity-total-variation2} and \cref{lem:Lipschitz2}.
\end{proof}

We finally address the issue of the differentiability of the Sinkhorn divergence.
We first recall a few facts about the directional differentiability of $\oteps$ briefly recalled in \cref{sec:background} of the main text. For a more in-depth analysis on this topic we refer the reader to \cite{feydy2018interpolating} (in particular Proposition $2$).

\begin{fact}
\label{rem:directional-derivatives-oteps}
Let $x_{o}\in\X$, $\alpha,\beta\in\prob(\X)$ and $(u,v)\in\cont(\X)^2$ be the pair of corresponding Sinkhorn potentials with $u(x_o) = 0$. The function $\oteps$
is directionally differentiable and
the directional derivative of $\oteps$ in $(\alpha,\beta)$ along a feasible direction $(\mu,\nu)\in\feas_{\prob(\X)^2}\big((\alpha,\beta)\big)$ (see \cref{def:directional-derivative}) is
\begin{equation}\label{eq:directional-derivative-oteps}
    \oteps^\prime(\alpha,\beta;\mu,\nu) = \int u(x)~d\mu(x) + \int v(y)~d\nu(y) = \scal{(u,v)}{(\mu,\nu)}.
\end{equation}
Let $\nabla\oteps\colon\prob(\X)^2\to\cont(\X)^2$ 
be the operator that maps every pair of probability distributions $(\alpha,\beta)\in\prob(\X)^2$ to the corresponding pair of Sinkhorn potentials $(u,v)\in\cont(\X)^2$ with $u(x_o)=0$. Then \cref{eq:directional-derivative-oteps} can be written as
\begin{equation}
\oteps^\prime(\alpha,\beta;\mu,\nu) = \scal{\nabla \oteps(\alpha,\beta)}{(\mu,\nu)}.
\end{equation}
\end{fact}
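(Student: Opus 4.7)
My plan is to derive this fact from the abstract max-function differentiability result \cref{p:diff_of_max}, using the dual formulation \cref{eq:ot-dual-problem} as the representation of $\oteps$ as a maximum. Take $\VV = \cont(\X)^2$ with dual $\BB = \meas(\X)^2$ (identification via Riesz, since $\X$ is compact) and $\CC = \prob(\X)^2 \subset \BB$. For the parameter space on which the max is taken, I would use the closed subspace
\begin{equation*}
Z = \{(u,v) \in \cont(\X)^2 \,\vert\, u(x_o) = 0\},
\end{equation*}
which is itself a Banach space, and define
\begin{equation*}
g\colon Z \times \BB \to \R,\qquad g(u,v;\alpha,\beta) = \scal{u}{\alpha} + \scal{v}{\beta} - \varepsilon \int e^{(u(x)+v(y)-\cost(x,y))/\varepsilon} d\alpha(x) d\beta(y),
\end{equation*}
so that $\oteps(\alpha,\beta) = \max_{(u,v) \in Z} g(u,v;\alpha,\beta)$. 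Fixing $u(x_o)=0$ eliminates the translation indeterminacy of the potentials, and the resulting canonical Sinkhorn potentials exist and are bounded by \cref{cor:dad-solutions-bounded}. This provides the set-valued map $S(\alpha,\beta)$ and a natural selection $\varphi(\alpha,\beta) = (u,v)$.

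Next I would verify the three hypotheses of \cref{p:diff_of_max}. The map $(\alpha,\beta)\mapsto g(u,v;\alpha,\beta)$ is Fr\'echet smooth (it is affine in $(\alpha,\beta)$ plus a bilinear/exponential term in $\alpha\otimes\beta$), and its partial derivative $D_2 g(u,v;\alpha,\beta)$ at direction $(\mu,\nu) \in \meas(\X)^2$ equals
\begin{equation*}
\scal{u}{\mu} + \scal{v}{\nu} - \varepsilon \int e^{(u+v-\cost)/\varepsilon}\bigl[d\mu(x) d\beta(y) + d\alpha(x) d\nu(y)\bigr].
\end{equation*}
This can be represented as integration against a pair of continuous functions in $\VV$, and the joint continuity of $D_2 g$ in $(u,v,\alpha,\beta)$ is immediate from uniform bounds on $e^{(u+v-\cost)/\varepsilon}$. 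Existence of maximizers was noted above, while continuity of the selection $\varphi$ is exactly the content of \cref{prop:lipschitz-continuity-total-variation2} combined with \cref{cor:Lipschitz2}: these give Lipschitz dependence (in sup norm) of both Sinkhorn potentials on $(\alpha,\beta)$ with respect to TV, which is the dual-norm topology on $\BB$.

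Applying \cref{p:diff_of_max} then gives directional differentiability of $\oteps$ on $\prob(\X)^2$ with
\begin{equation*}
\oteps^\prime(\alpha,\beta;\mu,\nu) = \scal{D_2 g(u,v;\alpha,\beta)}{(\mu,\nu)}
\end{equation*}
for every feasible $(\mu,\nu)$. The final step is to simplify this expression using the first-order optimality conditions \cref{eq:fixed-point-sinkhorn}, which under the pointwise convention satisfied by our canonical $(u,v)$ read $e^{-u(x)/\varepsilon} = \int e^{(v(y)-\cost(x,y))/\varepsilon} d\beta(y)$ for every $x \in \X$, i.e.\ $\int e^{(u+v-\cost)/\varepsilon} d\beta(y) \equiv 1$. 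Hence
\begin{equation*}
\int e^{(u+v-\cost)/\varepsilon} d\mu(x) d\beta(y) = \mu(\X) = 0,
\end{equation*}
where the last equality holds because a feasible direction in $\prob(\X)^2$ has the form $t(\alpha'-\alpha, \beta'-\beta)$ and therefore has zero total mass in each component. The symmetric argument eliminates the $d\alpha\, d\nu$ term, leaving $\oteps^\prime(\alpha,\beta;\mu,\nu) = \scal{u}{\mu} + \scal{v}{\nu}$ as claimed.

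The main obstacle is the continuity of the selection $\varphi\colon \CC \to Z$, which is why the Lipschitz estimates in \cref{prop:lipschitz-continuity-total-variation2} and \cref{cor:Lipschitz2} had to be established first via the Perron-Frobenius/Birkhoff-Hopf machinery; once these are available, verifying \cref{p:diff_of_max} and exploiting the DAD fixed-point equations to cancel the nonlinear exponential terms is essentially bookkeeping.
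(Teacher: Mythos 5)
Your proposal is correct and follows essentially the same route as the paper's own independent proof (the final proposition of \cref{subsec:lipschitz-total-variation}): write $\oteps$ as the max of $g((u,v),(\alpha,\beta))$ over dual potentials, apply \cref{p:diff_of_max} with the continuous selection of Sinkhorn potentials supplied by \cref{prop:lipschitz-continuity-total-variation2} and \cref{cor:Lipschitz2}, compute $D_2 g$, and cancel the exponential terms via the pointwise optimality conditions together with the zero total mass of feasible directions. The only cosmetic difference is that you maximize over the subspace $\{(u,v)\in\cont(\X)^2 : u(x_o)=0\}$ so that the argmax is a singleton, whereas the paper maximizes over all of $\cont(\X)^2$ and evaluates the derivative at the continuous selection; both handle the translation ambiguity equivalently.
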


\begin{remark}
\normalfont
In \cref{rem:directional-derivatives-oteps}, the requirement $u(x_o) = 0$ is only a convention to remove ambiguities. Indeed, for every $t\in\R$, replacing the Sinkhorn potential $(u+t,u-t)$ in \cref{def:cone-of-feasible-directions} does not affect \cref{eq:directional-derivative-oteps}.
\end{remark}

\begin{fact}
\label{f:Sinkhorn}
Let $\beta \in \prm(\X)$ and let $\nabla_1 \oteps$ be the first component of the gradient operator defined in \cref{rem:directional-derivatives-oteps}. Then the
 Sinkhorn divergence function 
$S_\varepsilon(\cdot, \beta)\colon \prm(\X) \to \R$ in \cref{eq:sink_divergence} is directionally differentiable
and, for every $\alpha \in \prm(\X)$ and every $\mu \in \feas_{\prm(\X)}(\alpha)$,
\begin{equation*}
[S_\varepsilon(\cdot, \beta)]^\prime(\alpha; \mu) 
= \scal{\nabla_1 \oteps(\alpha,\beta)- \nabla_1 \oteps(\alpha,\alpha)}{\mu}.
\end{equation*}
So, one can define
$\nabla S_\varepsilon(\cdot, \beta)\colon \prob(\X) \to \cont(\X)$ such that, for every $\alpha \in \prm(\X)$, $\nabla [S_\varepsilon(\cdot, \beta)](\alpha) = \nabla_1 \oteps(\alpha,\beta) - \nabla_1 \oteps(\alpha,\alpha)$ and we have
\begin{equation}
[S_\varepsilon(\cdot, \beta)]^\prime(\alpha; \mu) 
=\scal{\nabla S_\varepsilon(\cdot, \beta)}{\mu}.
\end{equation}
Finally, if  $\kerfun$
in \cref{eq:kerfunc} is a positive definite kernel,
then the Sinkhorn divergence $S_\varepsilon(\cdot, \beta)$ is convex.
\end{fact}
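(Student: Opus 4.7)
The plan is to handle the three summands of $S_\varepsilon(\cdot, \beta) = \oteps(\cdot,\beta) - \frac{1}{2}\oteps(\cdot,\cdot) - \frac{1}{2}\oteps(\beta,\beta)$ separately, reducing each to the directional differentiability of $\oteps$ on $\prob(\X)^2$ already supplied by Fact \ref{rem:directional-derivatives-oteps}. The term $\oteps(\beta,\beta)$ is constant in $\alpha$ and contributes nothing. Fix $\alpha \in \prob(\X)$ and a feasible direction $\mu = t(\alpha^\prime - \alpha)$ at $\alpha$, where $t>0$ and $\alpha^\prime \in \prob(\X)$; crucially, $\mu$ has zero total mass, so $\scal{1}{\mu} = 0$.

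For the first summand, the curve $s \mapsto (\alpha + s\mu, \beta)$ lies in $\prob(\X)^2$ with direction $(\mu, 0) = t((\alpha^\prime, \beta) - (\alpha, \beta))$, which is feasible at $(\alpha,\beta)$. Fact \ref{rem:directional-derivatives-oteps} then gives $[\oteps(\cdot,\beta)]^\prime(\alpha;\mu) = \scal{\nabla_1 \oteps(\alpha,\beta)}{\mu}$. For the diagonal summand, embed $\alpha$ along the diagonal via $s \mapsto (\alpha + s\mu, \alpha + s\mu)$; the direction $(\mu,\mu) = t((\alpha^\prime,\alpha^\prime) - (\alpha,\alpha))$ is feasible at $(\alpha,\alpha)$, so Fact \ref{rem:directional-derivatives-oteps} yields $[\oteps(\cdot,\cdot)]^\prime(\alpha;\mu) = \scal{u_\alpha}{\mu} + \scal{v_\alpha}{\mu}$, where $(u_\alpha, v_\alpha)$ is the pair of Sinkhorn potentials of $(\alpha,\alpha)$ normalized by $u_\alpha(x_o) = 0$. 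Here the key observation is a symmetry: since the two arguments coincide, swapping produces another valid pair $(v_\alpha, u_\alpha)$, and uniqueness of Sinkhorn potentials up to a common additive shift forces $v_\alpha = u_\alpha + c$ for some $c \in \R$. Because $\scal{1}{\mu} = 0$, this constant drops out: $\scal{v_\alpha}{\mu} = \scal{u_\alpha}{\mu}$, so $[\oteps(\cdot,\cdot)]^\prime(\alpha;\mu) = 2\scal{\nabla_1\oteps(\alpha,\alpha)}{\mu}$. Combining the three contributions,
\begin{equation*}
[S_\varepsilon(\cdot,\beta)]^\prime(\alpha;\mu) = \scal{\nabla_1 \oteps(\alpha,\beta)}{\mu} - \tfrac{1}{2}\cdot 2\scal{\nabla_1\oteps(\alpha,\alpha)}{\mu} = \scal{\nabla_1 \oteps(\alpha,\beta) - \nabla_1 \oteps(\alpha,\alpha)}{\mu},
\end{equation*}
which delivers directional differentiability and identifies the gradient in one stroke.

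The main obstacle is the convexity assertion, which cannot be read off the gradient identity: although $\oteps(\cdot,\beta)$ is convex as a marginal of the bi-convex $\oteps$, the correction $-\frac{1}{2}\oteps(\cdot,\cdot)$ is concave in general (since the diagonal of a bi-convex function is convex). Here the assumption that $\kerfun = e^{-\cost/\varepsilon}$ is positive definite is essential, and the plan is to invoke \cite[Thm.~1]{feydy2018interpolating}, where exactly this convexity is established. Their argument rewrites $S_\varepsilon$ through the Sinkhorn dual and shows that the potentially concave contribution of $-\frac{1}{2}\oteps(\alpha,\alpha)$ is absorbed by the convex RKHS-norm structure induced by the positive definite kernel $\kerfun$, producing a net convex functional in $\alpha$; positive definiteness is used precisely to guarantee that the resulting quadratic form on signed measures is nonnegative, so that the would-be concave piece is dominated by a convex term. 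Transcribing this to our notation is essentially bookkeeping and introduces no new difficulties beyond those already handled in \cite{feydy2018interpolating}.
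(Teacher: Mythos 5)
Your proposal is correct, and it supplies a derivation that the paper itself omits: Fact~\ref{f:Sinkhorn} is stated without proof, as an immediate consequence of \cref{rem:directional-derivatives-oteps} together with the convexity result of \cite{feydy2018interpolating}, which is exactly the route you take. The one genuinely non-obvious step -- that the diagonal term contributes $2\scal{\nabla_1\oteps(\alpha,\alpha)}{\mu}$ because the Sinkhorn potentials of $(\alpha,\alpha)$ satisfy $v_\alpha = u_\alpha + \mathrm{const}$ (by symmetry of $\cost$ and uniqueness up to additive shifts) and because $\scal{1}{\mu}=0$ for feasible directions -- is handled correctly, and deferring the convexity assertion to \cite[Thm.~1]{feydy2018interpolating} matches what the paper does.
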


We are now ready to prove \cref{thm:lip-continuity-total-variation-informal} in the paper.
We recall also the statement for reader's convenience.

\TLipschitzContinuityTV*
\begin{proof}
The first part is just a consequence of \cref{prop:lipschitz-continuity-total-variation2}
and \cref{rem:directional-derivatives-oteps}. The second part, follows from the first part and \cref{f:Sinkhorn}.
\end{proof}

\begin{remark}
\normalfont
It follows from the optimality conditions
\cref{eq:Dopt} that, for every  $x \in \supp(\alpha)$
and $y \in \supp(\beta)$,
\begin{equation*}
1 = \int_{\X} e^{\frac{u(x) + v(y) - \cost(x,y)}{\varepsilon}} d\beta(y)
    \quad\text{and}\quad
    1 = \int_{\X} e^{\frac{u(x) + v(y) - \cost(x,y)}{\varepsilon}} d\alpha(x),
\end{equation*}
hence,
\begin{equation}
    \int_{\X} e^{\frac{u\oplus v - \cost}{\varepsilon}} d \alpha\otimes\beta = 1.
\end{equation}
Then, recalling the definition of $\oteps$ in \cref{eq:dual_pb}
and that of its gradient, given above,
we have
\begin{equation}
    \oteps(\alpha,\beta) = 
    \scal{\nabla \oteps(\alpha,\beta)}{(\alpha,\beta)} - \varepsilon.
\end{equation}
Since, $\nabla \oteps$ is bounded and Lipschitz continuous, it  follows that $\oteps$ is Lipschitz continuous with respect to the total variation.
\end{remark}

We end the section by providing an independent proof of \cref{rem:directional-derivatives-oteps}, which is based on \cref{p:diff_of_max} and \cref{cor:Lipschitz2}.
\begin{proposition}
The function 
$\oteps\colon \prm(\dom)^2 \to \R$, defined in \cref{eq:dual_pb}, is continuous with respect to the total variation,
directionally differentiable, and,
for every $(\alpha,\beta) \in \prm(\dom)^2$
and every feasible direction 
$(\mu,\nu) \in \feas_{\prm(\dom)^2}(\alpha,\beta)$, we have
\begin{equation}
 \oteps^\prime(\alpha,\beta;\mu,\nu)
 = \scal{(u,v)}{(\mu,\nu)},
\end{equation}
where $(u,v) \in \cont(\X)^2$ is any
solution of problem \cref{eq:dual_pb}.
\end{proposition}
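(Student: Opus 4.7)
The plan is to apply \cref{p:diff_of_max} with $Z = \cont(\X)^2$, Banach space $\VV = \cont(\X)^2$ with dual $\BB = \meas(\X)^2$, constraint $\CC = \prm(\X)^2$, and
\[
g\bigl((u,v),(\alpha,\beta)\bigr) = \int u\,d\alpha + \int v\,d\beta - \varepsilon\int e^{(u(x)+v(y)-\cost(x,y))/\varepsilon}\,d\alpha(x)\,d\beta(y),
\]
so that $\oteps(\alpha,\beta) = \max_{(u,v) \in Z} g((u,v),(\alpha,\beta))$ by \cref{eq:dual_pb}. I would then verify the three hypotheses of \cref{p:diff_of_max}. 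For hypothesis $1$, since $g((u,v),\cdot)$ is the sum of a linear and a bilinear functional on $\BB$, its Gateaux derivative at $(\alpha,\beta)$ is represented by the element $D_2 g((u,v),(\alpha,\beta)) = (U,V) \in \cont(\X)^2$ with
\[
U(x) = u(x) - \varepsilon\int e^{(u(x)+v(y)-\cost(x,y))/\varepsilon}\,d\beta(y), \quad V(y) = v(y) - \varepsilon\int e^{(u(x)+v(y)-\cost(x,y))/\varepsilon}\,d\alpha(x);
\]
joint continuity of $D_2 g$ on $Z \times \BB$ follows by dominated convergence using $\kerfun(x,y)\in[e^{-\diameps},1]$. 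Hypothesis $2$ (nonemptiness of $S(\alpha,\beta)$) is the existence of Sinkhorn potentials established in \cref{sec:PFtheory}.

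The main step is hypothesis $3$, the continuous selection. I would define $\varphi(\alpha,\beta)=(u,v)$ to be the canonical pair of Sinkhorn potentials anchored at some fixed $x_o\in\X$ via $u(x_o)=0$ and satisfying $u = \rmap_\beta(v)$ and $v = \rmap_\alpha(u)$ pointwise on all of $\X$. By \cref{cor:Lipschitz2}, the map $\varphi$ is Lipschitz continuous from $(\prm(\X)^2,\nor{\cdot}_{TV})$ into $(\cont(\X)^2,\supnor{\cdot})$, hence continuous. This is the crucial and non-trivial step: it is precisely where the Perron-Frobenius / Birkhoff-Hopf contraction estimates of \cref{subsec:lipschitz-total-variation} enter the argument, and without them one cannot produce a continuous selection of dual maximizers.

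Once the hypotheses are verified, \cref{p:diff_of_max} yields continuity of $\oteps$ in total variation, directional differentiability, and the identity
\[
\oteps'(\alpha,\beta;\mu,\nu) = \scal{D_2 g(\varphi(\alpha,\beta),(\alpha,\beta))}{(\mu,\nu)}
\]
for every feasible direction $(\mu,\nu)\in\feas_{\prm(\X)^2}(\alpha,\beta)$. To conclude, I would specialize the right-hand side: because the canonical potentials satisfy the fixed-point equations \cref{eq:fixed-point-sinkhorn} pointwise on the whole of $\X$, the two exponential integrals appearing in $U$ and $V$ are identically equal to $1$. Hence $D_2 g(\varphi(\alpha,\beta),(\alpha,\beta)) = (u - \varepsilon,\, v - \varepsilon)$, and pairing with a feasible direction gives $\int (u-\varepsilon)\,d\mu + \int (v-\varepsilon)\,d\nu = \int u\,d\mu + \int v\,d\nu$, since $\mu(\X)=\nu(\X)=0$. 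The very same observation shows that replacing $(u,v)$ by $(u+t,v-t)$ leaves the pairing unchanged, so the formula is insensitive to the additive-constant ambiguity in the Sinkhorn potentials and the stated identity holds for any solution of \cref{eq:dual_pb}.
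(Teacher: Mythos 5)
Your proposal follows essentially the same route as the paper's own proof: the same auxiliary function $g$, the same application of \cref{p:diff_of_max} with the continuous selection of anchored Sinkhorn potentials supplied by \cref{cor:Lipschitz2}, and the same final simplification using the optimality conditions together with the fact that feasible directions have zero total mass. The only cosmetic difference is that you discharge the ``any solution'' clause via invariance under the shift $(u+t,v-t)$, while the paper plugs the optimality conditions into $D_2 g$ for an arbitrary maximizer directly; both arguments are at the same level of rigor.
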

\begin{proof}
Let $g\colon \cont(\X)^2\times \finmeas(\X)^2 \to \R$
be such that,
\begin{equation}
    g((u,v),(\alpha,\beta)) = \scal{u}{\alpha} + \scal{v}{\beta}-\varepsilon\scal{\exp( (u \oplus v - \cost)/\varepsilon)}{\alpha\otimes \beta}.
\end{equation}
Then, for every $(\alpha,\beta) \in \prm(\X)^2$,
\begin{equation}
\label{eq:20190512a}
    \oteps(\alpha,\beta) = \max_{(u,v) \in \cont(\X)^2} g((u,v),(\alpha,\beta)).
\end{equation}
Thus, $\oteps$ is of the type considered in \cref{p:diff_of_max}.
Let $(u,v) \in \cont(\X)$. Then the function $g((u,v), \cdot)$
admits directional derivatives and, for every $(\alpha,\beta), (\mu,\nu) \in \finmeas(\X)^2$, we have
\begin{multline}
\label{eq:20190511e}
    [g((u,v),\cdot)]^\prime((\alpha,\beta);(\mu,\nu)) \\[1ex]=
    \Big\langle u - \varepsilon e^{\frac{u}{\varepsilon}} \int_{\X} e^{\frac{v - c(\cdot,y)}{\varepsilon}} d\beta(y), \mu \Big\rangle
    + \Big\langle v - \varepsilon e^{\frac{v}{\varepsilon}} \int_{\X} e^{\frac{u - c(x,\cdot)}{\varepsilon}} d\alpha(x), \nu \Big\rangle.
\end{multline}
Indeed, for every $t >0$,
\begin{align*}
    \frac{1}{t} \big[g((u,v),& (\alpha,\beta) + t(\mu,\nu)) - g((u,v),(\alpha,\beta)) \big] \\
    &= \frac{1}{t}
    \big[ \scal{u}{\alpha+ t \mu} + \scal{v}{\beta + t \nu}-\varepsilon\scal{\exp( (u \oplus v - \cost)/\varepsilon)}{(\alpha + t \mu)\otimes (\beta + t \nu)}\\
    &\qquad- \scal{u}{\alpha} -\scal{v}{\beta}
    +\varepsilon\scal{\exp( (u \oplus v - \cost)/\varepsilon)}{\alpha\otimes \beta} \big]\\
    &= 
     \scal{u}{\mu} + \scal{v}{\nu} 
    -  \varepsilon\scal{\exp( (u \oplus v - \cost)/\varepsilon)}{\alpha\otimes \nu}
     -  \varepsilon\scal{\exp( (u \oplus v - \cost)/\varepsilon)}{\mu\otimes \beta}\\[0.8ex]
    &\qquad - t\varepsilon\scal{\exp( (u \oplus v - \cost)/\varepsilon)}{\mu\otimes \nu},
\end{align*}
hence
\begin{multline*}
    [g((u,v),\cdot)]^\prime((\alpha,\beta);(\mu,\nu)) \\[0.8ex]
    = \scal{u}{\mu} + \scal{v}{\nu} 
    -  \varepsilon\scal{\exp( (u \oplus v - \cost)/\varepsilon)}{\alpha\otimes \nu}
     -  \varepsilon\scal{\exp( (u \oplus v - \cost)/\varepsilon)}{\mu\otimes \beta}
\end{multline*}
and \cref{eq:20190511e} follows. Thus,
the function $g$ is G\^ateaux differentiable
with respect to the second variable, with derivative
\begin{align*}
    D_{2}g ((u,v),(\alpha,\beta))
    &= \Big(u - \varepsilon e^{\frac{u}{\varepsilon}} \int_{\X} e^{\frac{v - c(\cdot,y)}{\varepsilon}} d\beta(y), 
    v - \varepsilon e^{\frac{v}{\varepsilon}} \int_{\X} e^{\frac{u - c(x,\cdot)}{\varepsilon}} d\alpha(x)\Big)\\
    &= (u,v) - \varepsilon (e^{\frac{u}{\varepsilon}} \lmap_{\beta} e^{\frac{v}{\varepsilon}}, e^{\frac{v}{\varepsilon}} \lmap_{\alpha} e^{\frac{u}{\varepsilon}}) \in \cont(\X)^2,
\end{align*}
which is jointly continuous, since
the maps $(u,\alpha)\mapsto \lmap_{\alpha} e^{u/\varepsilon}$ and
$(v,\beta)\mapsto \lmap_{\beta} e^{v/\varepsilon}$
are continuous.
Moreover, it follows from \cref{cor:Lipschitz2} that there exists a continuous selection of Sinkhorn potentials.
Therefore, it follows from \cref{p:diff_of_max} that
$\oteps$ is directionally differentiable and 
\begin{equation}
    \oteps^\prime((\alpha,\beta);(\mu,\nu))
    = \max_{(u,v)\text{ solution of } \cref{eq:20190512a}}
    \scal{D_{2}g ((u,v),(\alpha,\beta))}{(\mu,\nu)}.
\end{equation}
However, if $(u,v)$ is a solution of  \cref{eq:20190512a}, it follows from the optimality conditions \cref{eq:Dopt} that
\begin{equation}
    e^{\frac{u}{\varepsilon}} \int_{\X} e^{\frac{v - c(\cdot,y)}{\varepsilon}} d\beta(y) = 1
    \quad\text{and}\quad
    e^{\frac{v}{\varepsilon}} \int_{\X} e^{\frac{u - c(x,\cdot)}{\varepsilon}} d\alpha(x) = 1,
\end{equation}
hence
\begin{equation}
\scal{D_{2}g ((u,v),(\alpha,\beta))}{(\mu,\nu)}
= \scal{(u - \varepsilon, v - \varepsilon)}{(\mu, \nu)} = \scal{(u , v )}{(\mu, \nu)},
\end{equation}
where we used the fact that, since $(\mu,\nu) = t(\mu_1 - \mu_2, \nu_1- \nu_2)$
for some $t>0$ and $\mu_1,\mu_2,\nu_1,\nu_2 \in \prm(\X)$, we have $\scal{1}{\mu} = t\scal{1}{\mu_1 - \mu_2} = 0$ and $\scal{1}{\nu}=t \scal{1}{\nu_1 - \nu_2} =0$.
\end{proof}

\section{The Frank-Wolfe algorithm for Sinkhorn barycenters }\label{sec:app-frank-wolfe-algorithm}
In this section we finally analyze the Frank-Wolfe algorithm for the Sinkhorn barycenters and give convergence results.
The following result is a direct consequence of \cref{thm:Sinkhornalgo} and \cref{rem:directional-derivatives-oteps}.

\begin{theorem}
\label{thm:Sinkhorn2}
Let $(\tilde{u}^{(\Siter)})_{\Siter \in \N}$ be generated according to 
\cref{alg:Sinkhorn_cont}.
Then, 
\begin{equation}
\label{eq:Sinkhorn2}
(\forall\, \Siter \in \N)\quad\lVert \tilde{u}^{(\Siter)} - \nabla_1 \oteps(\alpha,\beta)\rVert_{\infty} \leq 
\lambda^{2\Siter}
\bigg(\frac{\diam + \max\nolimits_{\X} u^{(0)} - \min\nolimits_{\X} u^{(0)}}{\varepsilon}\bigg),
\end{equation}
where $u^{(\Siter)} = \varepsilon \log f^{(\Siter)}$ and 
$\tilde{u}^{(\Siter)} = u^{(\Siter)} - u^{(\Siter)}(x_o)$.
\end{theorem}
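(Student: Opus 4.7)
The plan is to recognize this statement as an immediate consequence of \cref{thm:Sinkhornalgo} combined with the definition of $\nabla_1\oteps$ given in \cref{rem:directional-derivatives-oteps}. Concretely, $\nabla_1\oteps(\alpha,\beta)$ is defined as the first component of the pair of Sinkhorn potentials $(u,v)$ associated to $(\alpha,\beta)$, normalized by the convention $u(x_o)=0$. This $u$ is exactly the first component of the solution of the DAD problem \cref{eq:fixed-point-f-g} under the normalization $f(x_o)=1$, via the correspondence $u = \varepsilon\log f$.

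Next, I would invoke \cref{thm:Sinkhornalgo} applied to the sequence $(f^{(\Siter)})_{\Siter\in\N}$ produced by \cref{alg:Sinkhorn_cont}. That theorem provides, for the renormalized iterates $\tilde f^{(\Siter)} = f^{(\Siter)}/f^{(\Siter)}(x_o)$ and the corresponding potentials $\tilde u^{(\Siter)} = \varepsilon\log\tilde f^{(\Siter)} = u^{(\Siter)} - u^{(\Siter)}(x_o)$, the bound \cref{eq:sinkalgo3}:
\begin{equation*}
\lVert \tilde u^{(\Siter)} - u\rVert_\infty \leq \lambda^{2\Siter}\bigg(\frac{\diam + \max\nolimits_{\X} u^{(0)} - \min\nolimits_{\X} u^{(0)}}{\varepsilon}\bigg),
\end{equation*}
where $u$ is the Sinkhorn potential of $(\alpha,\beta)$ with $u(x_o)=0$.

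Finally, substituting the identification $u = \nabla_1\oteps(\alpha,\beta)$ into the above inequality yields exactly \cref{eq:Sinkhorn2}. There is essentially no obstacle: the only check is that the normalization $u^{(\Siter)}(x_o)$-subtraction used to define $\tilde u^{(\Siter)}$ matches the convention $u(x_o)=0$ fixing $\nabla_1\oteps$, which is immediate. Thus the result reduces to a one-line substitution once the two cited ingredients are in place.
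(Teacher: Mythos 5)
Your proposal is correct and follows exactly the paper's own route: the paper proves this result by citing Theorem \ref{thm:Sinkhornalgo} (specifically the bound \cref{eq:sinkalgo3}) together with the identification of $\nabla_1\oteps(\alpha,\beta)$ with the Sinkhorn potential $u$ normalized by $u(x_o)=0$ from \cref{rem:directional-derivatives-oteps}. Your check that $\tilde u^{(\Siter)} = u^{(\Siter)} - u^{(\Siter)}(x_o) = \varepsilon\log\bigl(f^{(\Siter)}/f^{(\Siter)}(x_o)\bigr)$ matches the normalization convention is the only point needing verification, and you handle it correctly.
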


Therefore, in view of \cref{f:Sinkhorn}, \cref{thm:Sinkhorn2}, and \cref{p:inexactgrad},
we can address the problem of the Sinkhorn barycenter \cref{eq:sinkhorn-barycenter} via the Frank-Wolfe \cref{alg:abstract-FW}. Note that,
according to \cref{p:inexactgrad}\ref{p:inexactgrad_ii}, since the diameter of $\prob(\X)$ with respect to $\nor{\cdot}_{TV}$ is $2$,  have that the curvature of $\bary$ is upper bounded by
\begin{equation}
\label{eq:OTcurvature}
    C_{\bary} \leq 24\varepsilon e^{3\diameps}.
\end{equation}
Let $k \in \N$ 
and $\alpha_k$ be the current iteration.
For every $j \in \{1,\dots, m\}$, 
we can compute $\nabla_1 \oteps(\alpha_k,\beta_j)$ and 
$\nabla_1 \oteps(\alpha_k,\alpha_k)$ by the Sinkhorn-Knopp algorithm. 
Thus, by \cref{eq:Sinkhorn2}, we find $\Siter \in \N$ large enough so that $\lVert \tilde{u}_j^{(\Siter)} - \nabla_1 \oteps(\alpha_k,\beta_q) \rVert_\infty \leq  \precision_{1,k}/8$ and $\lVert \tilde{p}^{(\Siter)} - \nabla_1 \oteps(\alpha_k,\alpha_k) \rVert_\infty \leq  \precision_{1,k}/8$ and we set
\begin{equation}
\tilde{u}^{(\Siter)}:= \sum_{j=1}^m \omega_j 
    \tilde{u}_j^{(\Siter)} - \tilde{p}^{(\Siter)}.
\end{equation}
Then,
\begin{equation}
    \lVert \tilde{u}^{(\Siter)}
     - \nabla \bary(\alpha_k)\rVert_\infty \leq \frac{\precision_{1,k}}{4}.
\end{equation}
Now, Frank-Wolf \cref{alg:abstract-FW} (in the version 
considered in \cref{p:inexactgrad}\ref{p:inexactgrad_i}) requires finding
\begin{equation}
\label{eq:20190508a}
    \eta_{k+1} \in \argmin_{\eta \in \prob(\X)} \langle \tilde{u}^{(\Siter)}, \eta - \alpha_k \rangle
\end{equation}
and make the update
\begin{equation}
    \alpha_{k+1} = (1 - \gamma_k) \alpha_k + \gamma_k \eta_{k+1}.
\end{equation}
Since the solution of \cref{eq:20190508a} is a Dirac measure (see \cref{sec:algorithm-practice} in the paper),
the algorithm reduces to
\begin{equation}
    \begin{cases}
    \text{find } x_{k+1} \in \X \text{ such that }
    \tilde{u}^{(\Siter)}(x_{k+1}) \leq
    \min_{x \in \X} \tilde{u}^{(\Siter)}(x) + \dfrac{\precision_{2,k}}{2} \\[1ex]
    \alpha_{k+1} = (1 - \gamma_k) \alpha_k + \gamma_k \delta_{x_{k+1}}.
    \end{cases}
\end{equation}
So, if we initialize the algorithm with $\alpha_0 = \delta_{x_0}$, then any $\alpha_k$ will be a discrete probability measure with support contained in $\{x_0, \dots, x_k\}$.  This implies that
if all the $\beta_j$'s are probability measures with finite support, the computation of
 $\nabla_1 \oteps(\alpha_k,\beta_j)$ by the Sinkhorn algorithm can be reduced to a fully discrete algorithm, as showed in \cref{rmk:discrete-sinkhorn}. More precisely,
 assume that
 \begin{equation}
(\forall\, j=1,\dots, m)\quad
\beta_j = \sum_{i_2=0}^n b_{j,i_2} \delta_{y_{j,i_2}}.
 \end{equation}
 and that at iteration $k$ we have
 \begin{equation}
 \alpha_k = \sum_{i_1=0}^k a_{k,i_1} \delta_{x_{i_1}}.
 \end{equation}
 Set 
 \begin{equation}
\mathsf{a}_k =
\begin{bmatrix}
a_{k,0} \\
\vdots\\
a_{k,k}
\end{bmatrix} \in \R^{k+1},
\ \ \mathsf{M}_0 = 
\begin{bmatrix}
a_{k,0}\kerfun(x_{0},x_{0}) a_{k,0} & \dots & a_{k,0}\kerfun(x_{0},x_{k}) a_{k,k}\\
\vdots & \ddots & \vdots\\
a_{k,k}\kerfun(x_{k},x_{0}) a_{k,0} & \dots & a_{k,k}\kerfun(x_{k},x_{k}) a_{k,k}
\end{bmatrix} \in \R^{(k+1)\times(k+1)}
\end{equation}
and, for every $j=1\,\dots, m$, 
\begin{equation}
\mathsf{b}_j =
\begin{bmatrix}
b_{j,0} \\
\vdots\\
b_{j,n}
\end{bmatrix} \in \R^{n+1},
\ \ 
\mathsf{M}_j = 
\begin{bmatrix}
a_{k,0}\kerfun(x_{0},y_{j,0}) b_{j,0} & \dots & a_{k,0}\kerfun(x_{0},y_{j,n}) b_{j,n}\\
\vdots & \ddots & \vdots\\
a_{k,k}\kerfun(x_{k},y_{j,0}) b_{j,0} & \dots & a_{k,n}\kerfun(x_{k},y_{j,n}) b_{j,n}
\end{bmatrix} \in \R^{(k+1)\times(n+1)}.
\end{equation}
Then, run \cref{algo:sinkalgo_disc},
with input $\mathsf{a}_k$, $\mathsf{a}_k$,
and $\mathsf{M}_0$ 
to get $(\mathsf{e}^{(\Siter)},\mathsf{h}^{(\Siter)})$,
and, for every $j=1,\dots, m$,
with input $\mathsf{a}_k$, $\mathsf{b}_j$,
and $\mathsf{M}_j$ to get $(\mathsf{f}^{(\Siter)}_j,\mathsf{g}^{(\Siter)}_j)$.
So, we have, 
 \begin{equation}
(\forall\, \Siter \in \N) \quad
\begin{cases}
\phantom{(\forall\,j=1,\dots,m)\ \ }
\mathsf{h}^{(\Siter+1)}= \dfrac{\mathsf{a}_k}{\mathsf{M}_0^\top \mathsf{e}^{(\Siter)}},
\quad
\mathsf{e}^{(\Siter+1)}= \dfrac{\mathsf{a}_k}{\mathsf{M}_0 \mathsf{h}^{(\Siter+1)}}\\[2ex]
(\forall\,j=1,\dots,m)\ \  \mathsf{g}_j^{(\Siter+1)}= \dfrac{\mathsf{b}_j}{\mathsf{M}_j^\top \mathsf{f}_j^{(\Siter)}},
\quad
\mathsf{f}_j^{(\Siter+1)}= \dfrac{\mathsf{a}_k}{\mathsf{M}_j \mathsf{g}_j^{(\Siter+1)}}.
\end{cases}
 \end{equation}
Then, according to \cref{rmk:discrete-sinkhorn}, for every $\Siter \in \N$, we have
\begin{equation}
 \label{eq:20190523b}
(\forall\, x \in \X)\ 
\begin{cases}
\displaystyle e^{(\Siter)}(x)^{-1} = \sum_{i_2=0}^k \kerfun(x,x_{i_2}) \mathsf{h}^{(\Siter-1)}_{i_2} a_{k,i_2},\\[1ex]
\displaystyle  p^{(\Siter)}(x) = \varepsilon \log e^{(\Siter)}(x) = - \varepsilon \log \sum_{i_2=0}^k \kerfun(x,x_{i_2}) \mathsf{h}^{(\Siter-1)}_{i_2} a_{k,i_2}\\[1ex]
  \tilde{p}^{(\Siter)}(x) = p^{(\Siter)}(x) - p^{(\Siter)}(x_o).
  \end{cases}
 \end{equation}
 and, for every $j=1,\dots, m$,
 \begin{equation}
 \label{eq:20190523a}
(\forall\, x \in \X)\ 
\begin{cases}
\displaystyle f_j^{(\Siter)}(x)^{-1} = \sum_{i_2=0}^n \kerfun(x,y_{i_2}) \mathsf{g}^{(\Siter-1)}_{j,i_2} b_{j,i_2},\\[1ex]
\displaystyle  u_j^{(\Siter)}(x) = \varepsilon \log f_j^{(\Siter)}(x) = - \varepsilon \log \sum_{i_2=0}^n \kerfun(x,y_{i_2}) \mathsf{g}^{(\Siter-1)}_{j,i_2} b_{j,i_2}\\[1ex]
  \tilde{u}_j^{(\Siter)}(x) = u_j^{(\Siter)}(x) - u_j^{(\Siter)}(x_o).
  \end{cases}
 \end{equation}

Since the 
$\tilde{u}_j^{(\Siter)}$'s and $u_j^{(\Siter)}$'s,
and $\tilde{p}^{(\Siter)}$ and $p^{(\Siter)}$,
differ for a constant only, the final algorithm can be written as in \cref{algo:FW-Baricenters}.
We stress that this algorithm is even more general than
\cref{alg:practical-FW} since, in the computation of the Sinkhorn potentials and in their minimization, errors have been taken into account.

\begin{algorithm}
\caption{Frank-Wolfe algorithm for Sinkhorn barycenter}
\label{algo:FW-Baricenters}
Let $\alpha_0 = \delta_{x_0}$ for some $x_0 \in \X$.
Let $(\precision_k)_{k \in \N} \in \R_{+}^\N$ be such that
$\Delta_k/\gamma_k$ is nondecreasing. Define
\begin{equation*}
\begin{array}{l}
\text{for}\;k=0,1,\dots\\[0.7ex]
\left\lfloor
\begin{array}{l}
\text{run \cref{algo:sinkalgo_disc} 
with input $\mathsf{a}_k, \mathsf{a}_k, \mathsf{M}_0$  till } \lambda^{2\Siter} \diameps \leq \frac{\precision_{1,k}}{8} \rightarrow  \mathsf{h} \in \R^{k+1}  \\[0.7ex]
\text{compute } p \text{ via \cref{eq:20190523b}} \text{ with } \mathsf{h}\\[0.7ex]
\text{for}\;j=1,\dots m\\[0.7ex]
\left\lfloor
\begin{array}{l}
\text{run \cref{algo:sinkalgo_disc} 
with input $\mathsf{a}_k, \mathsf{b}_j, \mathsf{M}_j$  till } \lambda^{2\Siter} \diameps \leq \frac{\precision_{1,k}}{8} \rightarrow  \mathsf{g}_j \in \R^{n+1}  \\[0.7ex]
\text{compute } u_j \text{ via \cref{eq:20190523a}} \text{ with } \mathsf{g}_j \\[0.7ex]
\end{array}
\right.\\[1ex]
\text{set } u
= \sum_{j=1}^m \omega_j u_j - p\\[1ex]
\text{find } x_{k+1} \in \X \text{ such that }
    u(x_{k+1}) \leq
    \min_{x \in \X} u(x) + \dfrac{\precision_{2,k}}{2}\\[1ex]
\alpha_{k+1} = (1 - \gamma_k) \alpha_k + \gamma_k \delta_{x_{k+1}}.
\end{array}
\right.
\end{array}
\end{equation*}
\end{algorithm}

We now give a final converge theorem, of which 
\cref{thm:sinkhorn-barycenters-finite-case} in the paper is a special case.
\begin{theorem}\label{thm:full_convergence_FW_with_error}
Suppose that $\beta_1, \dots, \beta_m \in \prm(\X)$ are probability measures with finite support, each of cardinality $n \in \N$. Let $(\alpha_k)_{k \in \N}$ be generated by \cref{algo:FW-Baricenters}. Then, for every $k \in \N$,
\begin{equation}
\bary(\alpha_k) - \min_{\alpha \in \prm(\X)} \bary(\alpha) \leq \gamma_k 
24\varepsilon e^{3\diameps} + 2 \Delta_{1,k} + \Delta_{2,k}
\end{equation}
\end{theorem}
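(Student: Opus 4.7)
The plan is to cast \cref{algo:FW-Baricenters} as a concrete instance of the abstract inexact Frank-Wolfe scheme (\cref{alg:abstract-FW}) applied to the objective $\bary$ on $\CC=\prob(\X)\subset\meas(\X)=\cont(\X)^\ast$, and then invoke the abstract convergence bound in \cref{thm:FWA}. Once the hypotheses are verified, the bound in the statement follows by combining three ingredients: the curvature estimate of $\bary$, the explicit control of the gradient-approximation error coming from Sinkhorn-Knopp, and the tolerance $\precision_{2,k}/2$ in the linear sub-problem.

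First, I would verify the abstract setup. Convexity of $\bary$ and the existence of a map $\nabla \bary\colon\prob(\X)\to\cont(\X)$ representing its directional derivatives on feasible directions are given by \cref{f:Sinkhorn}. By \cref{thm:lip-continuity-total-variation-informal} the map $\nabla\bary$ is $6\varepsilon e^{3\diameps}$-Lipschitz continuous with respect to the total variation, so \cref{p:inexactgrad}\ref{p:inexactgrad_ii} applied with $\mathrm{diam}_{\mathrm{TV}}(\prob(\X))=2$ gives the curvature bound $C_{\bary}\leq 6\varepsilon e^{3\diameps}\cdot 4=24\varepsilon e^{3\diameps}$, namely \cref{eq:OTcurvature}.

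Second, I would bound the error between the approximate gradient $u=\sum_{j=1}^m\omega_j u_j - p$ produced inside the $k$-th outer iteration and the true gradient $\nabla\bary(\alpha_k)=\sum_{j=1}^m\omega_j\nabla_1\oteps(\alpha_k,\beta_j)-\nabla_1\oteps(\alpha_k,\alpha_k)$. Since each inner Sinkhorn loop is stopped as soon as $\lambda^{2\Siter}\diameps\leq \precision_{1,k}/8$, \cref{thm:Sinkhorn2} (with the initialization $u^{(0)}=0$, so that $\max_\X u^{(0)}-\min_\X u^{(0)}=0$) yields $\supnor{u_j-\nabla_1\oteps(\alpha_k,\beta_j)}\leq \precision_{1,k}/8$ and $\supnor{p-\nabla_1\oteps(\alpha_k,\alpha_k)}\leq \precision_{1,k}/8$. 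Using the triangle inequality together with $\sum_j\omega_j=1$,
\begin{equation}
\supnor{u-\nabla\bary(\alpha_k)}\leq \precision_{1,k}/4.
\end{equation}

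Third, because the linear sub-problem $\min_{\eta\in\prob(\X)}\scal{u}{\eta-\alpha_k}$ is attained on the extreme points of $\prob(\X)$, i.e.\ on Dirac measures, the inner minimization of \cref{algo:FW-Baricenters} over $x\in\X$ with tolerance $\precision_{2,k}/2$ produces $\zz_{k+1}=\delta_{x_{k+1}}\in\prob(\X)$ such that $\scal{u}{\zz_{k+1}}\leq\min_{\eta\in\prob(\X)}\scal{u}{\eta}+\precision_{2,k}/2$. Applying \cref{p:inexactgrad}\ref{p:inexactgrad_i} with $\precision_{1,k}$ and $\precision_{2,k}$ in the role of $\Delta_{1,k}$, $\Delta_{2,k}$ there, and using once more $\mathrm{diam}_{\mathrm{TV}}(\prob(\X))=2$, one obtains
\begin{equation}
\bary^\prime(\alpha_k;\zz_{k+1}-\alpha_k)\leq \min_{\eta\in\prob(\X)}\bary^\prime(\alpha_k;\eta-\alpha_k)+\tfrac{1}{2}\bigl(2\precision_{1,k}+\precision_{2,k}\bigr).
\end{equation}
Hence \cref{algo:FW-Baricenters} is exactly an instance of \cref{alg:abstract-FW} run on $\bary$ with effective precision $\precision_k=2\precision_{1,k}+\precision_{2,k}$ (whose monotonicity w.r.t.\ $\gamma_k$ is the standing assumption on $(\precision_k)$ in \cref{algo:FW-Baricenters}). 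Plugging these ingredients into \cref{thm:FWA} gives
\begin{equation}
\bary(\alpha_k)-\min_{\alpha\in\prob(\X)}\bary(\alpha)\leq C_\bary\gamma_k+\precision_k \leq 24\varepsilon e^{3\diameps}\gamma_k+2\precision_{1,k}+\precision_{2,k},
\end{equation}
which is the claim.

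The main obstacle is the bookkeeping in the third step: one must propagate an $\nor{\cdot}_\infty$ bound on the gradient error through the bilinear pairing with $\meas(\X)$ while respecting the constant $\tfrac12$ that appears in \cref{p:inexactgrad}\ref{p:inexactgrad_i}, and simultaneously identify the $x$-minimization tolerance $\precision_{2,k}/2$ of \cref{algo:FW-Baricenters} with the tolerance on the linear sub-problem required by the abstract inexact Frank-Wolfe scheme; every other piece is already done in the earlier sections (Lipschitz continuity, curvature, contraction of Sinkhorn-Knopp, abstract convergence).
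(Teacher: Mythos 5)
Your proposal is correct and follows essentially the same route as the paper: the paper's proof simply invokes \cref{thm:FWA}, \cref{p:inexactgrad}, and the curvature bound \cref{eq:OTcurvature} with $\Diam(\prm(\X))=2$, and the details you spell out (the $\precision_{1,k}/4$ gradient-error bound from the Sinkhorn-Knopp stopping rule and the reduction of the linear sub-problem to Dirac measures) are exactly the bookkeeping the paper carries out in the discussion preceding \cref{algo:FW-Baricenters}.
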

\begin{proof}
It follows from \cref{thm:FWA}, \cref{p:inexactgrad}, and \cref{eq:OTcurvature},
recalling that $\Diam(\prm(\X))=2$.
\end{proof}


\section{Sample complexity of Sinkhorn potential}\label{sec:sample-complexity-sinkhorn-potentials}

In the following we will denote by $\cont^s(\X)$ the space of $s$-differentiable functions with continuous derivatives and by $W^{s,p}(\X)$ the Sobolev space of functions $f\colon \X \to \R$ with $p$-summable weak derivatives up to order $s$ \cite{adams2003sobolev}. We denote by $\nor{\cdot}_{s,p}$ the corresponding norm. 

The following result shows that under suitable smoothness assumptions on the cost function $\dist$, the Sinkhorn potentials are uniformly bounded as functions in a suitable Sobolev space of corresponding smoothness. This fact will play a key role in approximating the Sinkhorn potentials of general distributions in practice. 

\begin{theorem}[Proposition 2 in \cite{genevay2018sample}]\label{thm:sinkhorn-potentials-uniformly-bounded}
Let $\X$ be a closed bounded domain with Lipschitz boundary
 in $\R^d$ (\cite[Definition 4.9]{adams2003sobolev}) and let $\dist \in \cont^{s+1}(\X\times\X)$. Then for every $(\alpha,\beta)\in\prob(\X)^2$, the associated Sinkhorn potentials $(u,v)\in\cont(\X)^2$ are functions in $W^{s,\infty}(\X)$. Moreover, let $x_o\in\X$. Then there exists a constant $\sinkconst>0$, depending only on $\varepsilon,s$ and $\X$, such that for every $(\alpha,\beta)\in\prob(\X)^2$ the associated Sinkhorn potentials $(u,v)\in\cont(\X)^2$ with $u(x_o) = 0$ satisfies $\nor{u}_{s,\infty},\nor{v}_{s,\infty}\leq\sinkconst$. 
\end{theorem}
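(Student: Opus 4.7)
The plan is to differentiate the Sinkhorn fixed-point identity $u(x)=-\varepsilon\log F_\beta(x)$, where
\begin{equation}
F_\beta(x) \;:=\; \int_\X e^{(v(y)-\cost(x,y))/\varepsilon}\,d\beta(y),
\end{equation}
and transfer the $\cont^{s+1}$ regularity of $\cost$ to $u$ by passing derivatives under the integral sign. Because $\X$ is compact and $\cost\in\cont^{s+1}(\X\times\X)$, Fa\`a di Bruno writes every mixed partial $\partial_x^\gamma e^{-\cost(x,y)/\varepsilon}$ with $|\gamma|\leq s+1$ as a polynomial in $\{\partial_x^{\gamma'}\cost(x,y)\}_{|\gamma'|\leq|\gamma|}$ and $\varepsilon^{-1}$ multiplied by $e^{-\cost(x,y)/\varepsilon}\leq 1$, hence uniformly bounded in $y\in\X$. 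Dominated convergence then gives $F_\beta\in\cont^{s+1}(\X)$, and continuity of $v$ on the compact $\X$ implies $\inf_\X F_\beta>0$, so $u=-\varepsilon\log F_\beta\in\cont^{s+1}(\X)\hookrightarrow W^{s,\infty}(\X)$. Swapping the roles of $\alpha$ and $\beta$ yields the same conclusion for $v$.

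For the uniform bound, I would invoke \cref{cor:dad-solutions-bounded}: with the normalization $u(\anchor)=0$ one has $\supnor{u}\leq\diam$ and $\supnor{v}\leq 2\diam$, so $e^{v(y)/\varepsilon}\leq e^{2\diam/\varepsilon}$ and $e^{-3\diam/\varepsilon}\leq F_\beta(x)\leq e^{2\diam/\varepsilon}$, \emph{independently of $(\alpha,\beta)$}. Writing $\partial_x^\gamma F_\beta(x)=\int e^{v(y)/\varepsilon}\,\partial_x^\gamma e^{-\cost(x,y)/\varepsilon}\,d\beta(y)$ and inserting the Fa\`a di Bruno expression above yields
\begin{equation}
\supnor{\partial^\gamma F_\beta}\;\leq\; e^{2\diam/\varepsilon}\, P_{|\gamma|}\!\left(\varepsilon^{-1},\,\{\supnor{\partial^{\gamma'}\cost}\}_{|\gamma'|\leq|\gamma|}\right)
\end{equation}
for a universal polynomial $P_{|\gamma|}$. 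A second application of Fa\`a di Bruno to $u=-\varepsilon\log F_\beta$ then combines these bounds with the uniform lower bound on $F_\beta$ to produce $\supnor{\partial^\gamma u}\leq \sinkconst$ for every $|\gamma|\leq s$, with $\sinkconst$ depending only on $\varepsilon,s,\X,\cost$. The argument for $v$ is symmetric, using the analogue $v=\rmap_\alpha(u)$ and, if needed, a constant shift that does not affect higher-order derivative norms (while the $L^\infty$ bound on $v$ has already been obtained).

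The main obstacle is purely combinatorial book-keeping: controlling all the terms produced by Fa\`a di Bruno uniformly in the multi-index and aggregating them into a single explicit constant, while carefully tracking the (generally unfavourable) $\varepsilon^{-1}$ dependence that appears at every differentiation of the exponential. The Lipschitz-boundary hypothesis on $\partial\X$ enters only at the end, to identify $W^{s,\infty}(\X)$ with $\cont^{s-1,1}(\X)$ via the standard Sobolev extension/embedding, so that the pointwise derivative estimates constructed above are in fact the desired $W^{s,\infty}$-norm estimates.
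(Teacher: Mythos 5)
The paper does not actually prove this statement: it is imported from \cite{genevay2018sample} (their Proposition~2/Theorem~2), with only the remark that the argument extends from $\cost\in\cont^{\infty}$ to $\cost\in\cont^{s+1}$, and your reconstruction is precisely the argument used in that reference. Differentiating $u=-\varepsilon\log\int e^{(v(y)-\cost(x,y))/\varepsilon}\,d\beta(y)$ under the integral so that only $e^{-\cost(x,\cdot)/\varepsilon}$ is hit by $\partial_x^{\gamma}$, and making every bound uniform in $(\alpha,\beta)$ through the normalization-dependent estimates $\supnor{u}\leq\diam$, $\supnor{v}\leq 2\diam$ of \cref{cor:dad-solutions-bounded} (which also give the uniform positive lower bound on the integral needed for the outer Fa\`a di Bruno step), is correct and yields the claimed uniform $W^{s,\infty}$ bound.
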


In the original statement of \cite[Proposition 2]{genevay2018sample} the above result is formulated for $\dist\in\cont^{\infty}(\X)$ for simplicity. However, as clarified by the authors, it holds also for the more general case $\dist\in\cont^{s+1}(\X)$.

\begin{lemma}\label{lem:s-infty-norm-of-products-and-exponentials}
Let $\X\subset\R^d$ be a closed bounded domain with Lipschitz boundary 
and let $u,u'\in W^{s,\infty}(\X)$. Then the following holds
\begin{enumerate}[{\rm(i)}]
    \item \label{item:timesconst} $\nor{uu'}_{s,\infty} \leq \timesconst \nor{u}_{s,\infty}\nor{u'}_{s,\infty}$,
    \item \label{item:expconst} $\nor{e^{u}}_{s,\infty} 
    \leq \nor{e^{u}}_{\infty} (1 + \expconst \nor{u}_{s,\infty})$,
\end{enumerate}
where $\timesconst = \timesconst(s,d)$ and $\expconst = \expconst(s,d)>0$ depend only
 on the dimension $d$ and the order of differentiability $s$
but not on $u$ and $u'$.
\end{lemma}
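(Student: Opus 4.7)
For part $(i)$, my plan is to invoke the multi-variate Leibniz rule applied to each partial derivative of order at most $s$: for every multi-index $\alpha \in \mathbb{N}^d$ with $|\alpha| \leq s$,
\[
D^\alpha(uu') = \sum_{\beta \leq \alpha} \binom{\alpha}{\beta} (D^\beta u)(D^{\alpha-\beta} u').
\]
Taking essential suprema and bounding each $\|D^\beta u\|_\infty \leq \|u\|_{s,\infty}$ and $\|D^{\alpha-\beta} u'\|_\infty \leq \|u'\|_{s,\infty}$ yields $\|D^\alpha(uu')\|_\infty \leq 2^{|\alpha|} \|u\|_{s,\infty}\|u'\|_{s,\infty}$. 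Maximizing over $|\alpha| \leq s$ and aggregating the binomial sums and the count of multi-indices of order at most $s$ in dimension $d$ produces the claimed bound with a combinatorial constant $\timesconst=\timesconst(s,d)$.

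For part $(ii)$, the approach is the multi-variate Fa\`a di Bruno formula, which for the composition $\exp\circ u$ and any multi-index $\alpha$ with $1 \leq |\alpha| \leq s$ expresses
\[
D^\alpha(e^u) = e^u \sum_{\pi} c_\pi \prod_{B \in \pi} D^{\alpha_B} u,
\]
where $\pi$ ranges over partitions of the multi-index $\alpha$ into non-empty sub-multi-indices $\{\alpha_B\}_{B \in \pi}$ and $c_\pi$ are combinatorial coefficients depending only on $s$ and $d$. Taking sup norms, $\|e^u\|_\infty$ factors out, and every surviving monomial contains at least one factor $D^{\alpha_B} u$ bounded above by $\|u\|_{s,\infty}$. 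The remaining factors contribute further powers of $\|u\|_{s,\infty}$, and the combinatorial coefficients (finite in number once $s,d$ are fixed) aggregate into $\expconst$.

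The main obstacle is the bookkeeping in part $(ii)$: the Fa\`a di Bruno expansion naturally yields a polynomial in $\|u\|_{s,\infty}$ of total degree up to $s$, whereas the stated bound is linear in $\|u\|_{s,\infty}$. The reconciliation is to notice that for $|\alpha|\geq 1$ every monomial already contains $\|u\|_{s,\infty}$ as a factor at least once, so the residual polynomial can be absorbed into $\expconst=\expconst(s,d)$ provided one either restricts to the regime where $\|u\|_{s,\infty}$ is controlled a priori, or reinterprets $\expconst$ as accommodating the leading-term dependence $(1+\|u\|_{s,\infty})^{s-1}$ supplied by the uniform Sobolev bound of \cref{thm:sinkhorn-potentials-uniformly-bounded} in the intended application. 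With this convention, the lemma follows from a direct, if combinatorially delicate, application of the two classical identities.
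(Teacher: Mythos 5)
Your route is the same as the paper's: part (i) is exactly the Leibniz-rule argument, and for part (ii) the paper uses the chain rule to write $D^{\mathbf{i}}e^{u}=e^{u}\,P_{\mathbf{i}}\big((D^{\mathbf{j}}u)_{\mathbf{j}\le\mathbf{i}}\big)$, where $P_{\mathbf{i}}$ is a polynomial of degree $|\mathbf{i}|$ with no constant term when $|\mathbf{i}|>0$ (this is your Fa\`a di Bruno expansion in compressed form), then factors out $\supnor{e^{u}}$ and takes $\expconst$ to be the sum of the absolute values of the coefficients of $\sum_{0<|\mathbf{i}|\le s}P_{\mathbf{i}}$. The obstacle you flag is real, and the paper's proof glosses over it at exactly the step you point to: since the monomials can have degree up to $s$, replacing each $\supnor{D^{\mathbf{j}}u}$ by $\nor{u}_{s,\infty}$ gives a bound linear in $\nor{u}_{s,\infty}$ only when $\nor{u}_{s,\infty}\le 1$; in general the dependence is polynomial of degree $s$ (take $\X=[0,1]$, $s=2$, $u(x)=\lambda x$: then $\supnor{D^{2}e^{u}}=\lambda^{2}\supnor{e^{u}}$, which exceeds $\supnor{e^{u}}(1+\expconst\lambda)$ for large $\lambda$, so the literal linear statement cannot hold with a constant depending on $(s,d)$ alone). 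Your proposed repairs --- restricting to the regime where $\nor{u}_{s,\infty}$ is bounded a priori via \cref{thm:sinkhorn-potentials-uniformly-bounded}, or letting the constant absorb the extra powers --- are exactly what the intended application in \cref{lem:uniform-bound-products-exponential-sinkhorn-potentials} requires, since there only a bound uniform over $(\alpha,\beta)$ matters; but note that with either fix $\expconst$ is no longer a function of $(s,d)$ only (it inherits dependence on $\varepsilon$, $\X$ and $\cost$ through the a priori bound), so strictly speaking you prove a corrected version of the lemma rather than its literal statement --- and the same is true of the paper's own proof. Apart from this shared caveat, your argument and the paper's coincide.
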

\begin{proof}
\cref{item:timesconst} follows directly from Leibniz formula. To see \cref{item:expconst}, let $\mathbf{i} = (i_1,\dots,i_d)\in\N^d$ be a multi-index with $|\mathbf{i}| = \sum_{\ell=1}^d i_\ell \leq s$ and note that by chain rule the derivatives of $e^{u}$ 
\eqals{
    D^\mathbf{i} ~e^{u} = e^{u}~ P_\mathbf{i}\Big((D^\mathbf{j} u)_{\mathbf{j} \leq \mathbf{i}}\Big),
}
where $P_\mathbf{i}$ is a polynomial of degree $|\mathbf{i}|$ and $\mathbf{j} \leq \mathbf{i}$ is the ordering associated to the cone of non-negative vectors in $\R^d$. Note that $P_0 = 1$, while for $|\mathbf{i}|>0$, the associated polyomial $P_\mathbf{i}$ has a root in zero (i.e. it does not have constant term). Hence 
\eqals{
    \nor{e^{u}}_{s,\infty} & \leq \nor{e^{u}}_\infty \left(1 + |P|\Big((\nor{D^\mathbf{i} u}_\infty)_{|\mathbf{i}|\leq s}\Big)~\right),
}
where we have denoted by $P = \sum_{0<|\mathbf{i}|\leq s} P_\mathbf{i}$ and by $|P|$ the polynomial with coefficients corresponding to the absolute value of the coefficients of $P$. Therefore, since $\nor{D^\mathbf{i} u}_{\infty}\leq \nor{u}_{s,\infty}$ for any $|\mathbf{i}|\leq s$, by taking 
\eqals{
    \expconst = |P|\Big((1)_{|\mathbf{i}|\leq s}\Big),
}
namely the sum of all the coefficients of $|P|$, we obtain the desired result. Indeed note that the coefficients of $P$ do not depend on $u$ but only on the smoothness $s$ and dimension $d$. 
\end{proof}

\begin{lemma}\label{lem:uniform-bound-products-exponential-sinkhorn-potentials}
Let $\X\subset\R^d$ be a closed bounded domain with Lipschitz boundary and let $ x_o\in\X$. Let $\cost\in\cont^{s+1}(\X\times\X)$, 
for some $s\in\N$. Then for any $\alpha,\beta\in\prob(\X)$ and corresponding pair of Sinkhorn potentials $(u,v)\in\cont(\X)^2$ with $u(x_o) = 0$, the functions $\kerfun(x,\cdot)e^{u/\varepsilon}$ and $\kerfun(x,\cdot)e^{v/\varepsilon}$ belong to $W^{s,2}(\X)$ for every $x\in\X$. Moreover, they admit an extension to $\hh = W^{s,2}(\R^d)$ and  there exists a constant $\bar\sinkconst$ independent on $\alpha$ and $\beta$, such that for every $x\in\X$
\begin{equation}\label{eq:uniform-bound-products-exponential-sinkhorn-potentials}
    \big\lVert \kerfun(x,\cdot) e^{u/\varepsilon}\big\rVert_{\hh},~ \big\lVert \kerfun(x,\cdot) e^{v/\varepsilon} \big\rVert_{\hh} \leq \bar\sinkconst
\end{equation}
(with some abuse of notation, we have identified $\kerfun(x,\cdot) e^{u/\varepsilon}$ and $\kerfun(x,\cdot)e^{v/\varepsilon}$ with their extensions to $\R^d$).
\end{lemma}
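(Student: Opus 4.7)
The plan is to combine the uniform $W^{s,\infty}$-bound on the Sinkhorn potentials from \cref{thm:sinkhorn-potentials-uniformly-bounded} with the Leibniz/chain-rule estimates of \cref{lem:s-infty-norm-of-products-and-exponentials}, and then pass from $W^{s,\infty}(\X)$ to $\hh = W^{s,2}(\R^d)$ using the compactness of $\X$ and a standard Sobolev extension. Throughout I will track constants to make sure they depend only on $\varepsilon, s, d, \X$, and $\cost$, but not on $\alpha, \beta$, or $x$.

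First I would bound the exponentiated potentials in $W^{s,\infty}(\X)$. By \cref{thm:sinkhorn-potentials-uniformly-bounded} there exists $\sinkconst = \sinkconst(\varepsilon,s,\X)$ such that $\nor{u}_{s,\infty},\nor{v}_{s,\infty}\leq \sinkconst$ uniformly in $(\alpha,\beta)$, under the normalization $u(x_o)=0$. The analogous uniform $L^{\infty}$-bound $\supnor{e^{u/\varepsilon}},\supnor{e^{v/\varepsilon}}\leq e^{2\diameps}$ follows from \cref{cor:dad-solutions-bounded}. Applying \cref{lem:s-infty-norm-of-products-and-exponentials}\cref{item:expconst} to $u/\varepsilon$ (and similarly to $v/\varepsilon$) therefore yields a constant $M_1 = M_1(\varepsilon,s,d,\X)$ such that $\nor{e^{u/\varepsilon}}_{s,\infty},\nor{e^{v/\varepsilon}}_{s,\infty} \leq M_1$, uniformly in $(\alpha,\beta)$.

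Next I would estimate the kernel slice $\kerfun(x,\cdot) = e^{-\cost(x,\cdot)/\varepsilon}$ uniformly in $x\in\X$. Since $\cost \in \cont^{s+1}(\X\times\X)$ on the compact set $\X\times\X$, all partial derivatives of $\cost$ of order $\leq s$ in the second variable are continuous in $(x,y)$ and hence uniformly bounded by a constant depending only on $\cost,s,\X$. This gives a constant $M_2 = M_2(\cost,s,\X)$ with $\nor{\cost(x,\cdot)/\varepsilon}_{s,\infty}\leq M_2$ for every $x\in\X$. Since also $\supnor{e^{-\cost(x,\cdot)/\varepsilon}}\leq 1$, \cref{lem:s-infty-norm-of-products-and-exponentials}\cref{item:expconst} produces a uniform bound $\nor{\kerfun(x,\cdot)}_{s,\infty}\leq M_3(\varepsilon,\cost,s,d,\X)$, independent of $x$. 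Combining with the previous step through \cref{lem:s-infty-norm-of-products-and-exponentials}\cref{item:timesconst}, the products $\kerfun(x,\cdot) e^{u/\varepsilon}$ and $\kerfun(x,\cdot) e^{v/\varepsilon}$ lie in $W^{s,\infty}(\X)$ with norm bounded by $\timesconst M_3 M_1$, uniformly in $x,\alpha,\beta$.

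Finally I would pass to $\hh = W^{s,2}(\R^d)$. Because $\X$ is bounded, the continuous inclusion $W^{s,\infty}(\X)\hookrightarrow W^{s,2}(\X)$ holds with operator norm controlled by $\lvert \X\rvert^{1/2}$. Since $\X$ has Lipschitz boundary, Stein's extension theorem provides a bounded linear extension operator $E\colon W^{s,2}(\X)\to W^{s,2}(\R^d)$ whose norm $\extensionconst$ depends only on $s,d,\X$. Composing these two bounded linear maps and applying them to the uniformly bounded $W^{s,\infty}(\X)$ functions above yields an extension with $\hh$-norm bounded by $\bar\sinkconst = \extensionconst \lvert \X\rvert^{1/2}\, \timesconst M_3 M_1$, which is the desired uniform constant.

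The main obstacle is the uniform-in-$x$ control of $\nor{\kerfun(x,\cdot)}_{s,\infty}$: one must use that $\cost \in \cont^{s+1}(\X\times\X)$ to ensure that all mixed partials with respect to the second variable are jointly continuous on the compact product, and only then translate this into a single $x$-independent Sobolev bound via the exponential estimate. Once that is in place, everything else is a bookkeeping exercise in combining the product/exponential inequalities with the standard Sobolev embedding and extension on Lipschitz domains.
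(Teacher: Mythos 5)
Your proof is correct and follows essentially the same route as the paper's: the uniform $W^{s,\infty}$ bound on the potentials from \cref{thm:sinkhorn-potentials-uniformly-bounded}, the exponential estimate of \cref{lem:s-infty-norm-of-products-and-exponentials}, the factor $|\X|^{1/2}$ to pass to $W^{s,2}(\X)$, and Stein's extension on the Lipschitz domain. The only cosmetic difference is that the paper writes $\kerfun(x,\cdot)e^{u/\varepsilon}=e^{(u-\cost(x,\cdot))/\varepsilon}$ and applies the exponential estimate once, whereas you bound the two factors separately and invoke the product estimate with $\timesconst$, which gives a slightly looser but equally valid uniform constant.
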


\begin{proof}
In the following we denote by  $\nor{\cdot}_{s,2} = \nor{\cdot}_{s,2,\X}$ the norm of $W^{s,2}(\X)$ and by $\nor{\cdot}_\hh=\nor{\cdot}_{s,2,\R^d}$  the norm of $\hh = W^{s,2}(\R)$. Let $x\in \X$.
Then, since $u - \cost(x,\cdot) \in W^{s,\infty}(\X)$ and $\nor{u}_{s,\infty} \leq \sinkconst$,
it follows from \cref{lem:s-infty-norm-of-products-and-exponentials} that
\begin{align*}
    \big\lVert \kerfun(x,\cdot) e^{u/\varepsilon} \big\rVert_{s,\infty} & = 
    \big\lVert e^{(u - \cost(x,\dot))/\varepsilon} \big\rVert_{s,\infty}\\
    & \leq     \big\lVert e^{(u - \cost(x,\dot))/\varepsilon} \big\rVert_{\infty}
    (1 + \expconst \nor{u - \cost(x,\cdot)}_{s,\infty})\\
    & =     \big\lVert \kerfun(x,\cdot) e^{u/\varepsilon}  \big\rVert_{\infty}
    (1 + \expconst \nor{u - \cost(x,\cdot)}_{s,\infty})\\
    & \leq  \big\lVert e^{u/\varepsilon}\big\rVert_{\infty}
    (1 + \expconst (\sinkconst+ \nor{\cost}_{s,\infty}))\\
    & \leq e^{\diameps}(1 + \expconst (\sinkconst+ \nor{\cost}_{s,\infty})),
\end{align*}
where we used the fact that $D^\mathbf{i} [\cost(x,\cdot)] = (D^\mathbf{i} \cost )(x,\cdot)$.
This implies
\begin{equation*}
    \big\lVert\kerfun(x,\cdot) e^{u/\varepsilon}\big\rVert_{s,2} \leq 
    |\X|^{1/2}e^{\diameps}(1 + \expconst (\sinkconst+ \nor{\cost}_{s,\infty}))
\end{equation*}
where $|\X|$ is the Lebesgue measure of $\X$. 
Now, we can proceed analogously to \cite[Proposition 2]{genevay2018sample}, and use  Stein's Extension Theorem \cite[Theorem 5.24]{adams2003sobolev},\cite[Chapter 6]{stein2016singular}, to guarantee the existence of a {\em total extension operator} \cite[Definition 5.17]{adams2003sobolev}. In particular, there exists a constant $\extensionconst = \extensionconst(s,2,\X)$ such that for any $\varphi\in W^{s,2}(\X)$ there exists $\tilde \varphi \in W^{s,2}(\R^d)$ such that 
\begin{equation}
    \nor{\tilde \varphi}_{\hh} = \nor{\tilde \varphi}_{s,2,\R^d}  \leq \extensionconst\nor{\varphi}_{s,2,\X} = \extensionconst\nor{\varphi}_{s,2}.
\end{equation}
Therefore, we conclude
\begin{equation}
    \big\lVert\kerfun(x,\cdot) e^{u/\varepsilon}\big\rVert_{\hh} \leq \extensionconst |\X|^{1/2}e^{\diameps}(1 + \expconst (\sinkconst+ \nor{\cost}_{s,\infty})) =: \bar\sinkconst.
\end{equation}
The same argument applies to $\kerfun(x,\cdot) e^{v/\varepsilon}$ with the only exception that now, in virtue of \cref{cor:dad-solutions-bounded},
we have $\lVert e^{v/\varepsilon}\rVert_{\infty} \leq e^{2\diameps}$. Note that $\bar\sinkconst$ is a constant depending only on $\X$, $\cost$, $s$ and $d$ but it is independent on the probability distributions $\alpha$ and $\beta$.
\end{proof}

\paragraph{Sobolev spaces and reproducing kernel Hilbert spaces} 
Recall that for $s>d/2$ the space $\hh = W^{s,2}(\R^d)$, is a reproducing kernel Hilbert space (RKHS) \cite[Chapter 10]{wendland2004scattered}. In this setting we denote by $\kersob:\X\times\X\to\R$ the associated reproducing kernel,
which is continuous and bounded and satisfies the reproducing property
\begin{equation}
    (\forall\,x\in\X)(\forall\,f\in\hh)\qquad \scal{f}{\kersob(x,\cdot)}_{\hh} = f(x).
\end{equation}
We can also assume that $\kersob$ is {\em normalized}, namely,  $\nor{\kersob(x,\cdot)}_\hh=1$ for all $x\in\X$ \cite[Chapter 10]{wendland2004scattered}.

\paragraph{Kernel mean embeddings}
For every $\beta\in\prob(\X)$, we denote by $\kersob_\beta\in\hh$ the {\em Kernel Mean Embedding} of $\beta$ in $\hh$ \cite{smola2007hilbert,muandet2017kernel}, that is, the vector
\begin{equation}
    \kersob_\beta = \int \kersob(x,\cdot) ~d\beta(x).
\end{equation}
In other words, the kernel mean embedding of a distribution $\beta$ corresponds to the expectation of $\kersob(x,\cdot)$ with respect to $\beta$. By the linearity of the inner product and the integral, for every $f\in\hh$, the inner product
\begin{equation}
    \scal{f}{\kersob_\beta}_\hh = \int \scal{f}{\kersob(x,\cdot)}~d\beta(x) = \int f(x)~d\beta(x), 
\end{equation}
corresponds to the expectation of $f(x)$ with respect to $\beta$.
The {\em Maximum Mean Discrepancy (MMD)} \cite{song2008learning,sriperumbudur2011universality,muandet2017kernel} between two probability distributions $\beta,\beta^\prime\in\prob(\X)$ is defined as
\begin{equation}
    \mmd(\beta,\beta^\prime) = \nor{\kersob_\beta - \kersob_{\beta^\prime}}_\hh.
\end{equation}
In the case of the Sobolev space $\hh = W^{s,2}(\R^d)$, the MMD  metrizes the weak-$*$ topology of $\prob(\X)$ \cite{sriperumbudur2010hilbert,sriperumbudur2011universality}. 

A well-established approach to approximate a distribution $\beta\in\prob(\X)$ is to independently sample a set of points $x_1,\dots,x_n\in\X$ from $\beta$ and consider the empirical distribution $\beta_n = \frac{1}{n}\sum_{i=1}^n \delta_{x_i}$. The following result shows that $\beta_n$ converges to $\beta$ in MMD with high probability. The original version of this result can be found in \cite{song2008learning}, we report an independent proof for completeness. 

\begin{lemma}\label{lem:mmd-concentration-inequality}
Let $\beta\in\prob(\X)$. Let $x_1,\dots,x_n\in\X$ be indepedently sampled according to $\beta$ and denote by $\beta_n = \frac{1}{n} \sum_{i=1}^n \delta_{x_i}$. Then, for any $\tau\in(0,1]$, we have
\begin{equation}
    \mmd(\beta_n,\beta) \leq \frac{4\log \frac{3}{\tau}}{\sqrt{n}}
\end{equation}
with probability at least $1-\tau$. 
\end{lemma}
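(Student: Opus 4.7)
The plan is to view $\mmd(\beta_n,\beta)$ as the Hilbert-space norm of an empirical average of i.i.d.\ centred random vectors, and then apply a standard vector-valued concentration inequality. Specifically, I would set
\begin{equation*}
Z_i = \kersob(x_i,\cdot) - \kersob_\beta \in \hh, \qquad i=1,\dots,n,
\end{equation*}
so that the $Z_i$ are i.i.d.\ with $\mathbb{E}[Z_i]=0$, and notice that
\begin{equation*}
\kersob_{\beta_n} - \kersob_\beta = \frac{1}{n} \sum_{i=1}^n Z_i,
\qquad\text{hence}\qquad
\mmd(\beta_n,\beta) = \Big\| \tfrac{1}{n}\sum_{i=1}^n Z_i \Big\|_\hh.
\end{equation*}
Since the kernel $\kersob$ is normalised, $\|\kersob(x,\cdot)\|_\hh=1$ for every $x\in\X$, and by Jensen $\|\kersob_\beta\|_\hh\leq 1$, so $\|Z_i\|_\hh\leq 2$ almost surely.

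Next, I would control the expectation and the deviation separately. Using independence and centering of the $Z_i$,
\begin{equation*}
\mathbb{E}\big[\mmd(\beta_n,\beta)^2\big]
= \frac{1}{n^2}\sum_{i=1}^n \mathbb{E}\|Z_i\|_\hh^2 \leq \frac{4}{n},
\end{equation*}
so by Jensen's inequality $\mathbb{E}[\mmd(\beta_n,\beta)] \leq 2/\sqrt{n}$. For the concentration around this mean, the map $(x_1,\dots,x_n)\mapsto \mmd(\beta_n,\beta)$ has bounded differences: replacing one coordinate changes $\beta_n$ by at most $2/n$ in the $\kersob_{\beta_n}$ term, so by reverse/forward triangle inequality the map changes by at most $2/n$. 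McDiarmid's inequality then yields, for every $t>0$,
\begin{equation*}
\mathbb{P}\big(\mmd(\beta_n,\beta) \geq \mathbb{E}[\mmd(\beta_n,\beta)] + t \big) \leq \exp\!\big(-n t^2/2\big).
\end{equation*}
Setting the right-hand side equal to $\tau$ and combining with the mean bound gives, with probability at least $1-\tau$,
\begin{equation*}
\mmd(\beta_n,\beta) \leq \frac{2}{\sqrt{n}} + \sqrt{\frac{2\log(1/\tau)}{n}}.
\end{equation*}

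Finally, to match the stated form of the bound, I would absorb both terms into a single expression of the type $4\log(3/\tau)/\sqrt{n}$ via the elementary inequality $\sqrt{a}\leq \log(3 e^{a/4})$ type manipulations (e.g.\ using $2\leq 2\log 3$ and $\sqrt{2\log(1/\tau)}\leq 2\log(3/\tau)$ for $\tau\in(0,1]$). The only mildly technical point is this last simplification; the core argument itself is a routine application of McDiarmid (or equivalently Pinelis' Hilbert-space Hoeffding bound), and the potentially delicate ingredient—that $\kersob_\beta$ is well defined in $\hh$ and that $\kersob$ is normalised—has already been established in the preceding discussion on kernel mean embeddings.
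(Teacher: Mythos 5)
Your proof is correct, but it takes a different route from the paper. The paper's proof is a one-shot application of a vector-valued concentration inequality: it sets $\zeta_i=\kersob(x_i,\cdot)$, notes $\norm{\zeta_i}_\hh=1$, and invokes Pinelis' inequality for sums of bounded Hilbert-space-valued random variables (Lemma 2 of Smale--Zhou) to get $\norm{\frac1n\sum_i(\zeta_i-\mathbb{E}\zeta_i)}_\hh\leq \frac{2\log(2/\tau)}{n}+\sqrt{\frac{2\log(2/\tau)}{n}}$ with probability $1-\tau$, and then absorbs this into $4\log(3/\tau)/\sqrt{n}$. You instead decompose into a bound on the mean (second moment of the centred sum plus Jensen, giving $\mathbb{E}[\mmd(\beta_n,\beta)]\leq 2/\sqrt{n}$) and a deviation bound via McDiarmid's bounded-differences inequality (each coordinate perturbs $\mmd(\beta_n,\beta)$ by at most $2/n$, giving $\exp(-nt^2/2)$ tails), ending with $2/\sqrt{n}+\sqrt{2\log(1/\tau)/n}$, which indeed fits under $4\log(3/\tau)/\sqrt{n}$ using $2\leq 2\log 3\leq 2\log(3/\tau)$ and $\sqrt{2\log(1/\tau)}\leq 2\log(3/\tau)$ for $\tau\in(0,1]$ — so your closing "absorption" step, though stated loosely, does go through. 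The trade-off: your argument is more elementary (only scalar McDiarmid plus a variance computation, which is essentially the classical MMD concentration proof of Gretton et al.), while the paper's is shorter because Pinelis' inequality handles mean and deviation simultaneously in the Hilbert space; both rely on the same two facts established beforehand, namely that $\kersob$ is normalised and that $\mmd(\beta_n,\beta)$ is the $\hh$-norm of an empirical average of i.i.d.\ unit-norm vectors.
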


\begin{proof}
The proof follows by applying  Pinelis' inequality \cite{yurinskiui1976exponential,pinelis1994optimum,smale2007learning} for random vectors in Hilbert spaces. More precisely, for $i=1,\dots,n$, denote by $\zeta_i = \kersob(x_i,\cdot)\in\hh$ and recall that $\nor{\zeta_i} = \nor{\kersob(x,\cdot)}=1$ for all $x\in\X$. We can therefore apply \cite[Lemma 2]{smale2007learning} with constants $\widetilde{M}=1$ and $\sigma^2 =\sup_i \mathbb{E} \|\zeta_i\|^2 \leq 1$, which guarantees that, for every $\tau\in(0,1]$
\begin{equation}\label{eq:pinelis-general-form}
    \nor{\frac{1}{n}\sum_{i=1}^n \Big[\zeta_i - \mathbb{E}~\zeta_i\Big]}_\hh \leq \frac{2\log\frac{2}{\tau}}{n} + \sqrt{\frac{2\log\frac{2}{\tau}}{n}} \leq \frac{4\log \frac{3}{\tau}}{\sqrt{n}},
\end{equation}
holds with probability at least $1-\tau$. Here, for the second inequality we have used the fact that $\log\frac{2}{\tau}\leq\log\frac{3}{\tau}$ and $\log\frac{3}{\tau}\geq1$ for every $\tau\in(0,1]$. The desired result follows by observing that
\begin{equation}
    \kersob_\beta = \int \kersob(x,\cdot)~d\beta(x) = \mathbb{E}~\zeta_i
\end{equation}
for all $i=1,\dots,n$, and 
\begin{equation}
    \kersob_{\beta} = \frac{1}{m}\sum_{i=1}^m \kersob(x_i,\cdot) = \frac{1}{m}\sum_{i=1}^m \zeta_i.
\end{equation}
Therefore, 
\begin{equation}
    \mmd(\beta_k,\beta) = \nor{\kersob_{\beta_k} - \kersob_\beta}_\hh 
     = \nor{\frac{1}{n}\sum_{i=1}^n \Big[\zeta_i - \mathbb{E}~\zeta_i\Big]}_\hh,
\end{equation}
which combined with \cref{eq:pinelis-general-form} leads to the desired result. 
\end{proof}

\begin{proposition}[Lipschitz continuity of the Sinkhorn Potentials with respect to the MMD]\label{prop:lipschitz-continuity-mmd}
Let $\X\subset\R^d$ be a compact Lipschitz domain and $\dist\in\cont^{s+1}(\X\times\X)$, with $s>d/2$. 
Let $\alpha,\beta,\alpha',\beta'\in\prob(\X)$.
Let $x_o\in\X$ and let $(u,v),(u^\prime,v')\in\cont(\X)^2$ be the two Sinkhorn potentials corresponding to the solution of the regularized OT problem in \cref{eq:ot-dual-problem} for $(\alpha,\beta)$ and $(\alpha',\beta')$ respectively such that $u(x_o) = u^\prime(x_o) = 0$. Then
\begin{equation}
\label{eq:lipschitz-continuity-mmd}
    \supnor{u - u^\prime} \leq 2\varepsilon\bar\sinkconst e^{3\diameps}\left(\mmd(\alpha,\alpha') + \mmd(\beta,\beta')\right),
\end{equation}
with $\bar\sinkconst$ from \cref{lem:uniform-bound-products-exponential-sinkhorn-potentials}.
In other words, the operator $\nabla_1\oteps\colon\prob(\X)^2\to\cont(\X)$, defined in \cref{rem:directional-derivatives-oteps},
 is $2\varepsilon\bar\sinkconst e^{3\diameps}$-Lipschitz continuous with respect to the \textnormal{MMD}.
\end{proposition}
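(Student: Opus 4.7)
The plan is to mimic, essentially verbatim, the proof of \cref{prop:lipschitz-continuity-total-variation2}, replacing only the step in which the difference $\lmap_{\beta'}-\lmap_\beta$ is estimated against the total variation norm by an analogous estimate against MMD. Concretely, setting $f=e^{u/\varepsilon}$, $f'=e^{u'/\varepsilon}$, $g=\tmap_\alpha f$ and $g'=\tmap_{\alpha'} f'$, so that $f,f'$ are fixed points of $\tmap_{\beta\alpha}$ and $\tmap_{\beta'\alpha'}$ respectively, I would first apply the triangle inequality for Hilbert's metric together with \cref{thm:fixed-point-sinkhorn-iteration} to obtain
\begin{equation*}
d_H(f,f') \;\leq\; d_H\bigl(\tmap_{\beta\alpha}(f),\tmap_{\beta'\alpha'}(f)\bigr) + \lambda^2 d_H(f,f'),
\end{equation*}
hence $d_H(f,f')\leq (1-\lambda^2)^{-1} d_H(\tmap_{\beta\alpha}(f),\tmap_{\beta'\alpha'}(f))$. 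A further triangle inequality, combined with the contraction property of $\tmap_{\alpha'}$, yields
\begin{equation*}
d_H\bigl(\tmap_{\beta\alpha}(f),\tmap_{\beta'\alpha'}(f)\bigr) \;\leq\; d_H\bigl(\tmap_\beta(g),\tmap_{\beta'}(g)\bigr) + \lambda\, d_H\bigl(\tmap_\alpha(f),\tmap_{\alpha'}(f)\bigr),
\end{equation*}
exactly as in the TV proof.

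The crucial deviation is the estimation of $d_H(\tmap_\beta(g),\tmap_{\beta'}(g))$ and $d_H(\tmap_\alpha(f),\tmap_{\alpha'}(f))$. As in the TV argument, I would use \cref{eq:relation-supnor-hilbert} to pass to sup-norms of log-ratios and then \cref{lem:20190510a} to bring out a factor $\max\{1/\lmap_\beta g,1/\lmap_{\beta'}g\}$ multiplied by $|[(\lmap_{\beta'}-\lmap_\beta)g](x)|$. Here, instead of bounding the latter via $\lVert g\rVert_\infty\lVert\beta-\beta'\rVert_{TV}$, I invoke the reproducing property of $\hh=W^{s,2}(\R^d)$, which is an RKHS since $s>d/2$. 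For every $\varphi\in\hh$ and $\rho\in\prob(\X)$ we have $\int\varphi\,d\rho=\scal{\varphi}{\kersob_\rho}_\hh$, and by \cref{lem:uniform-bound-products-exponential-sinkhorn-potentials} the function $k(x,\cdot)g$, suitably extended to $\R^d$, lies in $\hh$ with norm at most $\bar\sinkconst$, uniformly in $x$ and in $\alpha,\beta$. Cauchy--Schwarz then yields
\begin{equation*}
\bigl|[(\lmap_{\beta'}-\lmap_\beta)g](x)\bigr|
=\bigl|\scal{k(x,\cdot)g}{\kersob_{\beta'}-\kersob_\beta}_\hh\bigr|
\leq \bar\sinkconst\,\mmd(\beta,\beta'),
\end{equation*}
and an analogous bound involving $k(x,\cdot)f$ gives $|[(\lmap_{\alpha'}-\lmap_\alpha)f](x)|\leq \bar\sinkconst\,\mmd(\alpha,\alpha')$.

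Plugging these MMD-based estimates into the chain of inequalities above, using the bounds $\lVert f\rVert_\infty\lVert g\rVert_\infty\leq e^{2\diameps}$ and $\min_\X g\geq e^{-\diameps}\lVert f\rVert_\infty^{-1}$ from \cref{lem:auxiliary-cone}, and finally the elementary estimate $(1-\lambda^2)^{-1}\leq e^{\diameps}$, I obtain
\begin{equation*}
d_H(f,f') \;\leq\; 2 e^{3\diameps}\,\bar\sinkconst\bigl(\mmd(\alpha,\alpha')+\mmd(\beta,\beta')\bigr),
\end{equation*}
and \cref{eq:relation-supnor-hilbert3} (noting $u(x_o)=u'(x_o)=0$) converts this into the claimed sup-norm bound
\begin{equation*}
\supnor{u-u'}\leq \varepsilon\,d_H(f,f')\leq 2\varepsilon\,\bar\sinkconst\,e^{3\diameps}\bigl(\mmd(\alpha,\alpha')+\mmd(\beta,\beta')\bigr).
\end{equation*}

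The one genuinely new ingredient, and hence the main obstacle, is verifying that $k(x,\cdot)e^{u/\varepsilon}$ (and the analogous object with $v$) belongs to $\hh$ with a norm bound independent of the distributions and of the point $x\in\X$. This is precisely the content of \cref{lem:uniform-bound-products-exponential-sinkhorn-potentials}, which in turn rests on the Sobolev boundedness of Sinkhorn potentials (\cref{thm:sinkhorn-potentials-uniformly-bounded}, from \cite{genevay2018sample}), on the product and chain-rule estimates of \cref{lem:s-infty-norm-of-products-and-exponentials}, and on Stein's extension theorem to move from $W^{s,2}(\X)$ to $W^{s,2}(\R^d)$. Once this uniform boundedness is in hand, the rest of the argument is a direct transcription of the TV proof with $\bar\sinkconst\,\mmd$ in place of $\lVert g\rVert_\infty\,\lVert\cdot\rVert_{TV}$.
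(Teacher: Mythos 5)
Your proposal is correct and follows essentially the same route as the paper: the paper's proof likewise reuses the Hilbert-metric argument of \cref{prop:lipschitz-continuity-total-variation2} verbatim, replacing only the pairing estimate \cref{eq:pairing-for-TV-lipschitz} by the RKHS/Cauchy--Schwarz bound $\lvert[(\lmap_{\beta'}-\lmap_\beta)g](x)\rvert\leq\bar\sinkconst\,\mmd(\beta,\beta')$ via \cref{lem:uniform-bound-products-exponential-sinkhorn-potentials}. Your tracking of the constants (including the $e^{2\diameps}$ factor and $(1-\lambda^2)^{-1}\leq e^{\diameps}$) matches the paper's conclusion exactly.
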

\begin{proof}
Let $f= e^{u/\varepsilon}$ and $g = e^{v/\varepsilon}$. By relying on \cref{lem:uniform-bound-products-exponential-sinkhorn-potentials} we can now refine the analysis in \cref{prop:lipschitz-continuity-total-variation2}. More precisely, we observe that in \cref{eq:pairing-for-TV-lipschitz} we have
\begin{equation*}
\begin{aligned}{}
    [(\lmap_{\beta^\prime}-\lmap_\beta) g](x)  & = \int \kerfun(x,z) g(z)~d(\beta-\beta^\prime)(z) \\
    & = \int \scal{\kerfun(x,\cdot)g}{\kersob(z,\cdot)}_\hh~d(\beta-\beta')(z) \\
    & = \scal{\kerfun(x,\cdot) g}{\kersob_\beta-\kersob_{\beta'}}_{\hh}\\
    & \leq \nor{\kerfun(x,\cdot) g}_\hh~ \nor{\kersob_\beta-\kersob_{\beta'}}_{\hh} \\
    & \leq \bar\sinkconst~ \mmd(\beta,\beta'),
\end{aligned}
\end{equation*}
where in the first equality, with some abuse of notation, we have implicitly considered the extension of $\kerfun(x,\cdot) g$ to $\hh = W^{s,2}(\R^d)$ as discussed in \cref{lem:uniform-bound-products-exponential-sinkhorn-potentials}. The rest of the analysis in \cref{prop:lipschitz-continuity-total-variation2} remains invaried, eventually leading to \cref{eq:lipschitz-continuity-mmd}.
\end{proof}

It is now clear that \cref{thm:sample-complexity-sinkhorn-gradients} in the paper is just a consequence of
\cref{lem:mmd-concentration-inequality} and
 \cref{prop:lipschitz-continuity-mmd}.
 We give the statement of the theorem for reader's convenience.
 
\TLSampleComplexitySinkhornPotentials*


We finally provide the proof of \cref{thm:sinkhorn-barycenters-infinite-case} in the paper.
\TLSInkhornBarycenterInfiniteDim*
\begin{proof}
Let $\widehat\bary(\alpha) = \sum_{j=1}^m \omega_j \sink(\alpha,\hat\beta_j)$. Then,
it follows from the definition of $\bary$ and \cref{thm:sample-complexity-sinkhorn-gradients}
that, for every $k \in \N$, and with probability larger than $1-\tau$, we have
\begin{align*}
    \lVert \nabla \widehat\bary(\alpha_k) - \nabla \bary(\alpha_k) \rVert_{\infty}
    &\leq \sum_{j=1}^m \omega_j 
    \lVert \nabla 
    [\sink(\cdot, \hat{\beta}_j)](\alpha_k) -
    \sink(\cdot, \beta_j)](\alpha_k) 
    \rVert_{\infty}\\
    & = \sum_{j=1}^m \omega_j 
    \lVert \nabla_1 \oteps(\alpha_k, \hat\beta_j)- \nabla_1\oteps(\alpha_k,\beta_j)
    \rVert_{\infty}\\
    &\leq 
    \frac{8\varepsilon~\overline\sinkconst e^{3\diameps}\log\frac{3}{\tau}}{\sqrt{n}}\\
    &= \frac{\Delta_{1}}{4},
\end{align*}
where
\begin{equation*}
    \Delta_1 := \frac{32\varepsilon~\overline\sinkconst e^{3\diameps}\log\frac{3}{\tau}}{\sqrt{n}}.
\end{equation*}
Now, let $\gamma_k = 2/(k+2)$. Since
 \cref{alg:practical-FW} is applied to 
$\hat{\beta}_1,\dots \hat{\beta}_m$, we have
\begin{equation*}
    \delta_{x_{k+1}} \in \argmin_{\prm(\X)} \langle \nabla \widehat\bary(\alpha_k),\cdot \rangle\quad\text{and}\quad
    \alpha_{k+1} = (1- \gamma_k) \alpha_k + \gamma_k \delta_{x_{k+1}}.
\end{equation*}
Therefore, it follows from \cref{thm:FWA}, \cref{p:inexactgrad},
and \cref{thm:lip-continuity-total-variation-informal} that,
with probability larger than $1-\tau$, we have
\begin{equation*}
    \bary(\alpha_k) - \min_{\prm(\X)} \bary \leq 6 \varepsilon \bar{\sinkconst} e^{3\diameps}\Diam(\prm(\X))^2 \gamma_k  + \Delta_1 \Diam(\prm(\X)).
\end{equation*}
The statement follows by noting that $\Diam(\prm(\X))=2$.
\end{proof}

\newpage

\section{Additional experiments}\label{sec:additional_exp}
\paragraph{Sampling of continuous measures:  mixture of Gaussians}
We perform the barycenter of 5 mixtures of two Gaussians $\mu_j$, centered at $(j/2, 1/2)$ and $(j/2, 3/2)$ for $j-0,\dots,4$ respectively. Samples are provided in \cref{fig:input_mixture_gauss}. We use different relative weights pairs in the mixture of Gaussians, namely $(1/10,9/10), (1/4,3/4), (1/2,1/2)$. At each iteration, a sample of $n=500$ points is drawn from $\mu_j$, $j=0\dots,4$. Results are reported in \cref{fig:bary_mixture_gauss}.
\begin{figure}[ht]
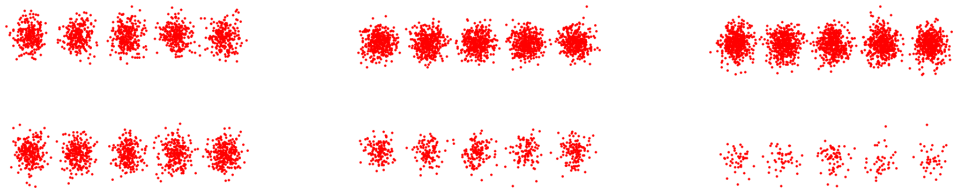


\centering
  \includegraphics[scale=0.28]{images/input_1su2_1su2.png}
  \includegraphics[scale=0.28]{images/input_1su4_3su4.png}
  \includegraphics[scale=0.28]{images/input_1su10_9su10.png}
\caption{Samples of input measures}
\label{fig:input_mixture_gauss}
\end{figure}

\begin{figure}[ht]
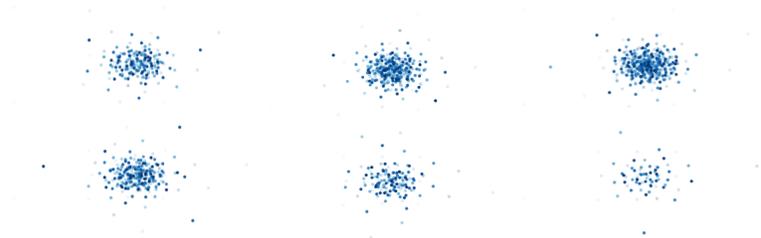


\centering
  \includegraphics[scale=0.25]{images/scatter1su2_1su2.png}
  \includegraphics[scale=0.25]{images/scatter1su4_3su4.png}  
  \includegraphics[scale=0.25]{images/scatter1su10_9su10.png}
\caption{Barycenters of Mixture of Gaussians}
\label{fig:bary_mixture_gauss}
\end{figure}

\begin{figure}[b]
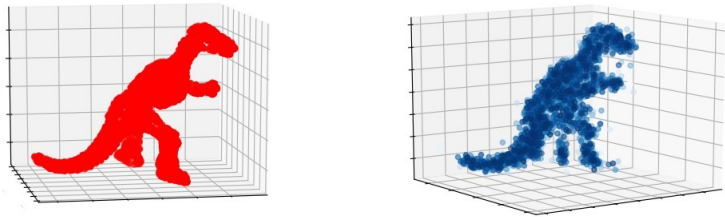

\centering
  \includegraphics[scale=0.3]{images/bary_input_crop.jpg}\qquad\qquad
  \includegraphics[scale=0.3]{images/bary_dino_grid_noaxis.jpg}
  \caption{3D dinosaur mesh (left), barycenter of 3D meshes (right)\label{fig:dinosaur}}
  \end{figure}

\paragraph{Propagation} We extend the description on the experiment about propagation in \cref{sec:experiments}.  Edges $\mathcal{E}$ are selected as follows: we  created a matrix $D$ such that $D_{ij}$ contains the distance between station at vertex $i$ and station at vertex $j$, computed using the geographical coordinates of the stations. Each node $v$ in  $\mathcal{V}$, is connected to those nodes $u\in\mathcal{V}$ such that $D_{vu} \leq 3$. If the number of nodes $u$ that meet this condition is \textit{less} than $5$, we connect $v$  with its $5$ nearest nodes. If the number of nodes $u$ that meet this condition is \textit{more} than $10$, we connect $v$  with its $10$ nearest nodes.  Each edge $e_{uv}$ is weighted with $\omega_{uv}:=D_{uv}$. Since intuitively we may expect that nearer nodes should have more influence in the construction of the histograms of unknown nodes, in the propagation functional we weight $\sink(\rho_v,\rho_u)$ with  use $\exp(-\omega_{uv}/\sigma)$ or $1/\omega_{vu}$ suitably normalized.

\paragraph{Large scale discrete measures: meshes} We perform the barycenter of two discrete measures with support in $\R^3$. Meshes of the dinosaur are taken from \cite{solomon2015convolutional} 
and rescaled by a 0.5 factor. The internal problem in Frank-Wolfe algorithm is solved using L-BFGS-B SciPy optimizer. Formula of the Jacobian is passed to the method. The barycenter is displayed in \cref{fig:dinosaur} together with an example of the input.



\end{document}